\begin{document}

\title{Causal Structure Learning by Using Intersection of Markov Blankets}

\author{\name Yiran Dong \email 12235035@zju.edu.cn \\
       \addr School of Mathematical Sciences\\
	   Zhejiang University\\
	   Hangzhou 310027, China.
       \AND
       \name Chuanhou Gao \email gaochou@zju.edu.cn \\
       \addr School of Mathematical Sciences \\
	  Zhejiang University\\
       Hangzhou 310027, China.}

\editor{My editor}

\maketitle

\begin{abstract}
In this paper, we introduce a novel causal structure learning algorithm called Endogenous and Exogenous Markov Blankets Intersection (EEMBI), which combines the properties of Bayesian networks and Structural Causal Models (SCM). Exogenous variables are special variables that are applied in SCM. We find that exogenous variables have some special characteristics and these characteristics are still useful under the property of the Bayesian network. EEMBI intersects the Markov blankets of exogenous variables and Markov blankets of endogenous variables, i.e. the original variables, to remove the irrelevant connections and find the true causal structure theoretically. Furthermore, we propose an extended version of EEMBI, namely EEMBI-PC, which integrates the last step of the PC algorithm into EEMBI. This modification enhances the algorithm's performance by leveraging the strengths of both approaches. Plenty of experiments are provided to prove that EEMBI and EEMBI-PC have state-of-the-art performance on both discrete and continuous datasets.
\end{abstract}
\begin{keywords}
Structure learning, Bayesian network, Structure causal model, Exogenous variables, Markov blanket
\end{keywords}

\section{Introduction}
Causal structure learning, or causal discovery (\cite{glymour2016causal}), aims to find the causal relation of features in datasets and generate a graph based on causal relations. Knowing the causal relation can increase the interpretability of data and contribute to the feature selection (\cite{li2017feature}) or feature intersection process (\cite{autocross}). More and more graphical models are proposed in all kinds of areas. In picture generating, Variational AutoEncoder (VAE) (\cite{kingma2013auto}; \cite{kingma2019introduction}) and diffusion model (\cite{sohl2015deep}; \cite{ho2020denoising}) use probability graphical model (\cite{koller2009probabilistic}) to approximate the joint distribution of data and they can generate new pictures from the joint distribution. In natural language processing, Latent Dirichlet Allocation (LDA) (\cite{blei2003latent}) and stochastic variational inference (\cite{hoffman2013stochastic}) are topic models based on the Bayesian network to gather the topic information from massive document collections. Furthermore, Graph Neural Network (GNN) (\cite{scarselli2008graph}; \cite{zhou2020graph}) uses the graph structure of the dataset as prior knowledge to construct different kinds of neural networks. With the development of graphical model, causal structure learning becomes an important question since a good graph structure can dramatically improve the generative and predictive ability of graphical models.

There are three basic types of causal structure learning algorithms at the beginning (\cite{scutari2018learns}). Constraint-based methods, like the PC algorithm (\cite{spirtes1999algorithm}; \cite{colombo2014order}), build the graph structure based on conditional independence in the dataset. Score-based methods, like the Greedy Equivalence Search (GES) (\cite{chickering2002optimal}), aim to maximize the corresponding score to find the optimal graph structure. And hybrid methods mix the property of both methods. Max-Min Hill Climbing (MMHC) (\cite{tsamardinos2006max}) is a typical hybrid causal discovery algorithm, it uses the constraint-based method to find the skeleton of the graph and uses GES to orient every edge. Recently, more and more different types of causal learning methods are proposed, like constraint functional causal models, permutation-based methods, etc. 

Linear Non-Gaussian Acyclic Model (LiNGAM) (\cite{shimizu2006linear}) is one of the constraint functional causal models. It uses Independent Component Analysis (ICA) (\cite{hyvarinen2000independent}) to identify the exogenous variables among the original features. However, in SCM (\cite{bollen1989structural}), observed variables are called endogenous variables. Exogenous variables, as latent variables, have no parent and contain all the randomicity of endogenous variables. Therefore, it is inappropriate for LiNGAM and its variants to find exogenous variables from endogenous variables. Moreover, LiNGAM and its variants can only find part of the exogenous variables. If we can find all the exogenous variables for every endogenous variable, learning the connections in endogenous variables will be much easier and more accurate by using the properties of exogenous variables.

Inspired by LiNGAM and SCM, we propose EEMBI and EEMBI-PC algorithms which are the mixtures of constraint-based and constraint functional causal models. EEMBI wants to find exogenous variables and uses them to remove the redundant edges. Different from LiNGAM and its variants, EEMBI uses ICA to generate all the exogenous variables directly under the causal sufficiency assumption. In the Markov blanket of node $T$, there is not only children and parents of $T$ but also some extra spouses nodes of $T$. It is easy to identify these spouse nodes by using the exogenous variable of $T$. Therefore, EEMBI has four phases
\begin{itemize}
    \item [(1)] Find the Markov blanket for every endogenous variable;
    \item [(2)] Generate all exogenous variables and match them with endogenous variables;
    \item [(3)] Find the parents for every endogenous variable using the Markov blanket of exogenous variables and build a Bayesian network;
    \item [(4)] Turn the Bayesian network to a Completed Partially Directed Acyclic Graph (CPDAG).
\end{itemize}
If we turn the Bayesian network in (3) as the skeleton the of graph and use the PC algorithm to orient the edges, we have EEMBI-PC algorithms.

The rest of the paper is organized as follows: Section 2 gives some background knowledge including properties of the Bayesian network, SCM, and ICA. In Section 3, we give our improved IAMB algorithms to find Markov blankets for endogenous variables. In Section 4, we give the main body of EEMBI and EEMBI-PC, i.e. algorithms that generate corresponding exogenous variables and find the parent nodes for endogenous variables. We also provide plenty of theorems to guarantee their efficiency. In Section 5, we compare EEMBI and EEMBI-PC with a number of baselines on discrete and continuous datasets. We conclude the paper in Section 6. 

\textbf{Notation:} We use italicized lowercase letters and lowercase Greek, like $x, y, \alpha, \beta$ to represent single nodes or scalars. Column vectors are set as bold lowercase letters. Decorative uppercase letters represent sets, such as $\mathcal X, \mathcal Y$. Italicized uppercase letters, like $A, B, E$, can either be sets or scalars, depending on the context. Matrices are denoted by bold uppercase letters.

\section{Preliminaries}
In this section, we give some basic information about Bayesian networks, causal structure learning, and structural causal models. Then we simply introduce the principle of Independent Component Analysis (ICA). 

\subsection{Foundation of Bayesian Network}
A graph structure, represented as $\mathcal G=(\mathcal X, \mathcal E)$ (\cite{koller2009probabilistic}), consists of the nodes $\mathcal X$ and the edges $\mathcal E$ between nodes.
If a graph structure $\mathcal G$ whose nodes are also the features of the dataset $\mathbf X$, and every edge is directed i.e. it has one source and one target, we call this graph a Bayesian network. For any edges $x \to y \in \mathcal E$, we call $x$ the parent of $y$, and $y$ is the child of $x$. We denote the parents of $x$ as $Pa_x$ and children of $x$ as $Ch_x$. If $x\to z\gets y\in\mathcal E$, then we call $y$ a spouse of $x$. The union of $Ch_x$, $Pa_x$, and all spouses of $x$ is defined as Markov blanket of $x$, denoted as $MB_x$.
Now we give some basic definitions about the Bayesian network.

\begin{definition}
For a subset of nodes $x_0, x_1, ..., x_n$ in the Bayesian network, if there are directed edges between these nodes such that $x_0\to x_1\to x_2\to ... \to x_n$, we say that $x_0, x_1, ..., x_n$ form a path. Moreover, If there is a path such that $x_0\to x_1\to ... \to x_n$ where $x_n=x_0$, we call this path a cycle.
\end{definition}

If there is no cycle in the Bayesian network $\mathcal G$, we define $\mathcal G$ as a Directed Acyclic Graph (DAG). In this and the latter section, we mainly discuss the Bayesian network in condition of DAG. Normally, we use DAG $\mathcal G$ to represent the joint distribution $P$ of the dataset $\mathbf X$, whose nodes $\mathcal X$ are random variables in $P$. According to the edges in $\mathcal E$, we can decompose the joint distribution by using a conditional probability distribution, 
\begin{align}
\label{decomposition}
P=\prod_{x\in \mathcal X} P(x\ |\ Pa_x)
\end{align}

DAG is composed of three basic structures: (1) chain: $x_i\to x_j \to x_k$; (2) fork: $x_i\gets x_j \to x_k$; (3) V-structure: $x_i\to x_j \gets x_k$. Specifically, $x_j$ in V-structure is called a collider.

Now, consider a chain $x_i\to x_j \to x_k$ as a whole DAG, we can decompose the chain according to equation (\ref{decomposition}), 
\begin{align*}
P(x_i, x_j ,x_k)&=P(x_k\ |\ x_j)P(x_j\ |\ x_i)P(x_i)\\
P(x_i, x_j, x_k)/P(x_j)&=P(x_k\ |\ x_j)P(x_j, x_i)/P(x_j)\\\
P(x_i, x_k\ |\ x_j)&=P(x_k\ |\ x_j)P(x_i\ |\ x_j)
\end{align*}
Therefore, we get $x_i$ and $x_k$ are conditional independent given $x_j$, denoted as $x_i\perp\!\!\!\perp x_k\ |\ x_j$. Following the similar procedure, we have $x_i\perp\!\!\!\perp x_k\ |\ x_j$ in fork and $x_i\perp\!\!\!\perp x_k\ |\ \emptyset$ in V-structure where $\emptyset$ is empty set.

Each of these basic structures contains a single conditional independency, large DAG may contain numerous conditional independencies. Thus the Bayesian network is one of the structural representations of joint distribution $P$. 
By learning the true structure of the Bayesian network in $\mathbf X$, we can accurately capture the true conditional independency relationships between random variables, thus approaching the true joint distribution $P$ more closely. This is one of the motivations that we need to learn the structure of the Bayesian network from the dataset.
To establish a direct connection between conditional independency and graphical structure, we introduce d-separation.
\begin{definition}
We say $x_0, x_1, ..., x_n\in \mathcal X$ form a trail if $x_0 \rightleftharpoons x_1\rightleftharpoons...\rightleftharpoons x_n$ where ``$\rightleftharpoons$'' stands for ``$\to$'' or ``$\gets$''. Moreover, for a subset $Z\subset \mathcal X$, if the trail satisfies (1) For any V-structure $x_i\to x_{i+1} \gets x_{i+2}$ in the trail, the collider $x_{i+1} \in Z$; (2) There is no other node in the trail belongs to $Z$. Then we call that the trail is active given $Z$.
\end{definition}

\begin{definition}
Let $A$, $B$, and $C$ be three disjoint subset nodes in DAG $\mathcal G$. If for any node $x\in A$ and any node $y\in B$, there is no active trail given $C$, we call $A$ and $B$ are d-separated given $C$, denoted as $A \overset{d}{\perp} B\ |\ C$.
\end{definition}

Observing the d-separation in three basic structures, we can find that basic structures have exactly the same d-separated relationship and conditional independencies. Expanding from the basic structures to the larger DAG, we have the following theorem.
\begin{theorem}
\label{d-sep}
Let $A$, $B$, and $C$ be three disjoint subset nodes in DAG $\mathcal G$. $A$ and $B$ are d-separated given $C$ if and only if $A$ and $B$ are independent given $C$. i.e. $A\perp\!\!\!\perp B\ |\ C \Longleftrightarrow A\overset{d}{\perp} B\ |\ C$.
\end{theorem}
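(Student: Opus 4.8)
The plan is to prove the two implications by rather different means. The direction $A\overset{d}{\perp}B\mid C\Rightarrow A\perp\!\!\!\perp B\mid C$ is a statement about the \emph{specific} distribution $P$ that factorizes over $\mathcal G$ as in (\ref{decomposition}), and I would obtain it by reducing to the well-understood undirected case. The converse is a completeness statement: whenever $A$ and $B$ are \emph{not} d-separated given $C$ one can realize the dependence $A\not\perp\!\!\!\perp B\mid C$ by a distribution factorizing over $\mathcal G$ (and it holds for the given $P$ under the usual faithfulness assumption), which I would prove by an explicit construction.

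For the forward direction I would proceed in three moves. First, restrict to the subgraph $\mathcal G'$ of $\mathcal G$ induced on the ancestors of $A\cup B\cup C$ together with these sets themselves; marginalizing $P$ over the remaining nodes (each summed out starting from the sinks) preserves the product form (\ref{decomposition}) on $\mathcal G'$, and one checks that $A\overset{d}{\perp}B\mid C$ in $\mathcal G$ iff the same holds in $\mathcal G'$, because any trail leaving the ancestral set must do so through a collider none of whose descendants lie in $C$, hence a blocked trail. Second, \emph{moralize} $\mathcal G'$: join by an undirected edge every pair of nodes with a common child and then erase all arrowheads, obtaining an undirected graph $\mathcal M$. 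The combinatorial heart of the argument is the lemma that $A$ and $B$ are d-separated given $C$ in $\mathcal G'$ exactly when $C$ separates $A$ from $B$ in $\mathcal M$ in the ordinary graph sense — the moral edges are precisely what encodes ``an active V-structure needs its collider, or a descendant of it, in $C$''. Third, since each family $\{x\}\cup Pa_x$ becomes a clique of $\mathcal M$, the factorization (\ref{decomposition}) exhibits $P$ as a product of nonnegative factors each supported on a clique of $\mathcal M$; for any such distribution, separation of $A$ from $B$ by $C$ forces $A\perp\!\!\!\perp B\mid C$, by splitting the factors into those on the $A$-side and those on the $B$-side of the separator and summing. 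The three base computations already carried out for chain, fork and V-structure are the minimal instances of exactly this pattern.

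For the converse I would argue the contrapositive by building a witness. Suppose a trail $\tau$ from some $a\in A$ to some $b\in B$ is active given $C$. Take every variable in $\mathcal X$ to be binary. Along $\tau$ choose the conditional tables $P(x\mid Pa_x)$ so that dependence is transmitted: almost-deterministic copying across each non-collider link of $\tau$, and an exclusive-or dependence at each collider of $\tau$ (which, being active, is itself in $C$ or has a descendant in $C$, and we make that descendant copy the collider). Every node off $\tau$ is set to an independent fair coin conditionally on its parents. This $P$ factorizes over $\mathcal G$ by construction, and a direct propagation of probabilities along $\tau$ shows $P(a,b\mid C=c)\neq P(a\mid C=c)\,P(b\mid C=c)$ for a suitable value $c$ of $C$, so $A\not\perp\!\!\!\perp B\mid C$.

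I expect the main obstacle to be the moral-graph equivalence lemma in the forward direction, together with the parallel bookkeeping in the construction: one must verify that passing to the ancestral subgraph really discards every trail that is active only because of a collider with no descendant in $C$, and, conversely, that moralization never creates a spurious path crossing the separator when the shared child and its descendants avoid $C$. Colliders whose \emph{descendant} rather than the collider itself lies in $C$ are the recurring source of case analysis, and both the lemma and the witness distribution must treat them explicitly.
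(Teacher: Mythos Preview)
The paper does not actually prove this theorem: immediately after the statement it simply refers the reader to \cite{pearl1988probabilistic}. There is therefore no in-paper argument to compare against. Your proposal is essentially the standard textbook proof one finds in Pearl or in Lauritzen et al.: soundness via restriction to the ancestral set followed by moralization and the clique-factorization argument, and completeness by constructing a faithful binary witness distribution along an active trail. That is correct and is precisely the route the cited reference takes, so you are aligned with what the paper is deferring to.

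One remark worth keeping explicit: as you note parenthetically, the converse $A\perp\!\!\!\perp B\mid C\Rightarrow A\overset{d}{\perp}B\mid C$ does \emph{not} hold for an arbitrary $P$ factorizing over $\mathcal G$, only for faithful ones (or in the ``for almost all $P$'' / ``there exists a $P$'' sense your construction establishes). The paper's statement glosses over this, and your proposal handles it more carefully than the paper does.
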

The detailed proof of \textbf{Theorem \ref{d-sep}} can be found in \cite{pearl1988probabilistic}. \textbf{Theorem \ref{d-sep}} gives us a theoretical basis that we can construct the graph by using conditional independencies. Therefore, the key to learn the structure of DAG is to find all the conditional independencies in random variables of the dataset. However, the chain and fork structures have the same conditional independency with different structures. That is to say, different DAGs may contain exactly the same conditional independencies.

\begin{definition}
\label{I-equivalent}
Let $\mathcal G$ and $\mathcal H$ be two DAGs, we denote the set of all conditional independencies in $\mathcal G$ as $\mathcal I(\mathcal G)$ and the set of all conditional independencies in distribution $P$ as $\mathcal I(P)$. If $\mathcal I(\mathcal G)=\mathcal I(\mathcal H)$, we say $\mathcal G$ and $\mathcal H$ are I-equivalent.
\end{definition}

The aim of structure learning is not only to find one DAG that $\mathcal I(\mathcal G)=\mathcal I(P)$, but to find all the DAGs which are I-equivalent to $\mathcal G$ (\cite{koller2009probabilistic}).

\begin{theorem}
\label{equivalent}
Let $\mathcal G$ and $\mathcal H$ be two DAGs, we define the skeleton of $\mathcal G$ to be a graph structure that replaces the directed edges in $\mathcal G$ as undirected edges. $\mathcal G$ and $\mathcal H$ are I-equivalent if and only if $\mathcal G$ and $\mathcal H$ have the same skeleton and same V-structures.
\end{theorem}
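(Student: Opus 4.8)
The plan is to use Theorem~\ref{d-sep} to recast the statement entirely in terms of d-separation: by that theorem the conditional independencies a DAG encodes are exactly its d-separation relations, so $\mathcal I(\mathcal G)=\mathcal I(\mathcal H)$ is equivalent to $\mathcal G$ and $\mathcal H$ inducing the same d-separations. One reading is needed up front: ``V-structure'' must be understood here as an \emph{unshielded} collider, i.e. a triple $x\to z\gets y$ with $x$ and $y$ non-adjacent, since on a triangle every acyclic orientation is I-equivalent while some contain shielded colliders. I would then treat the two implications separately, the forward (necessity) one being routine and the converse (sufficiency) one being the substantive part.

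\textbf{Necessity.} Assume $\mathcal G$ and $\mathcal H$ induce the same d-separations. Two nodes $x,y$ are adjacent in a DAG if and only if no conditioning set d-separates them: an edge $x-y$ is by itself an active trail given every $C$, while if $x,y$ are non-adjacent --- choosing labels so that $x$ is a non-descendant of $y$, possible by acyclicity --- the parent set $Pa_y$ blocks every trail between $x$ and $y$ (the local Markov property). Hence adjacency is determined by the d-separation relation, so $\mathcal G$ and $\mathcal H$ share a skeleton. Now let $x\to z\gets y$ be an unshielded collider of $\mathcal G$ and suppose $\mathcal H$ does not contain it. Since the skeletons agree, $x-z$ and $z-y$ are edges of $\mathcal H$ and $x-y$ is not, so $z$ is a non-collider on the trail $x-z-y$ in $\mathcal H$. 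Pick $W$ with $z\notin W$ that d-separates $x$ and $y$ in $\mathcal G$ --- for instance $W=Pa_y$ when $x$ is a non-descendant of $y$, noting $z\notin Pa_y$ by acyclicity. Then $x$ and $y$ are d-separated given $W$ in $\mathcal H$ as well, yet the trail $x-z-y$ is active given $W$ in $\mathcal H$ (its only intermediate node $z$ is a non-collider outside $W$) --- a contradiction. So $\mathcal H$ also has the V-structure $x\to z\gets y$, and by symmetry the two graphs have the same V-structures.

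\textbf{Sufficiency.} Assume $\mathcal G$ and $\mathcal H$ have the same skeleton $S$ and the same unshielded colliders; I must show they induce the same d-separations, and by symmetry it is enough to convert an active trail of $\mathcal G$ between $A$ and $B$ given $C$ into an active trail of $\mathcal H$ with the same endpoints and conditioning set. Take such a trail $U=(v_0,\dots,v_n)$ of minimal length. Comparing edge orientations along $U$ in the two graphs, if every internal node $v_i$ has the same collider/non-collider status in $\mathcal H$ as in $\mathcal G$, then $U$ is itself active in $\mathcal H$ and we are done. Otherwise pick an internal node $v_i$ where the status differs. Its trail-neighbours $v_{i-1}$ and $v_{i+1}$ must be adjacent in $S$: were they not, then whichever of $\mathcal G,\mathcal H$ saw $v_i$ as a collider would contain an unshielded collider the other lacks, against the hypothesis. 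Using the edge $v_{i-1}-v_{i+1}$ one shortens $U$ to a trail $U'$ and checks that $U'$ is still active given $C$ in $\mathcal G$ --- in particular that $v_{i-1}$ and $v_{i+1}$ retain an admissible status on $U'$, and, when $v_i$ is an open collider of $\mathcal G$, that whichever neighbour becomes a collider on $U'$ is still an ancestor of $C$ (it is a parent of $v_i$, which is) --- contradicting the minimality of $U$.

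I expect this last surgery-and-recheck step to be the real obstacle. A naive shortcut can reverse, relative to $U$, the orientation of an edge at $v_{i-1}$ or $v_{i+1}$ and so turn an open non-collider into a blocked node, so the argument may need $U$ chosen minimal in a stronger sense (for instance admitting no active proper sub-trail), a separate rerouting through $C$ when an open collider of $\mathcal G$ is merely an ancestor of, rather than a member of, $C$, or an appeal to the classical fact that two DAGs with the same skeleton and unshielded colliders are linked by a chain of covered-edge reversals, each of which plainly leaves every d-separation intact. The skeleton and V-structure parts of the necessity direction, in contrast, fall out directly once Theorem~\ref{d-sep} is in hand.
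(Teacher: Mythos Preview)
The paper does not actually prove Theorem~\ref{equivalent}: it is stated in the preliminaries as a known background fact, in the same spirit as Theorem~\ref{d-sep} (for which the paper defers to \cite{pearl1988probabilistic}), and the exposition immediately moves on to CPDAGs without supplying an argument. So there is no ``paper's own proof'' to compare against here; the result is the classical Verma--Pearl characterisation, and a full proof can be found in \cite{koller2009probabilistic}, which the paper cites.

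Your proposal is essentially the standard textbook route to that classical result, and the pieces are in good shape. The observation that ``V-structure'' must be read as \emph{unshielded} collider is exactly right and worth flagging, since the paper's informal definition does not say so. Your necessity argument is complete: the adjacency-iff-never-d-separated characterisation and the choice $W=Pa_y$ with $z\notin Pa_y$ do the job cleanly. For sufficiency you have identified both the correct strategy (minimal active trail plus shortcut across a shielded triangle) and the genuine sticking point (the shortcut can flip the collider status at $v_{i-1}$ or $v_{i+1}$). The fixes you list are the ones that actually work in the literature: strengthening minimality to ``no active proper sub-trail,'' rerouting through $C$ when an open collider is only an ancestor of $C$, or invoking the covered-edge-reversal characterisation. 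Any of these would close the argument; since the paper treats the theorem as quotable background, there is nothing further to reconcile.
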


Having the method to find all the I-equivalent DAGs, we still need some methods to represent the I-equivalent class.
\begin{definition}
A Partially Directed Acyclic Graph (PDAG) is an acyclic graph which contains both directed edges and undirected edges. We say a PDAG $\mathcal H$ is Completed Partially Directed Acyclic Graph (CPDAG) of $\mathcal G$, if $\mathcal H$ satisfies\\
(1) $\mathcal H$ and $\mathcal G$ have the same skeletons; \\
(2) $\mathcal H$ contains the directed edge $x_i\to x_k$ if and only if all the I-equivalent graphs of $\mathcal G$ contain this edge.
\end{definition}

\begin{figure*}[htbp]
\centering

\subfigure[]
{
    \begin{minipage}[b]{.3\linewidth}
        \centering
        \includegraphics[scale=0.5]{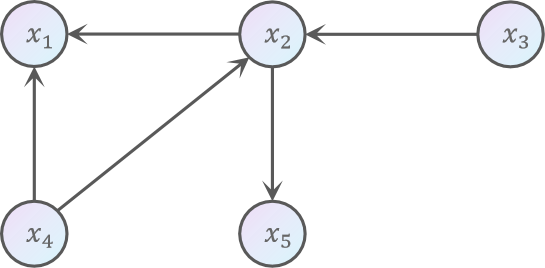}
    \end{minipage}
}
\subfigure[]
{
    \begin{minipage}[b]{.3\linewidth}
        \centering
        \includegraphics[scale=0.5]{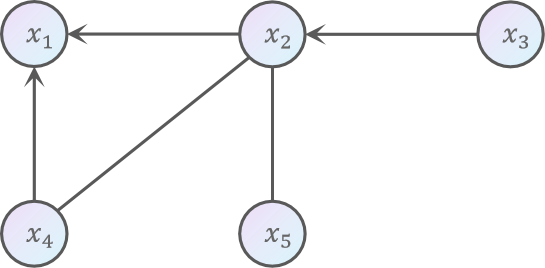}
    \end{minipage}
}
\subfigure[]
{
 	\begin{minipage}[b]{.3\linewidth}
        \centering
        \includegraphics[scale=0.5]{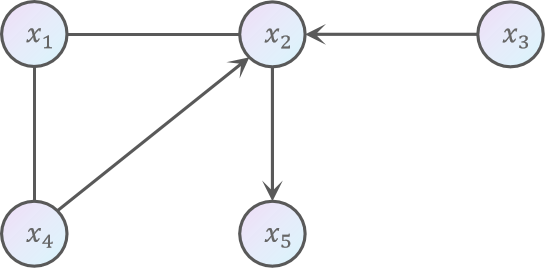}
    \end{minipage}
}
\caption{(a), (b) and (c) are examples of Bayesian networks, PDAG, and CPDAG. (a) is also a DAG since it has no cycle. (b) has both directed and undirected edges, but it is not CPDAG since it changes the V-structure of (a). (c) is the CPDAG of (a), it keeps the V-structure $x_4\to x_2\gets x_3$. (c) also keeps $x_2\to x_5$, since if we reverse the direction of this edge, it will form a new V-structure and the new graphs will not be I-equivalent to (a) anymore.}\label{DAG-PDAG-CPDAG}
\end{figure*}

The differences between DAG, PDAG and CPDAG are shown in Figure \ref{DAG-PDAG-CPDAG}.
Almost all the structure learning algorithms return a CPDAG, and many algorithms learn the structure based on \textbf{Theorem \ref{equivalent}}, i.e. they learn the skeleton of CPDAG, and then based on this skeleton to find all the V-structures. 

In addition to the theoretical basics, many of the causal structure learning algorithms, like PC, MMHC, or Structural Agnostic Modeling (SAM) (\cite{kalainathan2022structural}), require the causal sufficiency assumption of the dataset (\cite{eberhardt2007interventions}). This assumption posits that there is no latent variable that is a common cause of any pair of nodes. By assuming causal sufficiency, these algorithms aim to discover the causal relationships among observed variables without considering hidden confounding factors, which simplifies the tasks and identifies the direct causal relationships.

We take the PC algorithm as an example. The PC algorithm is one of the most commonly used constraint-based methods. We show the process of it in Figure \ref{PC}. PC algorithm starts at a fully connected undirected graph, and for every pair of nodes $x,y\in \mathcal X$, PC algorithm violently traverses all the subset $Z\subset \mathcal X \backslash\{x,y\}$. Then it uses the conditional independence test in statistics, like Randomized Conditional Independence Test (RCIT), Hilbert-Schmidt Independence Test (HSIT), Gaussian conditional independence test (G-test), to test whether $x\perp\!\!\!\perp y\ |\ Z$. If the PC algorithm finds a $Z$ such that $x\perp\!\!\!\perp y\ |\ Z$, then it makes $x,y$ disconnected. This procedure or similar methods are contained in many constraint-based structure learning algorithms. It based on the theorem that two nodes $x,y$ are connected if and only if there is no subset $Z\subset \mathcal X\backslash\{x,y\}$ such that $x,y$ are d-separated given $Z$. After this procedure, we have the skeleton of the graph (the first arrow in Figure \ref{PC}).  For V-structure, PC algorithm traverses all triples $x_i, x_j, x_k$, if for any $Z\subset \mathcal X\backslash \{x_i, x_k\}$ which satisfies $x_i \perp\!\!\!\perp x_k\ |\ Z$ and $x_j\notin Z$, then $x_i, x_j, x_k$ form a V-structure, i.e. $x_i\to x_j \gets x_k$ (the second arrow in Figure \ref{PC}). Finally, we give directions on other undirected edges as much as we can by following Meek's rules (\cite{meek2013causal}).
\begin{definition}
\label{Meek}
 Let $\mathcal G$ be a PDAG.
 \begin{itemize}
     \item [(1)] If $x_i \to x_j\in\mathcal E$ and $x_j - x_k \in \mathcal E$, then orient $x_j-x_k$ into $x_j\to x_k$;
     \item [(2)] If $x_i\to x_j\to x_k \in \mathcal E$ and $x_i - x_k\in\mathcal E$, then orient $x_i-x_k$ into $x_i\gets x_k$;
     \item [(3)] If $x_i-x_j\to x_k\in \mathcal E$, $x_i-x_l\to x_k\in\mathcal E$, $x_i-x_k\in\mathcal E$, and $x_j$, $x_l$ are disconnected, then orient $x_i-x_k$ into $x_i\to x_k$.
 \end{itemize}
 We define these three rules as Meek's rules.
\end{definition}
Meek's rules prevent the DAG which can be represented by the CPDAG from having a cycle or forming a new V-structure (the third arrow in Figure \ref{PC}). After these three steps, the PC algorithm returns a CPDAG.

Different methods in causal discovery exhibit distinct principles. GES transforms the structure learning problem into an optimization task, where it seeks to learn the CPDAG by optimizing the Bayesian Information Criterion (BIC) score. But all the causal discovery algorithms are still rely on fundamental principles \textbf{Theorem \ref{d-sep}} and \textbf{Theorem \ref{equivalent}}.

\begin{figure*}
\centering
\includegraphics[scale=0.45]{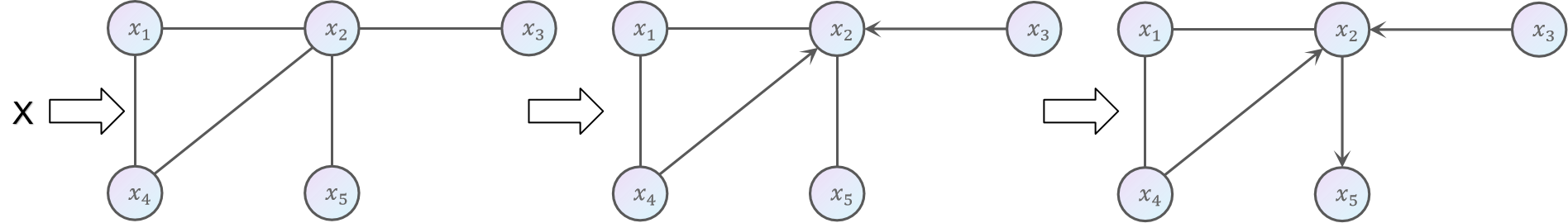}
\caption{The process of the PC algorithm.}\label{PC}
\end{figure*}

\subsection{Structural Causal Model}
Structural Causal Model (SCM) or Structural Equation Model (SEM) (\cite{bollen1989structural}) is one of the important tools in causal structure learning. It provides another way to study the Bayesian network.
\begin{definition}
We define the structural causal model as $\mathcal M=(\mathcal X, \mathcal Y, D_\mathcal X, D_\mathcal Y, \mathcal F, P_\mathcal Y)$ and define
\begin{itemize}
\item [(1)] $\mathcal X$ is the set of endogenous variables and $\mathcal Y$ is the set of exogenous variables.\\
\item [(2)] $D_\mathcal X=\prod_{x\in\mathcal X}D_x$ and $D_\mathcal Y=\prod_{e\in\mathcal Y}D_e$ where $D_x$ and $D_e$ are the codomains of endogenous variable $x$ and exogenous variable $e$. \\
\item [(3)] $\mathcal F=\{f_x, x\in\mathcal X$\} is the set of measurable functions $f_x$ which maps the codomain of $\mathcal X\cup \mathcal Y\backslash\{x\}$ to the codomain of $x\in\mathcal X$.\\
\item [(4)] $P_\mathcal Y=\underset{e\in\mathcal Y}{\prod}P_e$ is the joint distribution function of exogenous variables $\mathcal Y$.
\end{itemize}
\end{definition}

Comparing the definition of Bayesian network and SCM, we find that SCM has two sets of nodes, endogenous and exogenous nodes. Endogenous variables are the observed variables in the dataset, and exogenous variables are hidden variables. Although the nodes in the Bayesian network are the same as endogenous variables in SCM, SCM does not treat variables in the dataset as random variables, it puts the randomicity of features into exogenous variables and assumes the independence of exogenous variables. This is why SCM only has the distribution functions for exogenous variables $P_\mathcal Y=\underset{e\in\mathcal Y}{\prod}P_e$ and does not have distribution functions of endogenous variables. SCM assumes that as long as we know all the randomicity, the endogenous variable can be determined by the randomicity and other endogenous variables. Therefore, instead of putting the edges $\mathcal E$ into $\mathcal M$, SCM defines a set of maps $\mathcal F: D_\mathcal X\times D_\mathcal Y\to D_\mathcal X$. SCM not only puts the structure into $\mathcal F$ but also puts the models and parameters into $\mathcal F$.

\begin{definition}
\label{SCM-parent}
Let $\mathcal M=(\mathcal X, \mathcal Y, D_\mathcal X, D_\mathcal Y, \mathcal F, P_\mathcal Y)$ be a SCM. For any endogenous $x\in\mathcal X$ and variables $z\in\mathcal X\cup\mathcal Y$, let $\Tilde{\mathcal X}=\mathcal X\backslash\{z\}$ and $\Tilde{\mathcal Y}=\mathcal Y\backslash\{z\}$, $z$ is a parent of $x$ if and only if there is no $\Tilde f_x: D_{\Tilde{\mathcal X}\backslash\{x\}}\times D_{\Tilde{\mathcal Y}}\to D_x$ such that
\begin{align*}
    x=f_x\left(\mathcal X\backslash\{x\}, \mathcal Y\right) \Longleftrightarrow x=\tilde f_x(\Tilde{\mathcal X}\backslash\{x\}, \Tilde{\mathcal Y}).
\end{align*}
\end{definition}
Different from the nodes in CPDAG, $x$ and $y$ can not be the parent of each other, since they both have their corresponding exogenous variables as parents and the functional relationship can not be reversed.
Thus $z\in\mathcal X\cup\mathcal Y$ is a parent of $x\in\mathcal X$ if and only if $x$ is not deterministic by any transformation of $f_x$ without knowing $z$ (\cite{olkopf2016foundations}). And exogenous variables are not deterministic, so they do not have any parents.

We give a simple example of SCM. Let us consider the influencing factors of a student's final test grade. Let $s$ be the score of his final test grade, $t$ be the time this student spends on this course, $d$ be the difficulty of the final test. For simplicity, we assume the values of these three variables have no boundary. 
We put randomicity of $s,t,d$ into $e_s$, $e_t$, $e_d$. For example, $e_s$ can be the health condition of the student on the day of the final test, $e_t$ can be the family influence on student, $e_d$ can be the mood of the teacher when he writes the questions. Thus $s,t,d$ are the endogenous variables, $e_s, e_t, e_d$ are the exogenous variables. 
If we know the set of functions $\mathcal F=\{f_d,f_t,f_s\}$,
\begin{align*}
d&=f_d(e_d)=3e_d+1\\
t&=f_t(e_t)=e_t^2\\
s&=f_s(d,t,e_s)=10t-2d+3e_s+5.
\end{align*}
and by \textbf{Definition \ref{SCM-parent}}, we have the structure of SCM in Figure \ref{structure SCM}. If $e_s, e_t, e_d$ follow standard Gaussian distribution i.i.d, we have the SCM in this example
$\mathcal M=\left( \{s,t,d\}, \{e_s, e_t, e_d\}, \mathbb R^3, \mathbb R^3, \mathcal F, \left(\Phi(x)\right)^3 \right)$ where $\mathbb R$ is the set of all real numbers and $\Phi(x)$ is the distribution function of standard Gaussian distribution.

\begin{figure}
\centering
\includegraphics[scale=0.7]{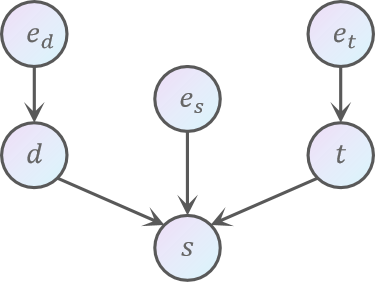}
\caption{The structure of SCM in the final test example.}\label{structure SCM}
\end{figure}

\subsection{Independent Component Analysis}
Independent Component Analysis (ICA) (\cite{hyvarinen2000independent}) aims to find the source messages from given mixed messages. Let $\mathbf x=(x_1,x_2)$ be two messages which are mixed of two independent source messages $\mathbf s=(s_1,s_2)$. Then ICA wants to construct the functions $F_1, F_2$ such that
\begin{align*}
    s_1=F_1(x_1,x_2),\\
    s_2=F_2(x_1,x_2).
\end{align*}
To achieve the independence of $s_1$ and $s_2$, ICA minimizes the mutual information of $s_1$ and $s_2$, $I(s_1, s_2)$. But the actual value of mutual information is hard to compute, especially when the dimension of source messages is more than 2, i.e. $I(s_1, s_2, ..., s_n)$ where $n>2$. 

FastICA is the most commonly used method in ICA problems (\cite{hyvarinen1999fast}), it assumes $F_1$ and $F_2$ to be linear functions. With this assumption, it defines the negentropy of $x$ as $J(x)=H(x_{gauss})-H(x)$ where $x_{gauss}$ is a random variable that follows standard Gaussian distribution, then minimization of mutual information $I(s_1, s_2, ..., s_n)$ is equivalent to maximize $\underset{i}{\sum}J(s_i)$. Furthermore, we can make the loss function easier by making approximation on negentropy
\begin{align}
\label{ICA loss}
J(x)\approx \left[E(G(x))-E(G(x_{gauss}))\right]^2,
\end{align}
where $G$ is chosen as $G(x)=\frac{1}{a}\log\cosh(ax)$ or $G(x)=-\exp(-\frac{1}{2}x^2)$. We use $g$ to denote the derivation of $G$. The whole FastICA algorithm is shown in \textbf{Algorithm 1}.
\begin{algorithm}
\renewcommand{\algorithmicrequire}{\textbf{Input:}}
\renewcommand{\algorithmicensure}{\textbf{Output:}}
\caption{FastICA}
\label{FastICA}
\begin{algorithmic}[1]
\REQUIRE $\mathbf x$, $n$
\STATE Centering $\mathbf x$: $\mathbf x=\mathbf x-E(\mathbf x)$;
\STATE $\mathbf C=E(\mathbf x\mathbf x^\top)$;
\STATE $\mathbf U, \mathbf S, \mathbf V=SVD(\mathbf C)$;
\STATE $\mathbf x=\mathbf U\mathbf S^{-\frac12}\mathbf U\mathbf x$;
\FOR{$i$ from $1$ to $n$}
\STATE Initialize $\mathbf w_i$;
\STATE $\beta=E\left[\mathbf w_i^\top\mathbf xg(\mathbf w_i^\top\mathbf x)\right]$;
\STATE $\mathbf w_i^+=\mathbf w_i-\mu\left\{E\left[\mathbf xg(\mathbf w_i^\top\mathbf x)\right]-\beta\mathbf w_i\right\}\bigg/\left\{E\left[g'(\mathbf w_i^\top \mathbf x)\right]-\beta\right\}$;
\STATE $\mathbf w_i=\mathbf w_i^+\big/ \parallel \mathbf w_i^+ \parallel$;
\IF{$i>0$}
\STATE $\mathbf w_i=\mathbf w_i-\overset{i-1}{\underset{j=1}{\sum}}\mathbf w_i^\top\mathbf w_j\mathbf w_j$;
\STATE $\mathbf w_i=\mathbf w_i\big/\parallel\mathbf w_i\parallel$;
\ENDIF
\ENDFOR
\ENSURE $\mathbf W=[\mathbf w_1, ..., \mathbf w_n]$
\end{algorithmic}
\end{algorithm}

In the input of \textbf{Algorithm 1}, $\mathbf x$ is the vector of observed variables. FastICA needs the sample of $\mathbf x$ to compute the mean and covariance of $\mathbf x$. $n$ is the dimension of source messages $\mathbf s$. Commonly, $n=\dim\mathbf s=\dim\mathbf x$. Line 2 to line 4 is called the whitening step. $SVD(\mathbf C)$ stands for the singular value decomposition of the positive definite matrix $\mathbf C$. Whitening can make the covariance matrix of $\mathbf x$ become an identity matrix, i.e. $E(\mathbf x\mathbf x^\top)=\mathbf I$. It helps reduce the dimension of parameters, reduce the noise, and prevent overfitting. Line 7 to line 9 are using Newton’s method to maximize equation (\ref{ICA loss}) under the constraint $E((\mathbf w_i^\top\mathbf x)^2)=1$ where $\beta$ is the Lagrange multiplier of this constraint. To prevent different $\mathbf w_i$ from converging to the same vector, line 10 to line 12 decorrelate $\mathbf w_i$ from $\mathbf w_1,..., \mathbf w_{i-1}$ based on Gram-Schmidt-like method (\cite{kantorovich2016functional}). FastICA algorithm returns the $n\times n$ coefficient matrix, then $\mathbf s=\mathbf W^\top \mathbf x$.

\section{Improved Incremental Association Markov Blanket}
The Incremental Association Markov Blanket (IAMB) algorithm (\cite{tsamardinos2003algorithms}) was proposed to find the Markov blanket of nodes. In this section, we improve the IAMB algorithm and give theoretical analyses of it.

We give our improved IAMB in \textbf{Algorithm 2}.
\begin{algorithm}
\renewcommand{\algorithmicrequire}{\textbf{Input:}}
\renewcommand{\algorithmicensure}{\textbf{Output:}}
\caption{Improved IAMB}
\label{Improved IAMB}
\begin{algorithmic}[1]
\REQUIRE data set $\mathbf X$, number of nodes $n$, $\alpha$
\STATE Initialize all the Markov Blanket of nodes $\{CMB_1, ..., CMB_n\}$ as empty set $\emptyset$;
\FOR{$i$ from $1$ to $n$}
\STATE $S=\{x_1, x_2, ..., x_n\}$;
\WHILE{$CMB_i$ is changed}
\STATE Find the node $x_j\in S$ $j\neq i$ that maximizes $I(x_i\ ; x_j\ |\ CMB_i)$;
\IF{$I(x_i\ ; x_j\ |\ CMB_i)>\alpha$}
\STATE Add $x_j$ in $CMB_i$;
\STATE Remove $x_j$ from $S$;
\ENDIF
\ENDWHILE
\FOR{node $x_k$ in $CMB_i$}
\IF{$I(x_i\ ; x_k\ |\ CMB_i\backslash \{x_k\})<\alpha$}
\STATE Remove $x_k$ from $CMB_i$;
\ENDIF
\ENDFOR
\ENDFOR
\FOR{$i$ from $1$ to $n$}
\FOR{node $x_j$ in $CMB_i$}
\IF{$x_i\notin CMB_j$}
\STATE Remove $x_j$ from $CMB_i$;
\ENDIF
\ENDFOR
\ENDFOR
\ENSURE $\{CMB_1, CMB_2, ..., CMB_n\}$
\end{algorithmic}
\end{algorithm}

Improved IAMB uses Conditional Mutual Information (CMI) to find all the conditional independencies. In the beginning, we use $CMB_i$ to represent the candidate of the true Markov blanket $MB_i$ and initialize $CMB_i$ as an empty set for $i=1, 2, ..., n$. Line 2 to line 10 is the forward phase, it is mainly based on the total conditioning property of Markov blankets (\cite{pellet2008using}). 
\begin{theorem}[Total Conditioning]
\label{total conditioning}
Let $x$ and $y$ be random variables in data set $\mathbf X$, then $y\in MB_x$ if and only if $x \not\! \perp \!\!\! \perp y\mid \mathcal X\backslash\{x,y\}$.
\end{theorem}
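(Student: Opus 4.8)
The plan is to prove both directions of the equivalence using \textbf{Theorem \ref{d-sep}}, which translates the statement about conditional independence into a purely graphical statement about d-separation. So I would restate the claim as: $y \in MB_x$ if and only if $x$ and $y$ are \emph{not} d-separated given $\mathcal{X} \setminus \{x, y\}$, and then reason entirely in terms of trails in the DAG.

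First I would handle the ``only if'' direction: suppose $y \in MB_x$, so $y$ is a parent of $x$, a child of $x$, or a spouse of $x$. In the first two cases there is a direct edge between $x$ and $y$, which is a trail with no intermediate nodes, hence trivially active given any conditioning set (in particular given $C := \mathcal{X} \setminus \{x,y\}$). In the spouse case, there is a common child $z$ with $x \to z \gets y$; this is a trail whose single intermediate node $z$ is a collider, and since $z \in C$, the trail is active given $C$. In all cases $x$ and $y$ are not d-separated given $C$, so by \textbf{Theorem \ref{d-sep}} they are not conditionally independent given $C$.

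For the ``if'' direction I would prove the contrapositive: assume $y \notin MB_x$ and show $x \overset{d}{\perp} y \mid C$. Take any trail between $x$ and $y$; I must show it is blocked by $C$. Let $w$ be the node adjacent to $x$ on this trail. Since $y \notin MB_x$, in particular $y$ is not a parent or child of $x$, so $w \neq y$, meaning $w$ is a genuine intermediate node and $w \in C$. Now consider the triple at $w$: if $w$ is a non-collider on the trail (the edge pattern at $w$ is a chain or a fork), then $w \in C$ blocks the trail. If $w$ is a collider, say $x \to w \gets v$ for the next node $v$, then $x$ and $w$ have a common child... wait, $w$ is the common child of $x$ and $v$, so this makes $w$ a child of $x$ and $x$ a parent of $w$; but then is $y$ a spouse? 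Not necessarily — $v$, not $y$, is the other parent. So I would instead argue: if the node $w$ adjacent to $x$ is a collider on the trail, then $x \to w$, so $w$ is a child of $x$; the spouses of $x$ are the other parents of $x$'s children, and since $y \notin MB_x$, $y$ is not among them — but this alone does not immediately block the trail. The cleaner route is to observe that since $w \in C$ and conditioning sets that contain \emph{all} other variables are maximally restrictive, I should look at which trails can possibly be active. A trail from $x$ to $y$ active given $C = \mathcal{X}\setminus\{x,y\}$ must have \emph{every} non-collider outside $C$ — but every internal node is in $C$, so the trail can have \emph{no} internal non-colliders, i.e. every internal node is a collider. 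A trail all of whose internal nodes are colliders and which has length $\geq 2$ must have the form $x \to c_1 \gets \cdots$; with every internal node a collider, consecutive colliders force the shape $x \to c_1 \gets c_2 \to \cdots$, but two adjacent colliders $c_1, c_2$ would each need incoming edges from both neighbors, which is impossible unless there is exactly one internal node. Hence the only trails active given $C$ are the direct edge $x \rightleftharpoons y$ (length 1) or a length-2 trail $x \to z \gets y$ with $z \in C$ — but the former means $y$ is a parent or child of $x$, and the latter means $y$ is a spouse of $x$; both contradict $y \notin MB_x$. Therefore no active trail exists, $x \overset{d}{\perp} y \mid C$, and by \textbf{Theorem \ref{d-sep}}, $x \perp\!\!\!\perp y \mid C$.

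The main obstacle is the careful combinatorial argument in the ``if'' direction showing that an active trail given the \emph{full} remaining set must be either a single edge or a length-2 v-structure; this is where the hypothesis $C = \mathcal{X}\setminus\{x,y\}$ is used crucially (it is what forces every internal node to be a collider and thereby collapses the possible trail shapes). Everything else is a direct unwinding of the definitions of Markov blanket, trail, collider, and active trail, together with a single invocation of \textbf{Theorem \ref{d-sep}} in each direction.
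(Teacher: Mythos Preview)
The paper does not actually prove this theorem; it attributes the result to \cite{pellet2008using} and uses it as a black box, so there is no in-paper argument to compare against. Your proposal is therefore being judged on its own merits.

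Your argument is correct. The ``only if'' direction is immediate as you state it. For the ``if'' direction, the clean version of your argument (after the exploratory detours) is exactly right: with $C=\mathcal X\setminus\{x,y\}$ every internal node of any trail lies in $C$, so an active trail can have no internal non-colliders; but two consecutive internal colliders are impossible because the single edge between them would have to point into both, so the only active trails are a direct edge (parent/child) or a single V-structure $x\to z\gets y$ (spouse). Both place $y$ in $MB_x$, contradicting $y\notin MB_x$. One small remark: the paper's Definition~2 of ``active trail'' omits the standard clause that a collider is also activated by having a \emph{descendant} in the conditioning set; your argument works under either convention, since with $C=\mathcal X\setminus\{x,y\}$ every collider on the trail is itself in $C$ anyway. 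You might tighten the write-up by removing the exploratory false starts and presenting only the final all-colliders/adjacent-colliders-impossible argument.
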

\textbf{Theorem \ref{total conditioning}} establishes that two nodes exhibit a strong dependent relationship if one node is part of the other node's Markov blanket. Therefore if a node has the highest CMI with the target node, then it is likely to be a member of the target node's Markov blanket. In the forward phase, we iterate through every node $x_i$, and find the node $x_j$ that maximizes the value of CMI given current $CMB_i$. If the CMI is big enough, i.e. it exceeds a predefined threshold $\alpha$, we will add the node into the $CMB_i$. We continue the forward phase until $CMB_i$ remains unchanged. Ultimately, $CMB_i$ contains $MB_i$ after the forward phase for $i=1, 2, ..., n$.

For the each node $x_i$ that after the forward phase, it follows the backward phase. the backward phase is based on the theorem below, 
\begin{theorem}
\label{MB-def}
Let $x$ and $y$ be two random variables. If $y\notin MB_x$, then $x\perp \!\!\! \perp y\mid MB_x$ and $x\perp \!\!\! \perp y\mid MB_y$. Moreover, for any subsets of nodes $Z\subset\mathcal X$ that $x,y\notin Z$ and $MB_x\cap Z=\emptyset$, then $x\perp\!\!\!\perp y\mid MB_x\cup Z$.
\end{theorem}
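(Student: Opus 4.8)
The plan is to derive all three conclusions from the d-separation criterion (\textbf{Theorem \ref{d-sep}}), translating the purely graphical characterization of Markov blankets into conditional-independence statements. The guiding observation is that $MB_x$ is precisely the set of nodes needed to "shield" $x$ from the rest of the graph: every trail leaving $x$ either starts with an edge into a parent of $x$, an edge into a child of $x$, or passes through a child $c$ of $x$ into a spouse via the collider $c$. Once the conditioning set contains $MB_x$, I want to argue that no active trail from $x$ to a node outside $MB_x$ can survive.

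For the first claim, $x \perp\!\!\!\perp y \mid MB_x$ when $y \notin MB_x$: I would take an arbitrary trail from $x$ to $y$ and walk along it starting at $x$. The first edge is either $x \to u$ or $x \gets u$ with $u \in MB_x$ (as a child or parent respectively). If $u$ is a non-collider on the trail, then since $u \in MB_x$ is in the conditioning set, the trail is blocked at $u$. If $u$ is a collider, it must be of the form $x \to u \gets v$; then $u$ is a child of $x$, so $u \in MB_x$ is in the conditioning set and the collider is "open," but then consider the next node $v$: we have $v \to u \gets x$, so $v$ is a spouse of $x$, hence $v \in MB_x$, and $v$ appears as a non-collider relative to the sub-trail continuing past it (the edge $v \to u$ enters $u$, so at $v$ the trail has $v$ as a non-collider unless the edge before $v$ also points into $v$ — but then $v$ would be a collider with both neighbors, and $v \in MB_x$ still blocks it only if $v$ is NOT in the conditioning set; here $v \in MB_x$ which IS the conditioning set, so a collider at $v$ stays open). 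This is the subtle point: I need to push the argument further and show that the trail, after entering the "spouse region," cannot escape to a node outside $MB_x$ without hitting a conditioned non-collider. The cleanest way is to invoke the standard lemma that $MB_x$ d-separates $x$ from $\mathcal X \setminus (\{x\} \cup MB_x)$, which is a known structural fact about DAGs (it is essentially the local Markov property combined with the moralized-graph characterization); I would cite \cite{koller2009probabilistic} or \cite{pearl1988probabilistic} for this and then apply \textbf{Theorem \ref{d-sep}}. By symmetry ($x \notin MB_y$ iff $y \notin MB_x$, since the Markov-blanket relation is symmetric), the same gives $x \perp\!\!\!\perp y \mid MB_y$.

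For the third claim, adding an arbitrary $Z$ disjoint from $\{x,y\}$ with $MB_x \cap Z = \emptyset$: I would again take a trail from $x$ to $y$ and show it is blocked given $MB_x \cup Z$. The key monotonicity-type observation is that enlarging the conditioning set from $MB_x$ to $MB_x \cup Z$ can only block non-collider-based trails further, and can only open a trail if it activates a collider $c$ (or a descendant of a collider) that lies in $Z \setminus MB_x$. But any collider $c$ on a trail emanating from $x$ that gets activated and then allows the trail to proceed toward $y$ — I claim this still cannot reach outside $MB_x$: walking from $x$, the first node $u \in MB_x$ is still a non-collider-block unless it is a child-collider, in which case the next node is a spouse in $MB_x$, and the argument reduces to the $MB_x$-shielding fact again, now applied with the understanding that descendants of $x$'s children reached through $Z$ are themselves separated from $x$ by $MB_x$. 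The honest statement here is that $x \perp\!\!\!\perp (\mathcal X \setminus (\{x\} \cup MB_x)) \mid MB_x$, and conditioning additionally on a subset $Z$ of that separated set preserves the separation of $x$ from $y$ — this follows from the contraction/weak-union properties of conditional independence (graphoid axioms), or directly from d-separation since $Z$ consists of nodes already d-separated from $x$ given $MB_x$.

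The main obstacle I anticipate is the collider bookkeeping in the first claim: one must be careful that activating a child of $x$ (which is forced, since children of $x$ are in $MB_x$) does not open an escape route, and the rigorous justification really rests on the global-to-local Markov property for DAGs rather than a one-line trail-tracing argument. I would therefore structure the proof as: (i) state and cite the structural lemma "$MB_x$ d-separates $x$ from every other node"; (ii) apply \textbf{Theorem \ref{d-sep}} to get $x \perp\!\!\!\perp y \mid MB_x$; (iii) use symmetry of the Markov-blanket relation for the $MB_y$ statement; (iv) for the last part, note $y \in \mathcal X \setminus (\{x\}\cup MB_x)$ and $Z \subseteq \mathcal X \setminus (\{x\}\cup MB_x)$ as well, so conditioning on $MB_x \cup Z$ still d-separates $x$ and $y$ by the same structural lemma applied to the trail analysis, or equivalently by the weak-union graphoid axiom applied to $x \perp\!\!\!\perp (\{y\}\cup Z) \mid MB_x$.
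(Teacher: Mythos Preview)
Your proposal is correct and lands on essentially the same ideas as the paper, but the execution differs in one instructive place.

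For the first claim, the paper does exactly the direct trail-tracing case split you began and then abandoned: it examines the first step out of $x$ and distinguishes (i) $x\gets x_1$ with $x_1\in Pa_x$, (ii) $x\to x_1\to x_2$, and (iii) $x\to x_1\gets x_2$. Your hesitation at case (iii) is unnecessary: in the trail $x\to x_1\gets x_2\rightleftharpoons x_3$, the edge incident to $x_2$ on the $x_1$-side is $x_2\to x_1$, which is \emph{outgoing} from $x_2$. A collider requires both incident edges to point in, so $x_2$ can never be a collider on this trail regardless of the direction of $x_2\rightleftharpoons x_3$. Hence $x_2\in MB_x$ is always a conditioned non-collider and blocks the trail immediately. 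That single observation closes the ``subtle point'' you flagged, and is precisely how the paper dispatches the case without needing to cite an external structural lemma. Your fallback of citing the Markov-blanket d-separation lemma from \cite{pearl1988probabilistic} or \cite{koller2009probabilistic} is of course valid, just less self-contained.

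For the last claim, your weak-union argument and the paper's argument coincide: the paper writes out, via the elementary manipulation
\[
P(x\mid y,Z,MB_x)=P(x\mid MB_x)=P(x\mid Z,MB_x),
\]
exactly the derivation of weak union from $x\perp\!\!\!\perp \{y\}\cup Z\mid MB_x$. So there is no substantive difference in the second half; you simply name the graphoid axiom, whereas the paper reproves it inline.
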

\begin{proof}
The detailed proof can be found in \textbf{Appendix A}.
\end{proof}

If $x_j$ is not in $MB_i$, then $MB_i\subset CMB_i\backslash\{x_j\}$. We can divide $CMB_i\backslash\{x_j\}$ into two parts $CMB_i\backslash\{x_j\}=MB_i\cup Z$. Then by the \textbf{Theorem \ref{MB-def}}, we have $x_i\perp\!\!\!\perp x_j\mid CMB_i\backslash\{x_j\}$. Thus in the backward phase, we pick every node in $CMB_i$ to compute the CMI $I(x_i; x_j\mid CMB_i\backslash\{x_j\})$. If the CMI is smaller than the threshold, we see $x_i$ and $x_j$ are conditional independence and remove $x_j$ from $CMB_i$.

Different from the original IAMB, the improved IAMB has the checking phase after doing the forward phase and backward phase on every node in $\mathcal X$. The checking phase is based on the simple fact that
\begin{align*}
    y\in MB_x \Longleftrightarrow x\in MB_y,
\end{align*}
i.e. the symmetry of Markov blankets. For node $x_i$, we check every node $x_j$ in the $CMB_i$ of $x_i$ that whether $x_i$ also belongs to the $CMB_j$. If not, we exclude $x_j$ from $CMB_i$. 

CMI is a powerful measure to estimate conditional independencies. However, the computation of CMI is much more complex than the computation of mutual information. In the original IAMB algorithm, CMI is computed based on the definition, which requires a large amount of data to obtain accurate estimates. Additionally, we need to do discretization before applying the original IAMB on continuous datasets. However, continuous data may lose its information after discretization, resulting in CMI estimates that may deviate significantly from the true values.

In improved IAMB, we apply $k$th nearest neighbour conditional mutual information (kNN-CMI)  which is proposed in \cite{mesner2020conditional}. Let $X, Y, Z\subset\mathcal X$ be three disjoint sets of random variables. To compute $I(X; Y\mid Z)$, kNN-CMI computes the $l_\infty$ distance $\rho_{k,i}$ of $(\mathbf x_i, \mathbf y_i, \mathbf z_i)$ to the $k$th nearest neighbor (kNN) with hyperparameter $k$ in the dataset where $\mathbf x_i, \mathbf y_i, \mathbf z_i$ is the value of $X, Y, Z$ on the $i$th instance. Then define $N_{XY,i}$ as 
\begin{align*}
    N_{XZ,i}=\bigg|\left\{(\mathbf x_j,\mathbf z_j); \parallel(\mathbf x_i,\mathbf z_i)- (\mathbf x_j, \mathbf z_j)\parallel\leq\rho_{k,i}, 1\leq j\leq N\right\}\bigg|
\end{align*}
where $\parallel\cdot\parallel$ is the $l_\infty$ norm and $N$ is the total number of instances in the dataset. We can also define $N_{YZ,i}$ and $N_{Z,i}$ in a similar way. Then we define $\tilde k_i$ as the number of instances whose distance to $(\mathbf x_i, \mathbf y_i, \mathbf z_i)$ is less or equal to $\rho_{k,i}$. The difference between $\tilde k_i$ and $k_i$ is that we also count the boundary points into $\tilde k_i$. Apparently, $\tilde k_i$ is equal to $k_i$ on continuous data, since the number of boundary points in the continuous condition is zero with probability one. Then we have the approximation of CMI on the $i$th instance:
\begin{align*}
    \xi_i=\psi(\tilde k_i)-\psi(N_{XZ,i})-\psi(N_{YZ,i})+\psi(N_{Z,i}),
\end{align*}
where $\psi$ is derivation of logarithm gamma function $\psi(x)=\frac{d}{dx}\log\Gamma(x)$.

Then kNN-CMI uses $\frac1N\overset{N}{\underset{i=1}{\sum}}\xi_i$ to approximate $I(X, Y\mid Z)$. Author in \cite{mesner2020conditional} also proves that 
\begin{align*}
    \lim_{N\to\infty}E\left(\frac1N\sum^{N}_{i=1}\xi_i\right)=I(X; Y\mid Z).
\end{align*}

kNN-CMI uses kNN to obtain the distance $\rho_{k,i}$, it does not depend on the type of data. Therefore kNN-CMI can directly compute the estimation of CMI for discrete, continuous, or even mixed datasets without losing any information. Moreover, kNN-CMI is one of the most accurate estimations of the true CMI under any size of sample size. Thus based on these properties of kNN-CMI, improved IAMB can return the candidate Markov blankets, which are close to true Markov blankets on any type of dataset.

Combing \textbf{Theorem \ref{total conditioning}}, \textbf{Theorem \ref{MB-def}}, and the analyses above, we can guarantee the effectiveness of improved IAMB. 

\begin{corollary}
\label{effectiveness of improved IAMB}
If the conditional mutual information we compute satisfies $I(X; Y\mid Z)=0$ if and only if $X\perp\!\!\!\perp Y\mid Z$, then the improved IAMB returns the true Markov blankets for any small enough $\alpha$.
\end{corollary}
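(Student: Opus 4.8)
The plan is to argue that, under the stated hypothesis, every thresholded test inside Algorithm~\ref{Improved IAMB} coincides with an exact conditional-independence oracle, and then to follow the candidate sets $CMB_i$ through the forward, backward, and checking phases, showing $CMB_i=MB_i$ for each $i$. For the first part I would use finiteness: there are only finitely many triples $(x_i,x_j,Z)$ with $i\ne j$ and $Z\subseteq\mathcal X\setminus\{x_i,x_j\}$, and by hypothesis $I(x_i;x_j\mid Z)$ is either exactly $0$ (precisely when $x_i\perp\!\!\!\perp x_j\mid Z$) or strictly positive. Letting $\alpha_0$ be the minimum of those finitely many positive values gives $\alpha_0>0$, and for any $\alpha\in(0,\alpha_0)$ the test ``$I>\alpha$'' reads exactly as conditional dependence and ``$I<\alpha$'' exactly as conditional independence; this is the meaning of ``small enough $\alpha$''. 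From here I may pass freely between conditional (in)dependence and d-separation in the generating DAG via Theorem~\ref{d-sep}.

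For the forward phase I would fix $i$ and note the while loop terminates, since each iteration that changes $CMB_i$ permanently deletes one node from the working set; hence at termination $x_i\perp\!\!\!\perp x_j\mid CMB_i$ for every $x_j\notin CMB_i\cup\{x_i\}$, while $x_i\notin CMB_i$. The claim $MB_i\subseteq CMB_i$ (the intuition being Theorem~\ref{total conditioning}) is then proved in two stages. First, every parent or child $y$ of $x_i$ lies in $CMB_i$: the one-edge trail $x_i\!-\!y$ has no intermediate nodes, so it is active given any set avoiding its endpoints, and $y\notin CMB_i$ would contradict termination. Second, every spouse $y$ lies in $CMB_i$: picking a common child $z$ with $x_i\to z\gets y$, the first stage gives $z\in Ch_{x_i}\subseteq CMB_i$, so the trail $x_i\to z\gets y$ has its collider inside $CMB_i$ and both endpoints outside, hence is active given $CMB_i$ — and again termination forces $y\in CMB_i$. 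Thus $MB_i\subseteq CMB_i$ after the forward phase.

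In the backward phase I would maintain the invariant $MB_i\subseteq CMB_i$. If a true-blanket node $x_k$ is examined, the same two cases show $x_i$ and $x_k$ are dependent given $CMB_i\setminus\{x_k\}$ — directly via the edge when $x_k$ is a parent or child, and via a common child $z\in Ch_{x_i}\subseteq CMB_i\setminus\{x_k\}$ (using the invariant, and $z\ne x_k$) when $x_k$ is a spouse — so $x_k$ is not removed and the invariant survives; in particular no true-blanket node is ever deleted. If instead $x_k\notin MB_i$ is examined, write $CMB_i\setminus\{x_k\}=MB_i\cup Z$ with $Z$ disjoint from $MB_i$ and avoiding $x_i$ and $x_k$; the ``moreover'' clause of Theorem~\ref{MB-def} yields $x_i\perp\!\!\!\perp x_k\mid CMB_i\setminus\{x_k\}$, so $x_k$ is removed. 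Since removals are permanent and each node present after the forward phase is examined once, the backward phase leaves $CMB_i=MB_i$ for every $i$. The checking phase is then inert: with every $CMB_j$ equal to $MB_j$, the symmetry $x_j\in MB_i\iff x_i\in MB_j$ makes the test ``$x_i\notin CMB_j$'' always fail, so nothing is removed and the algorithm returns $\{MB_1,\dots,MB_n\}$.

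I expect the main obstacle to be the spouse case — stage two of the forward argument and its mirror image in the backward argument. A spouse of $x_i$ is in general \emph{not} dependent on $x_i$ given an arbitrary conditioning set, so one cannot argue spouses enter (or remain in) $CMB_i$ in one stroke; the argument has to be staged, first locking in all parents and children and only then exploiting the resulting collider to activate each spouse's trail, and in the backward direction this staging must be carried as an invariant so that true spouses are never discarded. Beyond this, the only remaining care is bookkeeping: confirming the while loop halts (the working set strictly shrinks on each change) and that the backward for loop actually inspects every node present after the forward phase, each at most once.
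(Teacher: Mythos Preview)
Your proposal is correct and follows the same phase-by-phase skeleton the paper sketches in the text leading up to the corollary (the paper gives no separate proof, treating it as immediate from Theorems~\ref{total conditioning} and~\ref{MB-def} together with the surrounding discussion). The one substantive difference is in the forward phase: the paper appeals to Total Conditioning (Theorem~\ref{total conditioning}) to claim $MB_i\subseteq CMB_i$, but that theorem concerns conditioning on all of $\mathcal X\setminus\{x_i,x_j\}$, not on the growing $CMB_i$, so it does not literally apply. Your staged d-separation argument --- first locking in parents and children via the direct edge, then activating each spouse through its shared collider once that child is already in $CMB_i$ --- is the right way to close this gap, and your mirrored invariant in the backward phase (showing true-blanket nodes are never discarded) is likewise a detail the paper omits. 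So: same approach, but you have supplied the missing rigor.
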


\section{Endogenous and Exogenous Markov Blankets Intersection}
In this section, we introduce the main algorithms in this paper: endogenous and exogenous Markov blankets intersection (EEMBI) algorithm and EEMBI-PC. They use improved IAMB to obtain the Markov blankets of endogenous variables for the first step. Now we introduce the rest of the steps.

\subsection{Generating and Matching of Exogenous Variables}

The nodes in SCM are composed of endogenous variables $\mathcal X$ and exogenous variables $\mathcal Y$. Every endogenous variable at least needs one endogenous variable to contain its randomicity. We can simplify this definition by letting $|\mathcal X|=|\mathcal Y|$ under the causal sufficiency assumption. We can put the randomicity of $x_i$ into one exogenous variable $e_i$, and the randomicity of $x_i$ has no influence on other endogenous variables because of the causal sufficiency assumption. Then every endogenous variable has only one parent in $\mathcal Y$.

By \textbf{Definition \ref{SCM-parent}}, we know that every endogenous node $x\in\mathcal X$ can be determined by its parents in $\mathcal X$ and $\mathcal Y$. Therefore we can transform $f_i\in\mathcal F$ as 
\begin{align*}
    x_i=f_i(Pa_i, e_i)
\end{align*}
where $Pa_i$ is the parents of $x_i$ in $\mathcal X$. If we want to treat all the DAG as a SCM, we need to find the exogenous variables.

Let $\mathcal X=\{x_1, ..., x_n\}$ and $\mathcal Y=\{e_1, ..., e_n\}$ where $e_i$ is the exogenous variable corresponding to $x_i$. Using the acyclic characteristics of DAG, we can find an endogenous node $x_j$ which has no parent in $\mathcal X$ and is only determined by its exogenous variable $e_j$, i.e. $x_j=f_j(e_j)$. Then we replace $x_j$ with $f_j(e_j)$ in the function of its children. For example, if $x_i=f_i(x_j, x_k, e_i)$ is a child of $x_i$, we replace $x_j$ with $f_j$ to obtain $x_i=f_i(f_j(e_j), x_k, e_i)$. After the replacement for all the children of $x_j$, we can still find another node $x_l$ which is only determined by the exogenous variables according to the acyclic characteristics of DAG, and continue the replacement for its children. At last, we can use exogenous variables to represent all the endogenous variables $x_i$=$g_i(\mathbf e)$ where $\mathbf e=(e_1, e_2, ..., e_n)^\top$ is the vector of all exogenous variables. and $g_i$ is the combination of $f_j$, $j\in Pa_i$, and $f_i$. Then we have
\begin{align}
\label{E2E}
\mathbf x=\mathbf g(\mathbf e),
\end{align}

where $\mathbf x=(x_1, x_2, ..., x_n)^\top$ is the vector of all endogenous and $\mathbf g=(g_1, g_2, ..., g_n)^\top$. Equation (\ref{E2E}) gives us a way to generate $\mathbf e$. If we want to treat $\mathbf e$ as source messages, we need to state that $\mathbf e$ are independent with each other. 

Firstly, If we see $\mathcal X$ and $\mathcal Y$ as a whole graph, it still satisfies the \textbf{Theorem \ref{d-sep}}. Although $x_i\in\mathcal X$ is not a random variable and is determined by its parents, we can see the conditional distribution $P(x_i\mid Pa_i, e_i)$ as a Dirac distribution whose probability is one when $x_i=f_i(Pa_i, e_i)$ is zero otherwise. Thus the conditional independencies in basic structure achieve, and d-separation is equivalent to conditional independencies. 

If all the path between two exogenous variables $e_i, e_j$ contains at least one V-structure, then $e_i$ and $e_j$ will be conditionally independent given the empty set, i.e. they are independent. Let us assume there is a trail $e_i\to x_i\rightleftharpoons y_1\rightleftharpoons y_2, ..., y_m\rightleftharpoons x_j\gets e_j$ where $y_1, y_2, ..., y_m\in \mathcal X$ which has no V-structure. Then we can verify the first and the last $\rightleftharpoons$ as $\to$ and $\gets$, and the trail become $e_i\to x_i\to y_1\rightleftharpoons y_2, ..., y_m\gets x_j\gets e_j$. We can find that there are two reverse paths on this trail if we continue this process. Then there must be a V-structure in the cross of these two paths which is contradictory to our assumption. Thus we can have the conclusion that all the trails between $e_i, e_j$ are not active and all the exogenous variables are independent with each other.

With the independencies of exogenous variables, we can see $\mathbf e$ as source messages and endogenous variables $\mathbf x$ as the mixture of $\mathbf e$ in the ICA problem. Then we can use the method in ICA to recover or generate the exogenous variables. In this paper, we only use FastICA in \textbf{Section 2.3} to generate exogenous variables. 

We still face a matching problem after generating $\mathbf e'$, since the criteria in ICA are independencies and information. ICA does not have interpretability on source messages it generates. But to construct SCM, we need to find the corresponding exogenous variable for every endogenous variable and match them up. To avoid confusion, we denote the vector of the generated exogenous variables after matching as $\mathbf e'=(e'_1, e'_2, ..., e'_n)$ where $e'_i$ is the generated exogenous variable corresponding to $x_i$.





The only message about matching is that $e_i$ is the only exogenous variable that directly connects with $x_i$. According to this property, we can use mutual information to make an assessment of their relationship.

\begin{theorem}
\label{trail MI}
Let graph $\mathcal G$ be a trail $x\leftrightharpoons x_1\leftrightharpoons ... \leftrightharpoons x_n$. Then we have
\begin{align*}
    I(x\ ;x_1)>I(x\ ;x_2)>...>I(x\ ;x_n).
\end{align*}
For more general DAG $\mathcal G$, if this trail is the only trail $x$ can reach $x_1, x_2,..., x_n$ in $\mathcal G$, the formula above also achieves.
\end{theorem}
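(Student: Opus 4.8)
The plan is to build the whole argument on the data-processing inequality (DPI) for mutual information together with the d-separation characterisation of conditional independence, \textbf{Theorem \ref{d-sep}}. Since strict inequalities compose, it suffices to establish the single-step bound $I(x\,;x_k) > I(x\,;x_{k+1})$ for every $k$ (writing $x_0 := x$); chaining these over $k = 0, 1, \dots, n-1$ then yields the claimed decreasing chain. Throughout I use that the mutual informations are taken over the random variables attached to $\mathcal G$, so \textbf{Theorem \ref{d-sep}} applies.

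For the non-strict bound I would first argue that $x \perp\!\!\!\perp x_{k+1}\mid x_k$. In the first part of the statement $\mathcal G$ \emph{is} the trail, so the only trail joining $x$ and $x_{k+1}$ is the segment $x \leftrightharpoons x_1 \leftrightharpoons \dots \leftrightharpoons x_{k+1}$; in the general-DAG part this is exactly what the ``only trail'' hypothesis supplies. Assuming this segment is collider-free, $x_k$ is a non-collider on it, so conditioning on $\{x_k\}$ blocks the trail; hence $x \overset{d}{\perp} x_{k+1}\mid x_k$ and, by \textbf{Theorem \ref{d-sep}}, $x \perp\!\!\!\perp x_{k+1}\mid x_k$. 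Thus $(x, x_k, x_{k+1})$ is a Markov chain in that order and DPI gives $I(x\,;x_{k+1}) \le I(x\,;x_k)$. For strictness, equality in this DPI bound would force $I(x\,;x_k\mid x_{k+1}) = 0$, i.e.\ $x \perp\!\!\!\perp x_k\mid x_{k+1}$. But the segment $x \leftrightharpoons x_1 \leftrightharpoons \dots \leftrightharpoons x_k$ does not contain $x_{k+1}$ and has no colliders, so it is active given $\{x_{k+1}\}$; hence $x$ and $x_k$ are not d-separated given $x_{k+1}$, and \textbf{Theorem \ref{d-sep}} forces $x \not\perp\!\!\!\perp x_k\mid x_{k+1}$ — a contradiction. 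Therefore the bound is strict, and the theorem follows.

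The step I expect to be delicate is the collider-freeness of the trail, which the statement does not make explicit. If the trail contained a collider $x_j$, then for every $m \ge j+1$ the (unique) trail from $x$ to $x_m$ would be blocked given the empty set, forcing $I(x\,;x_m) = 0$ for all such $m$ and destroying the strict chain past $x_j$. So I would either state collider-freeness of the trail as a standing hypothesis — the only regime in which the displayed chain of strict inequalities can hold literally — or weaken the conclusion to allow ties beyond the first collider. A secondary point, specific to the general-DAG case, is that the ``only trail'' hypothesis must be read strongly enough to guarantee that \emph{every} trail from $x$ to $x_{k+1}$, not merely the displayed one, is routed through $x_k$; that is precisely where the hypothesis is used. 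No collider-activation issue ever arises along the $x \dots x_k$ sub-trail used for strictness, because that sub-trail has no colliders.
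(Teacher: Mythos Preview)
Your proposal is correct and follows essentially the same approach as the paper. Both arguments hinge on the chain-rule identity $I(x;x_{k},x_{k+1})=I(x;x_{k})+I(x;x_{k+1}\mid x_{k})=I(x;x_{k+1})+I(x;x_{k}\mid x_{k+1})$, use d-separation to kill the first conditional term, and invoke the faithfulness direction of \textbf{Theorem \ref{d-sep}} to make the second term strictly positive; you phrase this as the data-processing inequality plus its equality case, while the paper writes out the chain rule explicitly and iterates from the far end of the trail, but the content is identical. Your treatment of the collider case is in fact a bit more candid than the paper's: the paper handles it the same way you suggest---truncating at the first collider and noting that all mutual informations beyond it vanish---without flagging that this makes the \emph{strict} chain in the displayed statement fail literally past that point.
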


\begin{proof}
    The detailed proof can be found in \textbf{Appendix A}.
\end{proof}

In the condition of \textbf{Theorem \ref{trail MI}}, it is easy to obtain $I(x_i\ ; e_i)>I(x_j; e_i), i\neq j$. Furthermore, we can extend this idea to more complex situations. Using \textbf{Theorem \ref{trail MI}}, we have an important theorem for the matching process.
\begin{theorem}
\label{matching}
Let $\mathbf x=(x_1, x_2, ..., x_n)^\top$ be the vector of endogenous variables in graph $\mathcal G$, and $\mathbf e=(e_{i_1}, e_{i_2}, ..., e_{i_n})^\top$ be the vector of exogenous variables under some unknown arrangement $(i_1, i_2, ..., i_n)$. Then $e_{i_m}=e_m$ for all $m=1, 2, ..., n$ if and only the arrangement $(i_1, i_2, ..., i_n)$ maximizes $\overset{n}{\underset{m=1}{\sum}} I(x_m\ ; e_{i_m})$ under the constraints $I(x_m\ ;e_{i_m})\neq 0$, i.e.
\begin{align}
\label{maximize MI}
(j_1, j_2, ..., j_n)&=\mathop{\arg\max}_{(i_1, i_2, ..., i_n)}\sum^n_{m=1} I(x_m\ ; e_{i_m})
\\ with\ \ \ \ \ \ \ &I(x_m\ ;e_{i_m})\neq 0, \ \ \ m=1, 2, ..., n \label{constraints}
\end{align}
where $j_i=i$.
\end{theorem}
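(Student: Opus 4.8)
The plan is to establish the two directions of the biconditional separately, with the forward implication (true matching $\Rightarrow$ maximizer) being the substantive one and the reverse implication following by a uniqueness argument. Throughout I will use the key fact, derived earlier in the text, that every exogenous variable $e_i$ connects to the graph only through the single edge $e_i \to x_i$, so the only trail from $e_i$ to any endogenous node $x_m$ with $m \neq i$ must pass through $x_i$ first. This is exactly the hypothesis needed to invoke \textbf{Theorem \ref{trail MI}}.

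First I would record the pointwise comparison: for a fixed $e_i$, since every trail from $e_i$ to $x_m$ ($m \neq i$) factors through $x_i$, \textbf{Theorem \ref{trail MI}} gives $I(x_i\,; e_i) > I(x_m\,; e_i)$ for all $m \neq i$ (one must also note $I(x_i\,; e_i) \neq 0$, which holds because $e_i$ is a genuine parent of $x_i$, so the Dirac-kernel argument used earlier shows a real dependence; the constraint $I(x_m\,;e_{i_m}) \neq 0$ in the theorem statement rules out the degenerate situation where some $e_{i_m}$ is functionally irrelevant to $x_m$). Next, for the forward direction, suppose $(i_1,\dots,i_n)$ is an arrangement satisfying the constraint $I(x_m\,;e_{i_m}) \neq 0$ for all $m$, and suppose it is not the identity. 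Then there is some cycle in the permutation, i.e. indices where $x_m$ is paired with $e_{i_m}$, $i_m \neq m$. The idea is a swapping/exchange argument: I want to show the identity arrangement strictly dominates. Compare $\sum_m I(x_m\,;e_{i_m})$ term by term against $\sum_m I(x_m\,;e_m)$. For every $m$ we have $I(x_m\,;e_m) \geq I(x_m\,;e_{i_m})$ by the pointwise comparison above (with equality only when $i_m=m$), and since the permutation is not the identity at least one inequality is strict. Hence $\sum_m I(x_m\,;e_m) > \sum_m I(x_m\,;e_{i_m})$, so any non-identity arrangement fails to be the maximizer; equivalently the true matching is the unique maximizer.

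For the reverse direction, suppose $(i_1,\dots,i_n)$ attains the maximum of $\sum_m I(x_m\,;e_{i_m})$ subject to the nonzero constraints. By the argument just given, the identity arrangement is feasible (all $I(x_m\,;e_m)\neq 0$) and strictly beats every non-identity feasible arrangement; therefore the maximizer must be the identity, i.e. $e_{i_m}=e_m$ for all $m$. This closes the equivalence.

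The main obstacle I anticipate is the pointwise comparison step in full rigor, specifically justifying $I(x_i\,;e_i) > I(x_m\,;e_i)$ uniformly over all $m\neq i$ using \textbf{Theorem \ref{trail MI}} when $\mathcal{G}$ is a general DAG rather than a single trail. Theorem \ref{trail MI} requires that the relevant trail be the \emph{only} trail from the source to each target; here I must argue that because $e_i$ has out-degree one and no parents, and because the graph (endogenous plus exogenous) is a DAG with exogenous variables mutually independent, every active trail from $e_i$ to $x_m$ begins with $e_i \to x_i$ — and then the monotone decay of mutual information along that trail gives the strict inequality. There is a subtlety if there are multiple trails from $x_i$ onward to $x_m$; one needs that the data-processing-type inequality still yields $I(x_m\,;e_i) \le I(x_i\,;e_i)$ because $e_i$ influences $x_m$ only through $x_i$ (so $e_i \perp\!\!\!\perp x_m \mid x_i$, hence $I(x_m\,;e_i) \le I(x_i\,;e_i)$ by the data processing inequality, with strictness from the constraint). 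I would lean on this conditional-independence/data-processing formulation as the cleanest route, and treat Theorem \ref{trail MI} as the prototype case. The rest of the proof is the routine exchange argument for permutations.
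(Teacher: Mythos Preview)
Your proposal has a real gap in the multiple-trail case, and the data-processing fix you propose does not work. The conditional independence $e_i \perp\!\!\!\perp x_m \mid x_i$ that you invoke is false in general: conditioning on $x_i$ opens the collider in $e_i \to x_i \leftarrow x_k$ for every endogenous parent $x_k$ of $x_i$, so whenever such an $x_k$ is also an ancestor of $x_m$ the backdoor trail $e_i \to x_i \leftarrow x_k \rightleftharpoons \cdots \rightleftharpoons x_m$ becomes active given $\{x_i\}$. Consequently there is no probabilistic Markov chain $e_i \to x_i \to x_m$, and the data-processing inequality is unavailable. The pointwise inequality itself can genuinely fail. A concrete linear-Gaussian SCM: $x_1=e_1$, $x_2=x_1+e_2$, $x_3=100\,x_2-99\,x_1+0.01\,e_3$, so that $x_3=e_1+100\,e_2+0.01\,e_3$; here $I(x_3;e_2)\gg I(x_2;e_2)$, and also $I(x_3;e_2)\gg I(x_3;e_3)$. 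Thus neither the ``fix $e_i$'' nor the ``fix $x_m$'' version of your term-by-term comparison holds, and the exchange argument collapses. The paper explicitly notes that ``we can not guarantee $I(x_i;e_i)\geq I(x_j;e_i)$'' in this multi-trail situation.

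The paper's proof uses a completely different mechanism that sidesteps any pointwise comparison. It shows that the constraint $I(x_m;e_{i_m})\neq 0$ forces $x_m$ to be a strict descendant of $x_{i_m}$ in the endogenous DAG, because any trail from $e_{i_m}$ that is active given $\emptyset$ must be a directed path through $x_{i_m}$. A non-identity permutation then contains a nontrivial cycle $m_1 \mapsto i_{m_1}=m_2 \mapsto \cdots \mapsto m_k \mapsto m_1$, which yields directed paths $x_{m_2}\to\cdots\to x_{m_1}$, $x_{m_3}\to\cdots\to x_{m_2}$, \ldots, and their concatenation is a directed cycle in $\mathcal G$, contradicting acyclicity. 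Hence the identity is in fact the \emph{only} feasible permutation, and the maximization is trivially attained there. The key idea you are missing is that the nonzero-MI constraints encode the ancestral partial order, and it is acyclicity of the DAG---not any mutual-information inequality---that pins down the matching.
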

\begin{proof}
    The detailed proof can be found in \textbf{Appendix A}
\end{proof}

\textbf{Theorem \ref{matching}} presents a method for matching, it turns the matching problem as an optimization problem. Although for a single pair $x_i$ and $e_i$, $I(x_i\ ; e_i)>I(x_j\ ; e_i), i\neq j$ may not achieve in complex situations, the sum of the mutual information can reach the maximization under the right permutation of $\mathbf e$. Combing \textbf{Theorem \ref{matching}} and the generating process, we propose the generating and matching algorithm in \textbf{Algorithm 3}.

\begin{algorithm}
\renewcommand{\algorithmicrequire}{\textbf{Input:}}
\renewcommand{\algorithmicensure}{\textbf{Output:}}
\caption{Generating and Matching}
\label{GM}
\begin{algorithmic}[1]
\REQUIRE data set $\mathbf X$, number of nodes $n$
\STATE Apply ICA method on $\mathbf X$ to obtain $\mathbf E$;
\IF{$\mathbf X$ is discrete}
\FOR{every element $E_{ij}$ in $\mathbf E$}
\STATE $E_{ij}=1$ if $sigmoid(E_{ij})>0.5$, otherwise $E_{ij}=0$;
\ENDFOR
\ENDIF
\STATE Initialize $n\times n$ matrix $\mathbf C$ as zero matrix;
\FOR{i from $1$ to $n$}
\FOR{j from $1$ to $n$}
\STATE $C_{ij}=-I(x_i\ ; e'_j)$;
\IF{$I(x_i\ ; e'_j)=0$}
\STATE $C_{ij}=+\infty$;
\ENDIF
\ENDFOR
\ENDFOR
\STATE Use modified Jonker-Volgenant algorithm on $\mathbf C$ to obtain a permutation $(j_1, j_2, ..., j_n)$;
\STATE Rearrange columns of $\mathbf E$ according to $(j_1, j_2, ..., j_n)$;
\ENSURE $\mathbf E$
\end{algorithmic}
\end{algorithm}

In \textbf{Algorithm 3}, the data set $\mathbf X$ is still the instances set of endogenous vector $\mathbf x$. We only consider discrete and continuous data set. We use FastICA on $\mathbf X$ to obtain the instances of exogenous variables $\mathbf E$ in line 1. The $\mathbf E$ computed by FastICA is continuous. To compute the mutual information for endogenous and exogenous variables, we need to discretize $\mathbf E$ for the condition that $\mathbf X$ is discrete. In line 2 to line 6, we use sigmoid function on every element of $\mathbf E$, and change the value of elements to $0$ or $1$ depends on the threshold $0.5$. Then we have $\mathbf e'$ as binary variables. So far, we complete the generating process. 

In line 13, Modified Jonker-Volgenant (\cite{crouse2016implementing}) aims to find the solution for minimizing the assignment cost:
\begin{align*}
    \min\sum^n_{i=1}\sum^n_{j=1} C_{ij}M_{ij},
\end{align*}
where $\mathbf C$ is the cost matrix, $C_{ij}$ represents the cost if we assign $j$ to $i$, and $M_{ij}=1$ if we assign $j$ to $i$ otherwise $M_{ij}=0$. Modified Jonker-Volgenant algorithm find the $\mathbf M$ to minimize the cost, and outputs the indices $(j_1, j_2, ..., j_n)$ which $M_{i,j_i}=1$. In matching process, we set the element in cost matrix $C_{ij}$ as the minus mutual information of $x_i$ and $e_j$, and we set $C_{ij}$ as infinite if mutual information is zero. Then minimizing the assignment cost is equivalent to maximize the equation (\ref{maximize MI}) under constraints (\ref{constraints}). After rearranging columns of $\mathbf E$, $e'_i$ is the exogenous variable correspond to $x_i$ for $i=1, 2, ..., n$. 

Although we use FastICA, which use linear function to separate observed messages, to generate exogenous variables, we do not have to assume the mixed function to be linear function. Then we have 
\begin{align*}
\mathbf e'=\mathbf P\mathbf W^\top\mathbf x=\mathbf P\mathbf W^\top\mathbf g(\mathbf e),
\end{align*}
where $\mathbf W$ is the output of FastICA in \textbf{Algorithm \ref{GM}}, line 1, and $\mathbf P$ is the permutation matrix which is constructed according to the permutation in line 13. Thus $\mathbf e'$ can be determined by $\mathbf e$ which illustrates that the exogenous vector we generated in \textbf{Algorithm 3} only contains part of the information of true exogenous vector. Apparently, the exogenous vector $\mathbf e'$ we generate is equal to true exogenous vector $\mathbf e'$ if and only if $\mathbf g(\mathbf e)$ is a linear functions. In other cases, $\mathbf e$ and $\mathbf e'$ are very different. However, we only prove \textbf{Theorem \ref{matching}} on $\mathbf e$ and we apply \textbf{Theorem \ref{matching}} on $\mathbf e'$. Therefore, we still need to fill this gap.


\begin{theorem}
\label{MI equivalence}
Let $\mathbf e$ be the true exogenous vector of CPDAG $\mathcal G$, and $\mathbf e'$ be an another exogenous vector that can be determined by $\mathbf e$. i.e. $e'_i$ is the exogenous variable of $x_i$ and there is a $\mathbf h$ such that
\begin{align*}
    \mathbf e'=\mathbf h(\mathbf e).
\end{align*}
Then for element of $\mathbf h$ $h_i$, there is a function $\tilde h_i$ such that $e'_i=h_i(\mathbf e)=\tilde h_i(e_i)$. Moreover, if $\mathbf h$ in equation (\ref{E2E}) is invertible, then $I(x_i\ ; e'_i)=I(x_i\ ; e_i)$ and $I(x_j\ ; e_i)=I(x_j\ ; e'_i)$ achieve for any $j\neq i$.
\end{theorem}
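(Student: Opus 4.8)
The plan is to prove the statement in two parts, mirroring the way it is phrased. First I would establish the factorization $e'_i = h_i(\mathbf e) = \tilde h_i(e_i)$, and then, under the additional hypothesis that $\mathbf g$ in equation (\ref{E2E}) is invertible, upgrade this to the two mutual-information equalities.

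For the first part, the key observation is that $e'_i$ is, by assumption, the exogenous variable of $x_i$ in the same SCM/CPDAG $\mathcal G$. By \textbf{Definition \ref{SCM-parent}} an exogenous variable has no parents, and I would argue that the only exogenous variable to which $e'_i$ can be functionally tied is $e_i$ itself: the defining property of the match is that $e_i$ is the unique exogenous node directly connected to $x_i$. Concretely, since $e'_i = h_i(\mathbf e)$ depends only on the $e_j$'s, and since $e'_i$ must carry exactly the ``fresh randomness'' feeding $x_i$, any dependence of $h_i$ on some $e_j$ with $j \neq i$ would create an active trail from $e_j$ to $x_i$ not through $e_i$, contradicting the independence structure of exogenous variables established just before \textbf{Theorem \ref{trail MI}} (all trails between distinct exogenous variables are inactive). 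Hence $h_i$ factors through the single coordinate $e_i$, giving $e'_i = \tilde h_i(e_i)$ for some $\tilde h_i$. I would phrase this carefully using the d-separation/independence equivalence (\textbf{Theorem \ref{d-sep}}) applied to the enlarged graph on $\mathcal X \cup \mathcal Y$: $e'_i$ is a deterministic function of $\mathbf e$, and $e'_i \perp\!\!\!\perp \{e_j : j\neq i\}$ forces $e'_i$ to be a function of $e_i$ alone.

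For the second part, assume $\mathbf g$ is invertible, so $\mathbf e = \mathbf g^{-1}(\mathbf x)$ and therefore $e_i$ is a function of $\mathbf x$; combined with $\mathbf e' = \mathbf h(\mathbf e)$ and (when $\mathbf h$ is invertible as well) $\mathbf e = \mathbf h^{-1}(\mathbf e')$, each of $e_i$ and $e'_i$ is a deterministic bijection-image of the other via $\tilde h_i$. Mutual information is invariant under invertible transformations of either argument, so $I(x_i\,;e'_i) = I(x_i\,;\tilde h_i(e_i)) = I(x_i\,;e_i)$, and likewise $I(x_j\,;e'_i) = I(x_j\,;e_i)$ for $j \neq i$, since replacing $e_i$ by $\tilde h_i(e_i)$ changes nothing about the joint law relevant to $x_j$. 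I would spell out the invariance lemma (if $f$ is injective on the support, $I(U\,;f(V)) = I(U\,;V)$) as the one technical ingredient.

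The main obstacle I anticipate is the rigor of the first part: showing $h_i$ genuinely depends on $e_i$ only. The intuitive argument via ``no active trail'' is clean, but one must be careful that $e'_i$ being ``the exogenous variable of $x_i$'' is actually used — a priori $\mathbf h$ could mix coordinates arbitrarily, and it is precisely the matching property (\textbf{Theorem \ref{matching}}, or its defining characterization that $e'_i$ attaches to $x_i$) together with mutual independence of the $e_j$ that pins down the factorization. A secondary subtlety is that $\tilde h_i$ need not be invertible in general, so the final equalities really do require the stated invertibility hypothesis on $\mathbf g$ (and implicitly on $\mathbf h$); I would flag where each hypothesis enters so the reader sees why the weaker conclusion $e'_i = \tilde h_i(e_i)$ holds unconditionally while the MI-equalities do not.
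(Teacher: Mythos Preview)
Your overall plan matches the paper's: Part 1 establishes the factorization $e'_i=\tilde h_i(e_i)$ by an independence argument, and Part 2 deduces the mutual-information equalities from invertibility of $\tilde h_i$ via the standard ``MI is invariant under bijections'' fact (which the paper proves inline through the chain rule $I(x_i;e_i,e'_i)=I(x_i;e_i)+I(x_i;e'_i\mid e_i)=I(x_i;e'_i)+I(x_i;e_i\mid e'_i)$ applied in both directions). Part 2 of your proposal is fine and essentially identical to the paper.

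Part 1, however, has a gap in the contradiction you set up. Saying that a dependence of $h_i$ on $e_j$ ``would create an active trail from $e_j$ to $x_i$ not through $e_i$'' does not contradict anything in the paper: such trails can exist (e.g.\ when $x_i$ is a descendant of $x_j$), and the result you cite just before \textbf{Theorem \ref{trail MI}} concerns trails between two \emph{true} exogenous variables, not between an $e_j$ and an endogenous $x_i$. Likewise, the premise ``$e'_i\perp\!\!\!\perp\{e_j:j\neq i\}$'' is not part of the hypotheses and is not immediately available; what \emph{is} given is the mutual independence of the components of $\mathbf e'$. The paper's argument uses exactly that: since $e'_j$ must depend on $e_j$ (both are the exogenous variable attached to $x_j$), if $h_i$ also depended on $e_j$ then $e_j$ would be a common parent of $e'_i$ and $e'_j$ in the sense of \textbf{Definition \ref{SCM-parent}}, giving the fork $e'_i\gets e_j\to e'_j$ and hence $I(e'_i;e'_j)>0$, contradicting the independence of the $e'$-components. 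Replace your trail-to-$x_i$ reasoning with this fork-through-$e_j$ contradiction and Part 1 goes through.
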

\begin{proof}
The detailed proof can be found in \textbf{Appendix A}.
\end{proof}

The dimension of $\mathbf e$ and $\mathbf x$ are the same according to the assumption we made at the beginning, therefore invertibility of $\mathbf g$ is easy to achieve. If $\mathbf e'$ is generated from \textbf{Algorithm \ref{GM}}, the assumption of \textbf{Theorem \ref{MI equivalence}} achieves. Then $I(x_j\ ;e_i)=I(x_j\ ;e'_i)$ for any $i,j$. Therefore the $(1, 2, ..., n)$ is also the permutation that can maximize 
\begin{align*}
    \sum_{i=1}^n I(x_i\ ;e'_{i_m})
\end{align*}
under the same constraints in \textbf{Theorem \ref{matching}}. \textbf{Theorem \ref{MI equivalence}} guarantees the effectiveness of \textbf{Algorithm \ref{GM}}.
Although we can not generate the true exogenous vector, depending on the similarity between $\mathbf e'$ and $\mathbf e$, we can still learn the true graph structure without knowing the true exogenous vector in next subsection.

\subsection{Markov Blankets Intersection}

\begin{figure*}[htbp]
\centering

\subfigure[]
{
    \begin{minipage}[b]{.3\linewidth}
        \centering
        \includegraphics[scale=0.8]{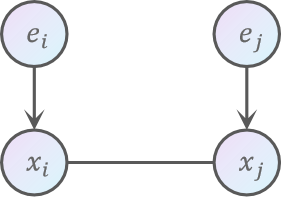}
    \end{minipage}
}
\subfigure[]
{
    \begin{minipage}[b]{.3\linewidth}
        \centering
        \includegraphics[scale=0.8]{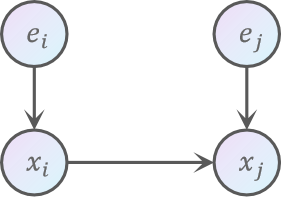}
    \end{minipage}
}
\subfigure[]
{
 	\begin{minipage}[b]{.3\linewidth}
        \centering
        \includegraphics[scale=0.8]{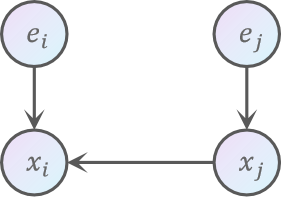}
    \end{minipage}
}
\caption{(a) is CPDAG $\mathcal G$ with exogenous variables. (b) and (c) are augmented graphs $\mathcal G^a_1$ and $\mathcal G^a_2$. Although $\mathcal G_1$ and $\mathcal G_2$ are I-equivalent, their augmented graphs $\mathcal G^a_1$, $\mathcal G^a_2$ are no longer I-equivalent. Since they have different conditional independencies between endogenous and exogenous variables, and only one of them is correct.}\label{paradox}
\end{figure*}

First of all, we give the definition of endogenous Markov blanket and exogenous Markov blanket.
\begin{definition}
Let $\mathcal X$ be the set of endogenous variables and $\mathcal Y$ be the set of exogenous variables. Let $x_i\in\mathcal X, e_i\in\mathcal Y$. We define the endogenous Markov blanket of $x_i$ as the Markov blanket in $\mathcal X$, and the exogenous Markov blanket of $e_i$ as the Markov blanket in $\mathcal X$. We denote the exogenous Markov blanket of $e_i$ as $MB^e_i$.
\end{definition}

The endogenous Markov blankets are the normal Markov blankets in \textbf{Section 2.1}. The exogenous Markov blankets are the Markov blanket of exogenous variables. They are both a subset of $\mathcal X$, since we only care about the structure of endogenous variables.

Although we know the true exogenous variables are the parents of endogenous variables, there are still some differences between $\mathbf e'$ and $\mathbf e$ as we analyze in the last subsection. To find the exogenous Markov blankets, we need to make sure that the exogenous variables we generate also satisfy this relation.
\begin{theorem}
\label{exo parent}
Let $\mathbf e'=(e'_1, e'_2, ..., e'_n)^\top$ is the exogenous vector we generate in \textbf{Algorithm 3}. If $(e'_1, e'_2, ..., e'_n)^\top$ are independent with each other, then $e'_i$ is a parent of $x_i$ and $e'_i$ has no parent.
\end{theorem}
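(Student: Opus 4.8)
The plan is to show two things separately: (i) every $e'_i$ is a parent of $x_i$ in the augmented graph, and (ii) $e'_i$ has no parent. The second part is nearly immediate from the construction in \textbf{Algorithm 3}: by hypothesis the generated vector $\mathbf e'$ has independent components, and by the relation $\mathbf e' = \mathbf P\mathbf W^\top\mathbf g(\mathbf e)$ established before \textbf{Theorem \ref{MI equivalence}}, each $e'_i$ is a deterministic function of $\mathbf e$ alone. Since $\mathbf e'$ is independent across coordinates and each coordinate is fully determined by the true exogenous randomness, there is no endogenous or exogenous variable whose removal would leave $e'_i$ undetermined in the sense of \textbf{Definition \ref{SCM-parent}} — $e'_i$ carries only its own intrinsic randomicity, so it has no parent. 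I would phrase this using \textbf{Theorem \ref{MI equivalence}}: there is $\tilde h_i$ with $e'_i = \tilde h_i(e_i)$, and if $\mathbf g$ is invertible this map is invertible, so $e'_i$ and $e_i$ are informationally equivalent; since $e_i$ has no parent, neither does $e'_i$.

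For part (i), the key is \textbf{Definition \ref{SCM-parent}}: $e'_i$ is a parent of $x_i$ iff $x_i$ cannot be written as a function of $\mathcal X\backslash\{x_i\}$ together with the other exogenous variables, dropping $e'_i$. I would argue by contradiction. Suppose $x_i$ were determined by $\mathcal X\backslash\{x_i\}$ and $\{e'_j : j\neq i\}$ without $e'_i$. Using the invertibility of $\mathbf g$ we can re-express everything in terms of the true exogenous vector $\mathbf e$: each $x_k = g_k(\mathbf e)$ and each $e'_j = \tilde h_j(e_j)$. Then $x_i = g_i(\mathbf e)$ would be a function of $\{g_k(\mathbf e): k\neq i\}$ and $\{e_j : j\neq i\}$, i.e.\ of $\mathbf e$ with the coordinate $e_i$ appearing only through the $g_k$'s, $k\neq i$. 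But by the structure of the reduction leading to equation (\ref{E2E}), $x_i$ genuinely depends on $e_i$ (that was the whole point of assigning $e_i$ to carry $x_i$'s randomicity under causal sufficiency), and since the $e_j$ are independent, no function of $\mathbf e\backslash\{e_i\}$ can reproduce this dependence. Concretely, $I(x_i\ ; e_i \mid \mathbf e\backslash\{e_i\}) > 0$ while any function of $\mathbf e\backslash\{e_i\}$ has zero conditional MI with $e_i$ given $\mathbf e\backslash\{e_i\}$ — contradiction. Hence $e'_i$ is a parent of $x_i$.

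I expect the main obstacle to be handling the direction of determinism cleanly: \textbf{Algorithm 3} gives $\mathbf e'$ as a function of $\mathbf x$, hence of $\mathbf e$, but to invoke \textbf{Definition \ref{SCM-parent}} I need to know that $x_i$ cannot be recovered without $e'_i$, which requires going back through $\mathbf e$ and using that $e'_i = \tilde h_i(e_i)$ is an invertible reparametrization of exactly the one coordinate $e_i$ that $x_i$ needs. The step that must be done carefully is justifying that $\tilde h_i$ depends only on $e_i$ (this is the content of \textbf{Theorem \ref{MI equivalence}}) and that it is invertible when $\mathbf g$ is — the paper already notes the dimensions match, so invertibility of $\mathbf g$ is generic. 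Once these are in place, the substitution argument and the contradiction via nonzero conditional mutual information (equivalently, via \textbf{Theorem \ref{d-sep}} applied to the augmented graph on $\mathcal X\cup\mathcal Y$) close the proof. A secondary subtlety is making sure the independence hypothesis on $\mathbf e'$ is actually used — it is what rules out $e'_i$ secretly encoding some $e_j$, $j\neq i$, which would spoil the "$e'_i$ has no parent and is the sole exogenous parent of $x_i$" conclusion.
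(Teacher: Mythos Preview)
Your route and the paper's diverge in a genuine way. The paper does \emph{not} pass through \textbf{Theorem \ref{MI equivalence}} here at all; it argues purely with trail activity and the independence hypothesis on $\mathbf e'$. First it shows $e'_i$ can have no parent in $\{e'_j:j\neq i\}$ (trivial from independence) or in $\{x_j:j\neq i\}$ (otherwise the trail $e'_i\gets x_j\leftrightharpoons e'_j$ is active given $\emptyset$, contradicting $e'_i\perp\!\!\!\perp e'_j$). Then it decides the orientation of the remaining edge $e'_i-x_i$ by cases: if $x_i$ has some endogenous parent $x_j$, the orientation $x_i\to e'_i$ would make the trail $e'_j\leftrightharpoons x_j\to x_i\to e'_i$ active given $\emptyset$, again contradicting independence; if $x_i$ has no endogenous parent, the linearity of FastICA forces $e'_i=c\,x_i$ for some scalar $c$, so the two orientations are equivalent and one adopts $e'_i\to x_i$ by convention. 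No reduction to the true $\mathbf e$ is ever made.

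Your part (i) has a genuine gap. You assume for contradiction that $x_i$ is determined by $\{x_k:k\neq i\}$ together with $\{e'_j:j\neq i\}$, correctly note that $e_i$ then appears only through the $g_k$'s with $k\neq i$, and then slide to ``no function of $\mathbf e\backslash\{e_i\}$ can reproduce this dependence'' to invoke $I(x_i;e_i\mid \mathbf e\backslash\{e_i\})>0$. But the assumed representation is \emph{not} a function of $\mathbf e\backslash\{e_i\}$: the $g_k$ for $k$ a descendant of $x_i$ carry $e_i$, so $e_i$ re-enters through the children. Concretely, in the two-node chain $x_1=e_1$, $x_2=x_1+e_2$ one has $x_1=x_2-e_2$, a determination of $x_1$ from $\{x_2\}\cup\{e_2\}$ with no explicit $e_1$; yet $I(x_1;e_1\mid e_2)=H(e_1)>0$, so your conditional-MI contradiction does not fire. \textbf{Definition \ref{SCM-parent}} as you are reading it does not by itself block such back-door determinations through descendants, and your argument does not close. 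The paper's trail-based orientation argument is exactly the idea that fills this hole: rather than forbidding alternative determinations of $x_i$, it shows that the wrong orientation $x_i\to e'_i$ would create an active trail between two distinct generated exogenous coordinates, which the independence hypothesis rules out.
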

\begin{proof}
    The detailed proof can be found in \textbf{Appendix A}.
\end{proof}
It seems to be contradictory to the analyses in the last subsection that $e'_i$ can be determined only by $e_i$ and $e'_i\gets e_i\to x_i$ forms a fork structure. Actually, $e_i$, as a hidden variable, is a variable we may never know. Without giving $e_i$, this trail $e'_i\gets e_i\to x_i$ is always active. If we omit the $e_i$ the edge between $e'_i$ and $x_i$ can be any direction. But \textbf{Theorem \ref{exo parent}} states that only $e'_i\to x_i$ achieves.

\textbf{Theorem \ref{exo parent}} indicates that $e'_i$ and $e_i$ have the same exogenous Markov blanket $MB^e_i$. We need to combine exogenous variables and endogenous variables into one graph according to \textbf{Theorem \ref{exo parent}}. 
\begin{definition}
Let $\mathcal G=(\mathcal X, \mathcal E)$ be a DAG, we define the augmented graph $\mathcal G^a=(\mathcal X\cup\mathcal Y, \mathcal E\cup\mathcal E')$ where $\mathcal Y=\{e_1, e_2, ..., e_n\}$ is the set of exogenous variables and $\mathcal E'=\bigcup\{e_i\to x_i\}$.
\end{definition}

\begin{figure}
\centering
\includegraphics[scale=0.5]{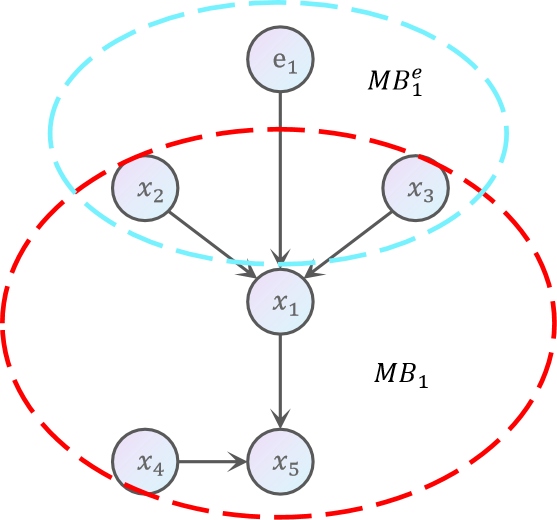}
\caption{An example of Markov blankets intersection on a augmented graph. The intersection of Markov blanket of $x_1$ and $e_1$ is the parents of $x_1$.}\label{MB-intersection}
\end{figure}

After the generating and matching process, we add the exogenous variables and obtain the augmented graph $\mathcal G^a$ of DAG $\mathcal G$.
Different from SCM, augmented graphs do not define the functional relationship or prior distributions, and keep the set of edges $\mathcal E'$, since we only care about the structure of the graph. 

After knowing the relation between endogenous variables and exogenous variables, we can study the exogenous Markov blankets on the augmented graph. Since $e'_i$ is only connected with $x_i$, the relation between $e'_i$ and $x_j$ depends on the relation between $x_i$ and $x_j$. 

The child of $x_i$ is not in $MB^e_i$ since $e_i\to x_i\to x_j$ forms a chain structure. If $x_j$ is a parent of $x_i$, $e'_i\to x_i\gets x_j$ forms a V-structure, then $x_j\in MB^e_i$. However, there are some undirected edges in CPDAG. As we discussed in \textbf{Section 2.1}, if CPDAG $\mathcal G$ contains the undirected edges $x_i-x_j$, then there must be two I-equivalents DAG $\mathcal G_1=(\mathcal X_1, \mathcal E_1)$ and $\mathcal G_2=(\mathcal X_2, \mathcal E_2)$, such that $x_i\to x_j\in\mathcal E_1$ and $x_i\gets x_j\in\mathcal E_2$. But after adding the exogenous variables, $\mathcal G^a_1$, $\mathcal G^a_2$ may have different V-structures. It seems to be contradictory to the I-equivalence since the augmented graph is only a definition, it can not change the conditional independencies in the graph. $\mathcal G_1$ and $\mathcal G_2$ are equivalent under the property of conditional independence, but they are not equivalent in augment graphs.

For example, let $e_1, e_2$ be two exogenous variables that follow the standard Gaussian distributions, $x_1=2e_1$ and $x_2=x_1+e_2^2$ (Figure \ref{paradox}). Using the definition of I-equivalence, $x_1, x_2$ has no conditional independency, therefore $x_1\to x_2$ and $x_1\gets x_2$ are I-equivalent. If given $x_1=c$ as a constant, $x_2=e^2_2+c$ is independent with $e_1$. If given an empty set,  $x_2=x_1+e^2_2=2e_1+e^2_2$ is not independent with $e_1$. Therefore we have the facts that $x_2\perp\!\!\!\perp e_1\mid x_1$ and $x_2\not\!\perp\!\!\!\perp e_1\mid\emptyset$. For $e_2$, we can also infer $x_1\perp\!\!\!\perp e_2\mid\emptyset$ and $x_1\not\!\perp\!\!\!\perp e_2\mid x_2$ in a similar way. Thus $e_1\to x_1\to x_2$ forms a chain and $x_1\to x_2\gets e_2$ form a V-structure which is exactly the $\mathcal G^a_1$ (Figure \ref{paradox} (b)), and $\mathcal G^a_1$ is not I-equivalent with $\mathcal G^a_2$.

The reason for this paradox is that under the SCM, only $x_1\to x_2$ is correct, and $x_1\gets x_2$ can not be achieved under \textbf{Definition \ref{SCM-parent}} since $x_1$ is not determined by $x_2$. Using the conclusion of \textbf{Theorem \ref{exo parent}}, we may add the $x_2$ into the exogenous Markov blanket of $e_1$ because $x_2$ is spouse node of $e_1$ in other I-equivalence DAG, but $x_2\notin MB^e_1$ since it does not satisfy \textbf{Theorem \ref{total conditioning}}, i.e. $x_2\perp\!\!\!\perp e_1\mid x_1$. Thus we have the following theorem:
\begin{theorem}
\label{exo MB}
Let $e_i$ be the exogenous variables with respect to the endogenous variables $x_i$. Then $x_j\in MB^e_i$ if and only if $x_j$ is a parent of $x_i$ under \textbf{Definition \ref{SCM-parent}}.
\end{theorem}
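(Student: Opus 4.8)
The plan is to pass to the augmented graph $\mathcal{G}^a$ obtained by attaching the matched exogenous vector $\mathbf e'$ to the endogenous DAG $\mathcal G$, and to read $MB^e_i$ off the graph structure directly, exploiting the fact --- already noted in Section~4.1 --- that d-separation and conditional independence agree on $\mathcal{G}^a$ once the endogenous conditionals are regarded as Dirac distributions, so that Theorem~\ref{d-sep} applies to $\mathcal{G}^a$. First I would collect three structural facts about $e_i$ (in the algorithm, the matched variable $e'_i$) inside $\mathcal{G}^a$: (i) $e_i$ has no parent and $x_i$ is a child of $e_i$, which is exactly Theorem~\ref{exo parent}; (ii) $x_i$ is the \emph{only} child of $e_i$, because by Theorem~\ref{MI equivalence} $e'_i$ is a function of the single true exogenous variable $e_i$, and in the substitution construction leading to~(\ref{E2E}) a true exogenous variable reaches any other endogenous node $x_k$ only through $x_i$, so $e_i$ is not a direct parent of any $x_k$ with $k\neq i$; (iii) the only parent of $x_i$ lying in $\mathcal Y$ is $e_i$ itself, by the causal-sufficiency simplification $|\mathcal X|=|\mathcal Y|$ from Section~4.1. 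Together these collapse the Markov blanket of $e_i$ in $\mathcal{G}^a$ --- its children, parents, and spouses --- to $\{x_i\}\cup Pa_{x_i}$, with $Pa_{x_i}$ taken in $\mathcal G$.

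With that reduction the two directions are short. For the sufficiency direction: if $x_j$ is a parent of $x_i$ under Definition~\ref{SCM-parent}, then $x_j\to x_i\in\mathcal E$, so $e_i\to x_i\gets x_j$ is a V-structure, $x_j$ is a spouse of $e_i$, hence $x_j\in MB^e_i$. For the necessity direction: take $x_j\in MB^e_i$ with $j\neq i$ (the case $j=i$ is vacuous, since $x_i$ is the child of $e_i$ but cannot be its own parent). By (i), $e_i$ has no parent, so $x_j$ is not a parent of $e_i$; by (ii), the unique child of $e_i$ is $x_i$, so $x_j$ is not a child of $e_i$; therefore $x_j$ must be a spouse of $e_i$, i.e.\ some $w$ satisfies $e_i\to w\gets x_j$, and by (ii) necessarily $w=x_i$. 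Thus $x_j\to x_i\in\mathcal E$, which by the definition of the SCM DAG means $x_j$ is a parent of $x_i$ under Definition~\ref{SCM-parent}.

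The delicate point, and the one I would write out most carefully, is item (ii) combined with the ``paradox'' highlighted just before the theorem: a CPDAG may carry an undirected edge $x_i-x_j$, so one might fear that orienting it as $x_j\to x_i$ would spuriously drop $x_j$ into $MB^e_i$. The resolution is that $MB^e_i$ is a property of the joint law of $(\mathbf x,\mathbf e')$ --- equivalently of d-separation in the true $\mathcal{G}^a$ --- and hence is blind to an orientation the algorithm has not yet committed to: if the SCM edge is $x_i\to x_j$, then $e_i\to x_i\to x_j$ is a chain, so $x_j\perp\!\!\!\perp e_i\mid x_i$ and $x_j\notin MB^e_i$ (exactly as in the worked example $x_1=2e_1$, $x_2=x_1+e_2^2$, where $x_2\perp\!\!\!\perp e_1\mid x_1$), whereas only the genuine SCM parents of $x_i$ survive as spouses of $e_i$. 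The remaining ingredients --- faithfulness of $\mathcal{G}^a$, and independence of the generated $\mathbf e'$ needed to invoke Theorem~\ref{exo parent} --- are either assumed or already established, so I would cite Theorems~\ref{d-sep}, \ref{exo parent}, and~\ref{MI equivalence} rather than re-derive them.
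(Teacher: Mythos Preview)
Your argument is correct and in fact tidier than the paper's. Both proofs rest on the same three structural facts about $e_i$ in $\mathcal G^a$ (no parent, unique child $x_i$, spouses are exactly the parents of $x_i$), and for the ``if'' direction both simply read off the V-structure $e_i\to x_i\gets x_j$. The difference lies in the ``only if'' direction. You invoke Theorem~\ref{d-sep} once at the level of the whole augmented graph to identify the probabilistic $MB^e_i$ with the graphical Markov blanket of $e_i$ in the \emph{true} SCM DAG, so that any $x_j\in MB^e_i$ with $j\neq i$ is immediately forced to be a spouse of $e_i$ through $x_i$, i.e.\ an SCM parent. The paper instead entertains the possibility that $x_j$ is a parent of $x_i$ in some I-equivalent DAG but a child in the SCM, and then carries out an explicit three-case trail analysis in $\mathcal G^a$ to produce a concrete blocking set (namely $Pa_j$) showing $e_i\perp\!\!\!\perp x_j\mid Pa_j$, contradicting the spouse property. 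Your route buys brevity and conceptual clarity --- the I-equivalence ``paradox'' is dispatched in one line by observing that $MB^e_i$ is a property of the joint law and hence of the true $\mathcal G^a$ only. The paper's route buys a more self-contained argument that exhibits an explicit separating set and does not lean on the reader already knowing that graphical and probabilistic Markov blankets coincide under faithfulness; if you submit your version, it would be worth adding one sentence making that identification explicit rather than leaving it implicit in the phrase ``read $MB^e_i$ off the graph structure.''
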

\begin{proof}
    The detailed proof can be found in \textbf{Appendix A}.
\end{proof}
Combing \textbf{Theorem \ref{exo parent}}, we can conclude that the theorem above also achieves for $\mathbf e'$.

These analyses also indicate that we can not find the undirected edges of CPDAG by finding the exogenous Markov blankets, but we can learn a DAG by intersecting endogenous Markov blankets and exogenous Markov blankets (Figure \ref{MB-intersection}). We give the intersection algorithm in \textbf{Algorithm 4}.

\begin{algorithm}
\renewcommand{\algorithmicrequire}{\textbf{Input:}}
\renewcommand{\algorithmicensure}{\textbf{Output:}}
\caption{Intersection Algorithm}
\label{Intersection}
\begin{algorithmic}[1]
\REQUIRE $\mathcal{MB}=\{MB_1, ..., MB_n\}$, $\mathbf X$, $\mathbf E$, $n$, $\beta$
\FOR{$i$ from $1$ to $n$}
\STATE Initialize $MB^e_i=\emptyset$;
\STATE $S=MB_i$;
\WHILE{$MB^e_i$ is changed}
\STATE Find the node $x_j\in S$ that maximizes $I(e_i\ ; x_j\mid MB^e_i)$;
\IF{$I(e_i\ ; x_j\mid MB^e_i)>\beta$}
\STATE Add $x_j$ in $MB^e_i$;
\STATE Remove $x_j$ from $S$;
\ENDIF
\ENDWHILE
\FOR{node $x_k$ in $MB^e_i$}
\IF{$x_k\neq x_i$ and $I(e_i\ ; x_k\mid MB^e_i\backslash \{x_k\})<\beta$}
\STATE Remove $x_k$ from $MB^e_i$;
\ENDIF
\ENDFOR
\FOR{node $x_l$, $l\neq i$, in $MB^e_i$}
\STATE Add the edge $x_l \to x_i$ in $\mathcal G$;
\ENDFOR
\ENDFOR
\ENSURE DAG $\mathcal G$.
\end{algorithmic}
\end{algorithm}

\begin{figure*}[htbp]
\centering

\subfigure[]
{
    \begin{minipage}[b]{.3\linewidth}
        \centering
        \includegraphics[scale=0.55]{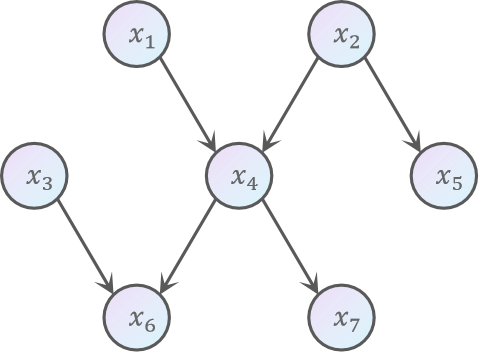}
    \end{minipage}
}
\subfigure[]
{
    \begin{minipage}[b]{.3\linewidth}
        \centering
        \includegraphics[scale=0.55]{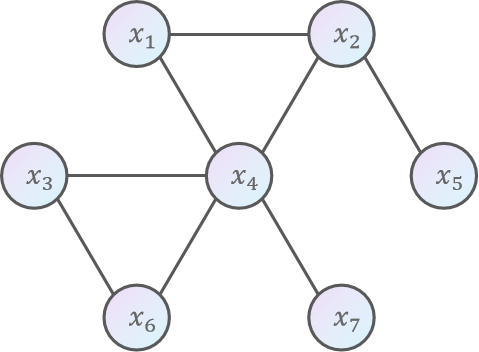}
    \end{minipage}
}
\subfigure[]
{
 	\begin{minipage}[b]{.3\linewidth}
        \centering
        \includegraphics[scale=0.55]{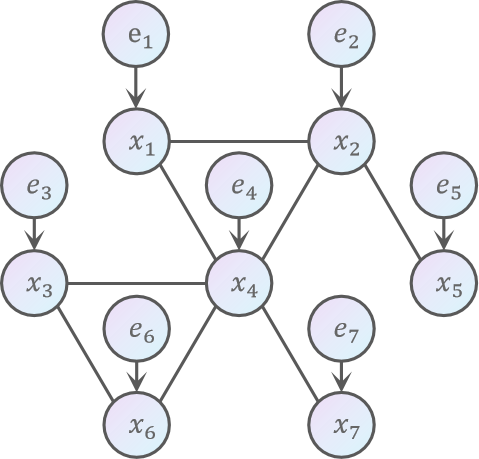}
    \end{minipage}
}

\subfigure[]
{
    \begin{minipage}[b]{.4\linewidth}
        \centering
        \includegraphics[scale=0.55]{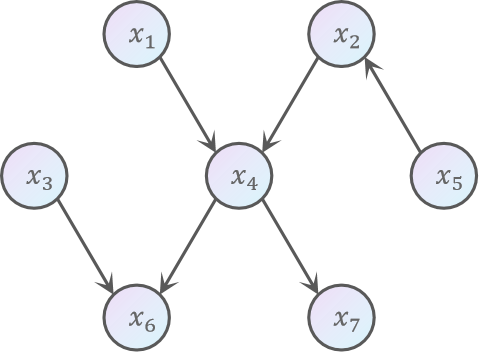}
    \end{minipage}
}
\subfigure[]
{
 	\begin{minipage}[b]{.4\linewidth}
        \centering
        \includegraphics[scale=0.55]{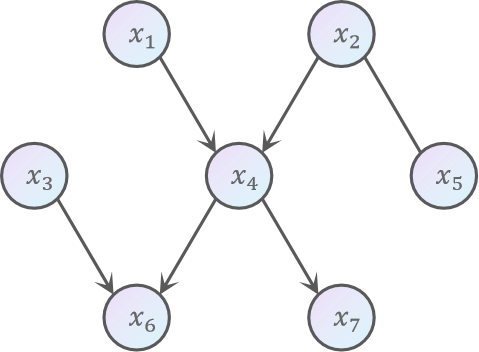}
    \end{minipage}
}

\subfigure[]
{
 	\begin{minipage}[b]{.3\linewidth}
        \centering
        \includegraphics[scale=0.55]{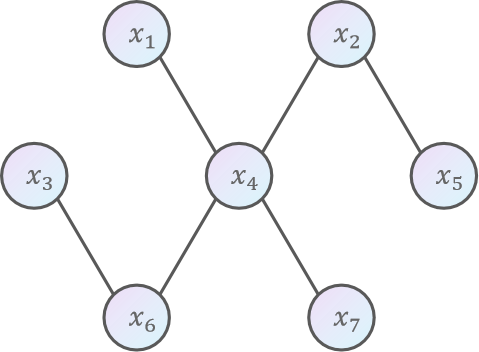}
    \end{minipage}
}
\subfigure[]
{
    \begin{minipage}[b]{.3\linewidth}
        \centering
        \includegraphics[scale=0.55]{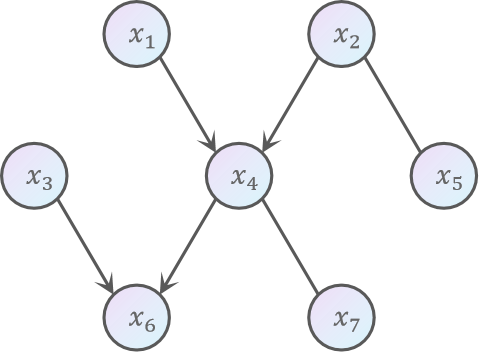}
    \end{minipage}
}
\subfigure[]
{
 	\begin{minipage}[b]{.3\linewidth}
        \centering
        \includegraphics[scale=0.55]{EEMBI-PC6.png}
    \end{minipage}
}
\caption{(a) is original DAG. EEMBI and EEMBI-PC aim to find the CPDAG of (a). (a)$\to$(b)$\to$(c)$\to$(d)$\to$(e) is the process of EEMBI and (a)$\to$(b)$\to$(c)$\to$(f)$\to$(g)$\to$(h) is the process of EEMBI-PC. After learning the Markov blankets, we can connect the node to all the nodes in its Markov blanket to obtain (b). Then generating and matching algorithm assigns the exogenous variable for every $x_i$ to obtain the augmented graph (c). After finding the exogenous Markov blankets for $e_i$, we can exclude the edges between nodes and their spouses $x_1-x_2$, $x_3-x_4$, and return the DAG (d). Finally, we turn the DAG into the CPDAG (e). EEMBI-PC turns the DAG into a skeleton (f) and uses the PC algorithm to find all the V-structures $x_3\to x_6\gets x_4$, $x_1\to x_4\gets x_2$ in (g). Applying Meek's rules, we have the CPDAG of (h).}\label{EEMBI and EEMBI-PC}
\end{figure*}

The Intersection algorithm starts at an augmented graph constructed by the Markov blankets. We assume that we already know the Markov blankets of all nodes $\mathcal{MB}$, and use $\mathcal{MB}$ as one of the inputs. $\mathbf X$, and $\mathbf E$ are the instances of endogenous variables and exogenous variables, they have the same dimension $n$ and the same number of instances. $\beta$ is a threshold similar to $\alpha$ in \textbf{Algorithm \ref{Improved IAMB}}.

Line 1 to line 15 is the modification of the forward phase and backward phase in improved IAMB. By \textbf{Theorem \ref{exo MB}}, we have $MB^e_i\subset MB_i$, therefore we can totally shrink the searching area from all the nodes $S=\{1, 2, ..., n\}$ to the endogenous Markov blanket $MB_i$. In the forward and backward phase, we also use kNN-CMI to estimate the CMI. We can find the parents of $x_i$ efficiently by assessing every conditional independency of $e_i$ and $x_j$ given $MB^e_i$ in the forward phase. 
Although some children or spouses of $x_i$ may be added in $MB^e_i$ accidentally, they can be removed in the backward phase according to \textbf{Theorem \ref{MB-def}}. After the forward and backward phases, we connect the nodes in $MB^e_i$ with $x_i$ in line 17. For node $x_i$, we only connect $x_i$ with its parents and leave its children behind. However, when we consider $x_j\in Ch_i$, $x_i$, as a parent of $x_j$, can be connected under the same process. In this way, the connection of the $x_i$ with its spouses is excluded from the Markov blanket. Therefore, according to \textbf{Theorem \ref{effectiveness of improved IAMB}} and \textbf{Theorem \ref{exo parent}}, \textbf{Algorithm \ref{Intersection}} returns the DAG whose connections follow \textbf{Definition \ref{SCM-parent}}.

Combing all the algorithms above, we can give the main algorithms in this paper in \textbf{Algorithm \ref{EEMBI}} and \textbf{Algorithm \ref{EEMBI-PC}}.

\begin{algorithm}
\renewcommand{\algorithmicrequire}{\textbf{Input:}}
\renewcommand{\algorithmicensure}{\textbf{Output:}}
\caption{EEMBI}
\label{EEMBI}
\begin{algorithmic}[1]
\REQUIRE $\mathbf X$, $\alpha$, $\beta$
\STATE Use \textbf{improved IAMB} on $\mathbf X$ based on the threshold $\alpha$ to obtain Markov blankets set $\mathcal{MB}$;
\STATE Compute the instances of exogenous variables $\mathbf E$ by using \textbf{Generating and Matching Algorithm} and construct the augmented graph;
\STATE Build the DAG $\mathcal G$ using \textbf{Intersection Algorithm};
\STATE Turn the DAG into the CPDAG by Meek's rules;
\ENSURE CPDAG $\mathcal G$.
\end{algorithmic}
\end{algorithm}

\begin{algorithm}
\renewcommand{\algorithmicrequire}{\textbf{Input:}}
\renewcommand{\algorithmicensure}{\textbf{Output:}}
\caption{EEMBI-PC}
\label{EEMBI-PC}
\begin{algorithmic}[1]
\REQUIRE $\mathbf X$, $\alpha$, $\beta$
\STATE Use \textbf{improved IAMB} on $\mathbf X$ based on the threshold $\alpha$ to obtain Markov blankets set $\mathcal{MB}$;
\STATE Compute the instances of exogenous variables $\mathbf E$ by using \textbf{Generating and Matching Algorithm} and construct the augmented graph;
\STATE Learn the DAG from \textbf{Intersection Algorithm} and turn all the directed edges into undirected edges to obtain the skeleton of graph;
\STATE Find all the V-structure by PC algorithm based on the skeleton of graph;
\STATE Orient the rest of the edges using Meek's rules;
\ENSURE CPDAG $\mathcal G$.
\end{algorithmic}
\end{algorithm}

In the last step of the EEMBI, we keep the skeleton and V-structure of the DAG and turn other edges into undirected edges. Then we apply Meek's rules to it. In this way, we successively turn a DAG into a CPDAG.

EEMBI-PC follows the standard steps of constraint-based methods: Learn the skeleton; Find the V-structures; Orient the rest of the edges. The difference of EEMBI-PC starts at line 3. EEMBI-PC uses \textbf{Algorithm \ref{Intersection}} to learn the skeleton and uses the PC algorithm to learn V-structures. Having the skeleton of the graph as prior knowledge, the PC algorithm does not need to traverse all the subsets of $\mathcal X$ to find V-structure. For any possible V-structure $x_i-x_j-x_k$, PC algorithm only traverses all the subsets of $Pa_i\cup Ch_i\cup Pa_k\cup Ch_k$, which is more efficient than original PC algorithm. The visualization of the EEMBI and EEMBI-PC is shown in Figure \ref{EEMBI and EEMBI-PC}. It is easy to conclude that 

\begin{corollary}
EEMBI and EEMBI-PC return the CPDAG of the data set $\mathbf X$.
\end{corollary}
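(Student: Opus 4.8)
The plan is to verify that each of the two algorithms produces a graph whose skeleton and V-structures coincide with those of the true CPDAG $\mathcal G$; by \textbf{Theorem \ref{equivalent}} this certifies that the output is I-equivalent to $\mathcal G$, and since both algorithms close under Meek's rules the result is in fact the CPDAG itself. So the corollary reduces to chasing the guarantees already proved for the four sub-algorithms and stitching them together.

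\textbf{First I would} handle the common prefix (steps 1--3 of both EEMBI and EEMBI-PC). By \textbf{Corollary \ref{effectiveness of improved IAMB}}, improved IAMB returns the true endogenous Markov blankets $\mathcal{MB}=\{MB_1,\dots,MB_n\}$ for small enough $\alpha$. By \textbf{Theorem \ref{MI equivalence}} (invoking invertibility of $\mathbf g$, which holds since $\dim\mathbf e=\dim\mathbf x$) together with \textbf{Theorem \ref{matching}}, the Generating and Matching algorithm outputs a vector $\mathbf e'$ that is correctly matched, i.e. $e'_i$ is the exogenous variable of $x_i$; and by \textbf{Theorem \ref{exo parent}} each $e'_i$ is a parent of $x_i$ with no parents of its own, so $e'_i$ and $e_i$ share the same exogenous Markov blanket $MB^e_i$. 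Then \textbf{Theorem \ref{exo MB}} identifies $MB^e_i$ as exactly the parents of $x_i$ in the sense of \textbf{Definition \ref{SCM-parent}}, and since $MB^e_i\subseteq MB_i$ the Intersection algorithm (whose forward/backward structure is justified by \textbf{Theorem \ref{total conditioning}} and \textbf{Theorem \ref{MB-def}} restricted to the search space $MB_i$, analogous to improved IAMB) recovers these parent sets exactly for small enough $\beta$. Running line 17 over all $i$ then draws precisely the edge $x_l\to x_i$ whenever $x_l$ is an SCM-parent of $x_i$; this is a DAG $\mathcal G$ consistent with \textbf{Definition \ref{SCM-parent}}, and its skeleton equals the skeleton of the true CPDAG because the union over all nodes of ``parents'' plus ``children'' sweeps out every adjacency, with spouse-only edges correctly omitted.

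\textbf{Next I would} close out the two branches separately. For EEMBI: the DAG $\mathcal G$ from the Intersection algorithm already has the correct skeleton, and one must argue its V-structures are the correct ones before applying Meek's rules. Here the key point, which is exactly the ``paradox'' discussed around Figure \ref{paradox}, is that the exogenous construction forces the causally-correct orientation on every edge that corresponds to an undirected edge of the true CPDAG, so no spurious V-structure and no missing V-structure can arise; the directed edges of $\mathcal G$ restricted to the true CPDAG's V-structure triples agree with $\mathcal G$'s own V-structures. Then step 4 (retain skeleton and V-structures, orient the rest by Meek's rules) is precisely the standard DAG-to-CPDAG procedure, and by \textbf{Theorem \ref{equivalent}} and \textbf{Definition \ref{Meek}} it yields the CPDAG. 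For EEMBI-PC: step 3 discards orientations to get the correct skeleton; step 4 runs the PC V-structure phase on that skeleton, which by the d-separation criterion (\textbf{Theorem \ref{d-sep}}) and the characterization of V-structures recovers exactly the true V-structures (the restriction of the search to $Pa_i\cup Ch_i\cup Pa_k\cup Ch_k$ is sound because any separating set can be taken inside the union of Markov-blanket-adjacent nodes); step 5 applies Meek's rules. Again by \textbf{Theorem \ref{equivalent}} the output is the CPDAG.

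\textbf{The main obstacle} is the V-structure bookkeeping in the EEMBI branch: one must be careful that the DAG returned by the Intersection algorithm, although it has the right skeleton and SCM-consistent orientations, genuinely has the \emph{same} V-structures as the true CPDAG and not extra ones created by the deterministic choices the SCM makes on edges the CPDAG leaves undirected. The resolution is that those SCM choices never introduce a new collider incompatible with $\mathcal I(P)$ — this is the content of the Figure \ref{paradox} discussion and of \textbf{Theorem \ref{exo MB}} — so restricting attention to triples that form V-structures in the true CPDAG, the orientations agree; edges undirected in the CPDAG are oriented in $\mathcal G$ but never as part of a new V-structure, hence Meek's rules (which only propagate forced orientations) recover exactly the CPDAG. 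Everything else is a direct appeal to the theorems already established.
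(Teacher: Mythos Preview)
Your proposal is correct and follows the same route the paper implicitly takes: the paper states the corollary immediately after describing the two algorithms with the phrase ``It is easy to conclude that'' and gives no separate proof, relying on exactly the chain of results (Corollary~\ref{effectiveness of improved IAMB}, Theorems~\ref{matching}--\ref{exo MB}) you invoke. Your ``main obstacle'' paragraph is more caution than necessary: once Theorem~\ref{exo MB} guarantees that the Intersection algorithm recovers the true data-generating DAG, that DAG is by construction a member of the Markov equivalence class, so by Theorem~\ref{equivalent} it already has exactly the CPDAG's V-structures and the standard DAG-to-CPDAG conversion is immediate.
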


Finally, we discuss the complexity of EEMBI and EEMBI-PC.

In the forward and backward phases of improved IAMB, we need to compute the CMI for $n^2$ times for one node $x_i$ in the worst case. Thus finishing the first two phases for all nodes needs $O(n^3)$ operations. And the checking phase only needs $O(n^2)$ operations, and the computational complexity of improved IAMB is $O(n^3)$. In \textbf{Algorithm \ref{GM}}, it needs $O(n^2)$ time to compute the cost matrix $\mathbf C$ and $O(n^3)$ to apply the Jonker-Volgenant algorithm to solve equation (\ref{maximize MI}). Thus \textbf{Algorithm \ref{GM}} costs $O(n^3)$ operations. In the intersection algorithm, since we shrink the area to Markov blanket, finding the parents and children for $x_i$ only needs $O(|MB|^2)$ operations where $|MB|=max_i(|MB_i|)$, and complexity of the \textbf{Algorithm \ref{Intersection}} is $O(n\times|MB|^2)$. Therefore the computational complexity of EEMBI is $O(n^3)+O(n^3)+O(n\times|MB|^2)=O(n^3)$. 

For EEMBI-PC, it has the same complexity as EEMBI in the first three steps. The PC algorithm needs to test every subset of $Pa_i\cup Ch_i\cup Pa_k\cup Ch_k$ for every possible V-structure $x_i-x_j-x_k$, finding V-structure process needs $O(c\times 2^{2|Pa\cup Ch|})$ time where $c$ is the number of $x_i-x_j-x_k$ in skeleton, and $|Pa\cup Ch|=max_i(|Pa_i\cup Ch_i|)$. In conclusion, the computation complexity of the whole EEMBI-PC algorithm is $O(n^3+c\times 2^{2|Pa\cup Ch|})$.

\section{Experiment}

In this section, we provide several experiments to state the effectiveness of our proposed algorithms. Firstly, we introduce the setup of experiments. Then we evaluate the EEMBI and EEMBI-PC on discrete and continuous datasets. Finally, we study the influence of hyperparameters and give the results of ablation studies. We do all the experiments on CPU i7-12700H with 24G RAM. The code of the proposed algorithms can be found in \href{https://github.com/ronedong/EEMBI}{https://github.com/ronedong/EEMBI}

\begin{table*}[htbp]
\centering
\fontsize{8}{12}\selectfont
\caption{Basic information about datasets}\label{datasets}
\setlength{\tabcolsep}{3mm}{
\begin{tabular}{|c|c|c|c|c|c|c|c|}
\hline
\multirow{2}{*}{Data set} & Sample   & Number of  & \multirow{2}{*}{$\beta$} & \multirow{2}{*}{Data set} & Sample   & Number of  & \multirow{2}{*}{$\beta$} \cr  
                          & Size   & Nodes      &                           &                         & Size   & Nodes      &
\cr\hline 
\hline
ALARM              & 3000    & 37     & 0.01  & BARLEY               & 3000    & 48       & 0.01   \cr\hline
CHILD              & 3000    & 20     & 0.01  & INSURANCE            & 3000    & 27       & 0.01   \cr\hline
MILDEW             & 3000    & 35     & 0.01  & HailFinder           & 3000    & 56       & 0.01   \cr\hline
SACHS              & 7466    & 11     & 0.05  & DREAM3-Ecoli 1,2     & 483     & 50       & 0.05   \cr\hline
DREAM3-Yeast 1,2,3 & 483     & 50     & 0.05  & Education 1,2,3,4,5  & 1000    & 50       & 0.05   \cr\hline
\end{tabular}}
\end{table*}

\subsection{Experimental Setup}
We evaluate the performance of EEMBI and EEMBI-PC on six discrete datasets: ALARM, BARLEY, CHILD, INSURANCE, MILDEW, and HailFinder (\cite{bnlearn}); Additionally, we conduct experiments on eleven continuous datasets: SACHS (\cite{sachs2005causal}), five dream3 datasets (Dream3-Ecoli 1, Dream3-Ecoli2, Dream3-Yeats 1, Dream3-Yeast 2, Dream3-Yeast 3) (\cite{dream3}), as well as five education datasets (Education Net 1,2,3,4,5). An overview of the basic information for these seventeen datasets is provided in Table \ref{datasets}. We compare EEMBI and EEMBI-PC against seven baselines on the discrete datasets: 
\begin{itemize}
    \item PC, Fast Causal Inference (FCI) (\cite{10.5555/2074158.2074215}), Grow-Shrink (GS) (\cite{margaritis2003learning}), and Constraint-based causal Discovery from NOnstationary Data (CDNOD) (\cite{huang2020causal}): they are all constraint-based models. PC algorithm uses G-test as the conditional independent test score;
    \item Greedy Interventional Equivalence Search (GIES) (\cite{hauser2012characterization}): it is a score-based methods, and is the modification of GES;
    \item MMHC (\cite{tsamardinos2006max}): mixture methods mentioned in Section 1;
    \item Greedy Relaxation of the Sparsest Permutation (GRaSP) (\cite{lam2022greedy}): a permutation-based method. 
\end{itemize} 
In addition to the seven baseline methods used on discrete datasets, we include three additional algorithms that are specifically designed for continuous data as baselines for the proposed methods:
\begin{itemize}
    \item Direct Linear Non-Gaussian Acyclic Model (DirectLiNGAM) (\cite{shimizu2011directlingam}) and Causal Additive models (CAM) (\cite{buhlmann2014cam}): they are constraint functional causal models, and DirectLiNGAM is the improvement of LiNGAM;
    \item Non-combinatorial Optimization via Trace Exponential and Augmented lagRangian for Structure learning (NOTEARS) (\cite{zheng2018dags}): NOTEARS is a score-based method, and it is also one of the state-of-the-art causal structure learning algorithms. 
\end{itemize} 
PC, GIES, and GS are achieved by the Python package \texttt{causal discovery box} (\cite{cdt}). FCI, GRaSP, and CDNOD are achieved by the package \texttt{causal-learn} (\cite{causal-learn}). DirectLiNGAM is achieved with the package \texttt{LiNGAM} (\cite{LiNGAM}). We implement MMHC and NOTEARS by using the code provided in their original papers. 

We use the adjacency matrix $\mathbf A$ to represent the CPDAG of dataset $\mathbf X$. The adjacency matrix is constructed as follows:
\begin{itemize}
\item If $x_i\to x_j\in\mathcal E$, then $A_{ij}=1$ and $A_{ji}=0$;
\item If $x_i-x_j\in\mathcal E$, then $A_{ij}=A_{ji}=1$;
\item If $x_i$ and $x_j$ are not connected, $A_{ij}=A_{ji}=0$.
\end{itemize}

All these causal structure learning algorithms return the adjacency matrices of the dataset. We use Structural Hamming Distance (SHD) and Area Under the Precision Recall curve (AUPR) to measure the difference between predicted and true adjacency matrices. They are widely used metrics in causal structure learning. SHD counts the number of different edges between predicted and true adjacency matrices, 
\begin{align*}
    SHD(\mathbf A, \mathbf B)=\sum_i\sum_j \left| A_{ij}-B_{ij}\right|,
\end{align*}
where $\mathbf A$ and $\mathbf B$ are two adjacency matrices. AUPR computes the area under the curve which is constructed by precision: $\frac{TP}{TP+FP}$ and recall: $\frac{TP}{TP+FN}$ with different causation thresholds where $TP, FP, FN$ are short for True Positive, False Positive, and False Negative. All the causal structure learning algorithms aim to learn adjacency matrices that have lower SHDs and higher AUPRs with true adjacency matrix. 

\textbf{Data processing:} For discrete datasets, we encode the categorical values to integer values based on their orders. For continuous datasets, we apply min-max normalization and turn all the values of features in [0,1]. Since we only compare CPDAG in all experiments, we turn the true adjacency matrices of DAG into CPDAG using the function \texttt{dag2cpdag} in pcalg package. 

\textbf{Hyperparameters:} EEMBI and EEMBI-PC have only two hyperparameters: $\alpha$ in \textbf{Algorithm \ref{Improved IAMB}} and $\beta$ in \textbf{Algorithm \ref{Intersection}}. We fix the $\alpha=0.01$, and we set $\beta=0.01$ on discrete datasets and $\beta=0.05$ on continuous datasets. We use all the instances of SACHS and dream3 datasets. However, restricted by the computational complexity and memory of CPU, we only sample part of instances to do the causal structure learning in all discrete datasets and Education datasets. The numbers of instances we sample are shown in Sample Size of Table \ref{datasets}. 

For every structure learning algorithm, we sample from each dataset three times and feed the instances to algorithms to obtain an adjacency matrix for every sampling. After computing the SHD and AUPR metrics for every sampling, we combine all the results and compute the mean and standard deviation for every algorithm on every dataset. We show the SHD results in Table \ref{discrete SHD}, Table \ref{Education SHD} and Table \ref{Dream3 SHD} in the form of $mean\ (standard\  deviation)$. However, we use all the instances in SAHCS and Dream3 datasets, there is no randomicity in these experiments. Therefore, we only run methods once on SACHS and Dream3 and show the results without standard deviations. We highlight the lowest SHD result in each dataset for emphasis. The AUPR results are shown in bar graphs in Figure \ref{discrete AUPR}, Figure \ref{Education AUPR}, and Figure \ref{Dream3 AUPR}. The detailed mean and standard deviation values of AUPR can be found in \textbf{Appendix B}.

\subsection{Discrete datasets}

\begin{table*}[htpb]
\centering
\fontsize{10}{22}\selectfont
\caption{SHD on discrete datasets}\label{discrete SHD}
\setlength{\tabcolsep}{2mm}{
\begin{tabular}{ccccccc}
\bottomrule[1.5pt]
\hline
\multirow{2}{*}{Algorithms} & \multirow{2}{*}{ALARM} & \multirow{2}{*}{BARLEY} & \multirow{2}{*}{CHILD} & \multirow{2}{*}{INSURANCE} & \multirow{2}{*}{MILDEW} & \multirow{2}{*}{HailFinder} \cr
 & \cr
\bottomrule
PC      & 43.7(1.25) & 105.0(0.82) & 34.7(2.87) & 85.0(2.94) & 46.7(2.62) & 123.7(2.49)  \cr
FCI     & 67.0(6.98) & 182.3(3.30) & 46.0(2.16) & 103.7(3.68) & 87.7(3.30)  & 205.3(15.33) \cr
GIES    & 60.0(7.79) & 168.0(11.22) & 44.3(3.30) & 107.7(3.30) & 96.3(3.86) & 140.3(6.02)  \cr
MMHC    & 45.7(2.36) & 119.3(4.50) & 40.3(1.25) & 83.7(3.68) & 47.3(2.87) & 115.3(3.40) \cr
GS      & 57.7(1.70) & 151.3(5.73) & 42.7(4.92) & 98.0(6.48) & 48.7(1.25) & 146.3(6.60) \cr
GRaSP   & 55.0(4.97) & 145.7(11.84) & 41.3(1.70) & 85.0(9.20) & 83.0(14.24) & 133.7(13.10) \cr
CDNOD   & 56.0(1.41) & 157.3(3.40) & 45.0(0.82) & 97.0(3.56) & 55.0(2.16) & 161.7(2.49) \cr
EEMBI   & 47.0(1.63) & 118.3(3.77) & 29.3(0.47) & 68.7(1.89) & 49.7(2.49) & \textbf{92.3(6.65)} \cr
EEMBI-PC& \textbf{38.7(1.25)} & \textbf{95.7(4.99)} & \textbf{29.0(0.82)} & \textbf{58.0(2.94)} & \textbf{46.3(6.65)} & 93.0(0.82) \cr
 \bottomrule[1.5pt]
\end{tabular}}
\end{table*}

\begin{figure*}[htpb]
\centering
\includegraphics[scale=0.25]{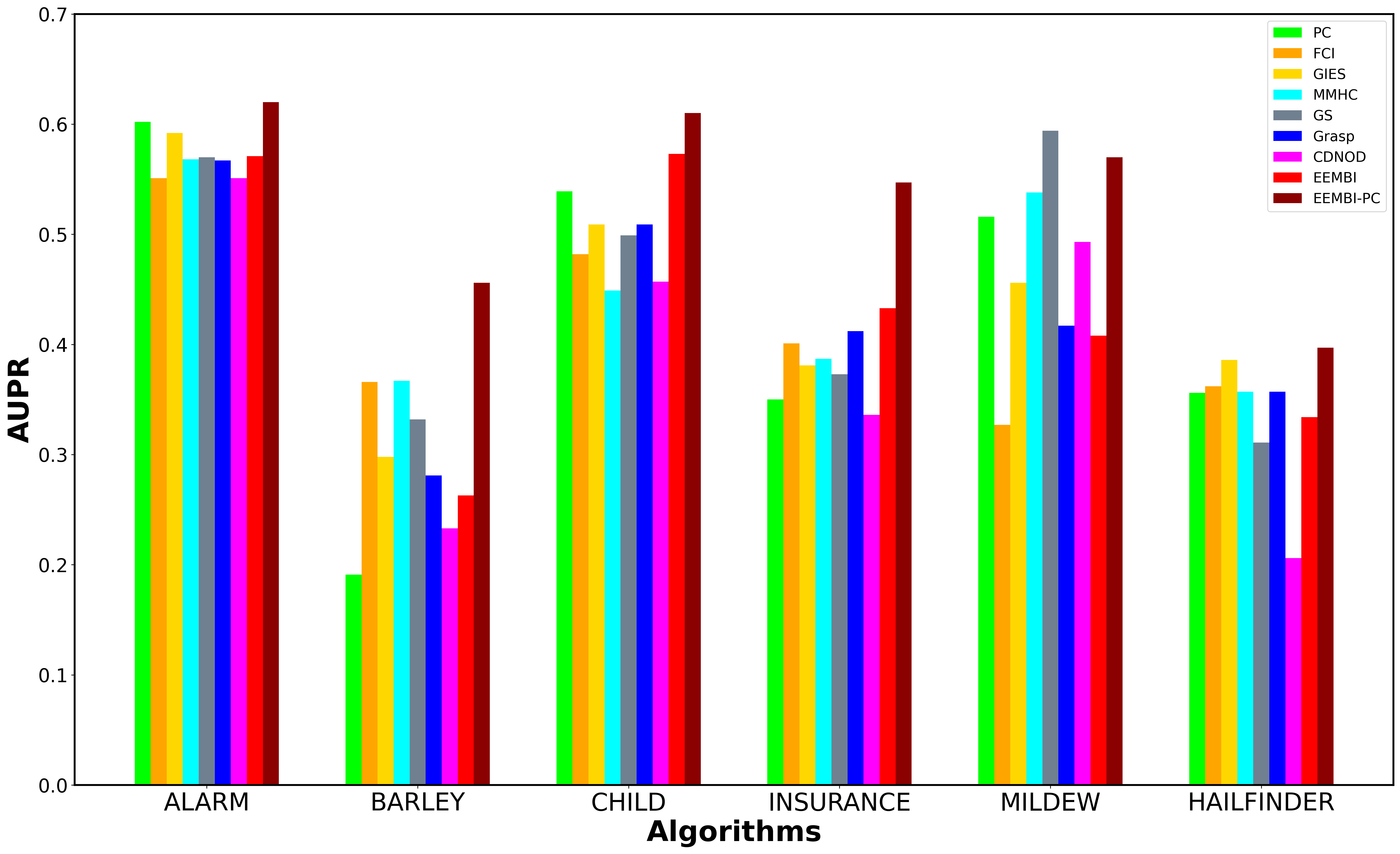}
\caption{AUPR results of causal structure learning methods on discrete datasets}\label{discrete AUPR}
\end{figure*}

On outcomes of discrete datasets in Table \ref{discrete SHD} and Figure \ref{discrete AUPR}, except for the MILDEW and HailFinder datasets, EEMBI-PC has the best performance on the other four datasets, i.e. it has the lowest SHD and highest AUPR. Although EEMBI has the lowest SHD on HailFinder and GS has the highest AUPR on MILDEW, EEMBI-PC shows very close outcomes to them and has the lowest SHD on MILDEW and highest AUPR on HailFinder. Therefore, EEMBI-PC has the best performance of all causal structure learning algorithms on discrete datasets. EEMBI also has SHDs which are close to EEMBI-PC on ALARM, CHILD, MILDEW, and HailFinder datasets. For instance, the SHD of EEMBI (29.3) is only 0.3 higher than the SHD of EEMBI-PC (29.0). But EEMBI has ordinary results on AUPR, it exceeds the baseline algorithms only on CHILD and INSURANCE datasets, and has much lower AUPRs than most of the baselines on BARLEY and MILDEW. Furthermore, the proposed methods outperform the baselines dramatically on some datasets. For example, on HailFinder EEMBI and EEMBI-PC are the only two algorithms that have SHDs lower than 100, 92.3 and 93.0. The best baseline, MMHC, only has 115.3 SHD, and the worst baseline, FCI, has 205.3 SHD which is more than twice of SHD of EEMBI and EEMBI-PC.

In addition to these numerical results, we also present selected parts of the CPDAG structure learned from different methods in Figure \ref{child} and Figure \ref{ins}. We pick eight features from the nodes and include all the edges connecting these eight nodes from the original CPDAG. The direction of the edges remains unchanged. Then these eight nodes and edges form a subgraph. We label the name of the features on the nodes. 

\begin{figure*}[htbp]
\centering

\subfigure[]
{
    \begin{minipage}[b]{.48\linewidth}
        \centering
        \includegraphics[scale=0.35]{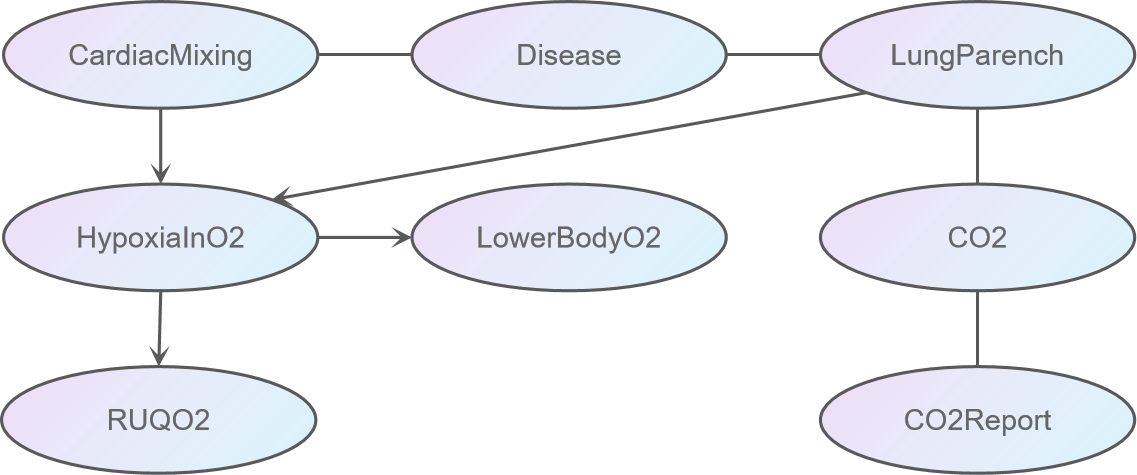}
    \end{minipage}
}
\subfigure[]
{
    \begin{minipage}[b]{.48\linewidth}
        \centering
        \includegraphics[scale=0.35]{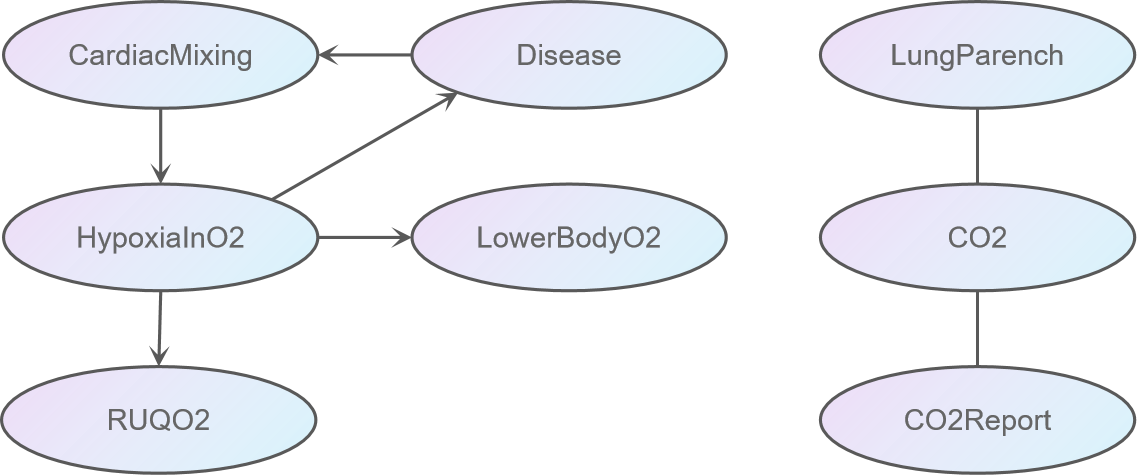}
    \end{minipage}
}

\subfigure[]
{
    \begin{minipage}[b]{.48\linewidth}
        \centering
        \includegraphics[scale=0.35]{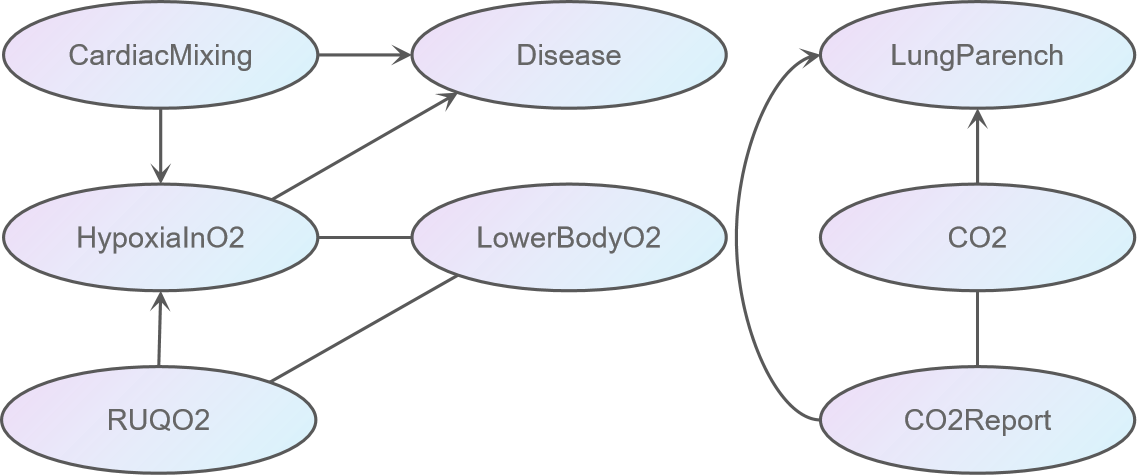}
    \end{minipage}
}
\subfigure[]
{
    \begin{minipage}[b]{.48\linewidth}
        \centering
        \includegraphics[scale=0.35]{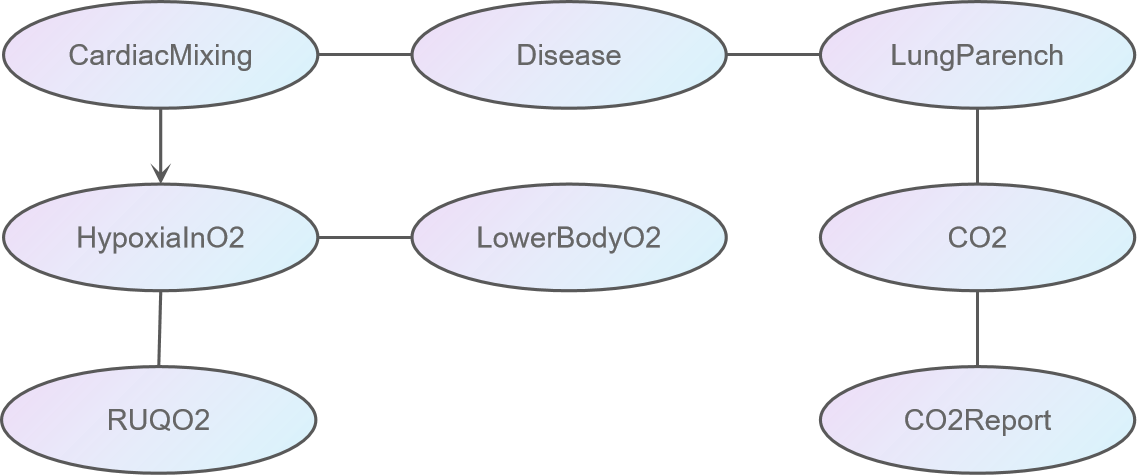}
    \end{minipage}
}

\caption{Parts of the CPDAG structure of CHILD dataset. (a) is the true CPDAG of CHILD dataset. (b) (c) (d) are the CPDAGs learned from PC, CDNOD, and EEMBI-PC algorithms.}\label{child}
\end{figure*}

In Figure \ref{child}, CPDAGs (b) (c) learned from PC and CDNOD both fail to capture the connection between $HypoxianlnO2$ and $LungParench$, as well as the connection between $Disease$ and $LungParench$. (b) incorrectly connects $Disease$ and $HypoxianlnO2$. (c) mistakenly adds 3 edges: $Disease\to HypoxianlnO2$, $RUQO2\to LowerBodyO2$ and $CO2Report\to LungParench$. Furthermore, (c) incorrectly determines the directions of 4 edges among these right connections like $RUQO2\to HypoxianlnO2$ and $CO2\to LungParench$. (d) learned from EEMBI-PC only misses one connection between $HypoxianlnO2$ and $LungParench$, and it only has two connections with wrong directions.

\begin{figure*}[htbp]
\centering

\subfigure[]
{
    \begin{minipage}[b]{.48\linewidth}
        \centering
        \includegraphics[scale=0.35]{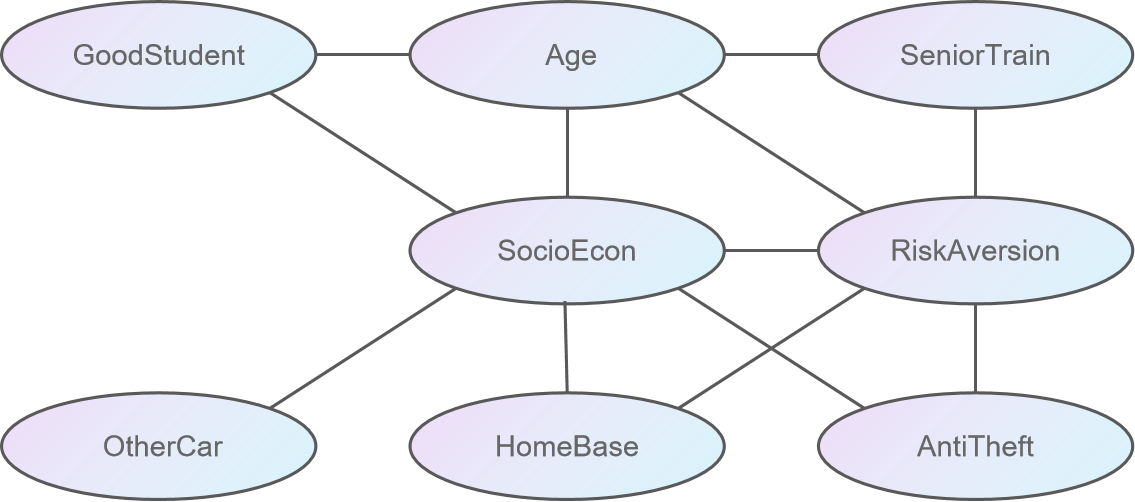}
    \end{minipage}
}
\subfigure[]
{
    \begin{minipage}[b]{.48\linewidth}
        \centering
        \includegraphics[scale=0.35]{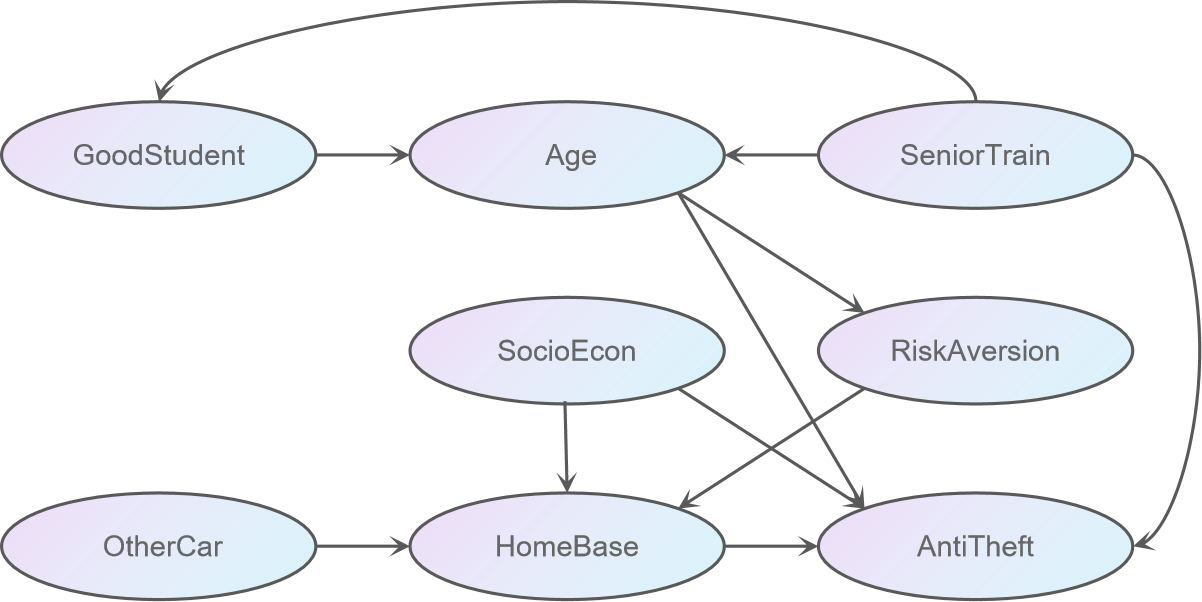}
    \end{minipage}
}

\subfigure[]
{
    \begin{minipage}[b]{.48\linewidth}
        \centering
        \includegraphics[scale=0.35]{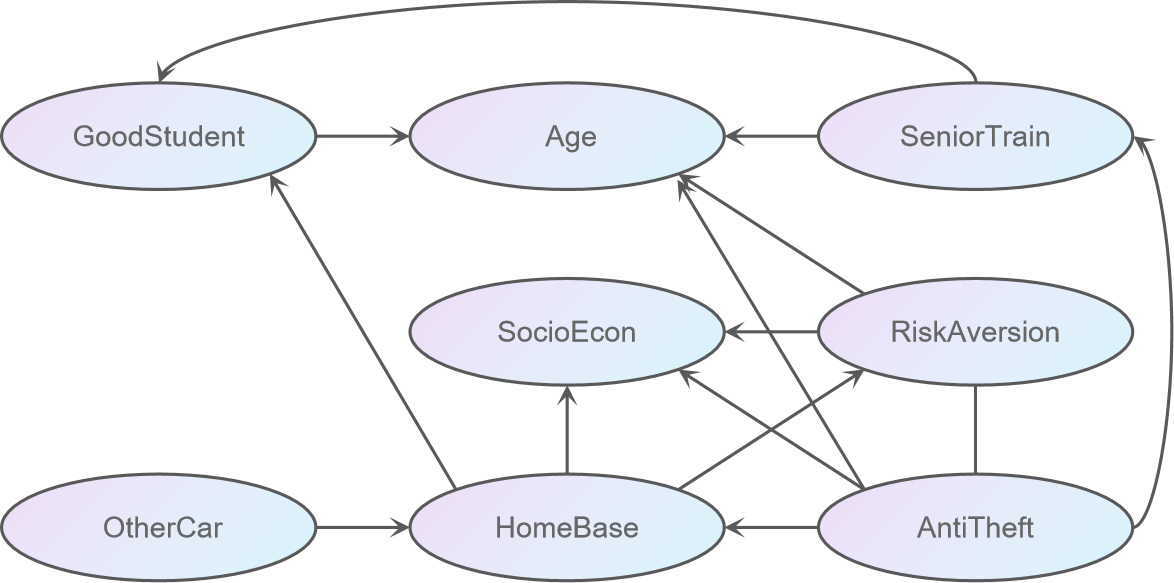}
    \end{minipage}
}
\subfigure[]
{
    \begin{minipage}[b]{.48\linewidth}
        \centering
        \includegraphics[scale=0.35]{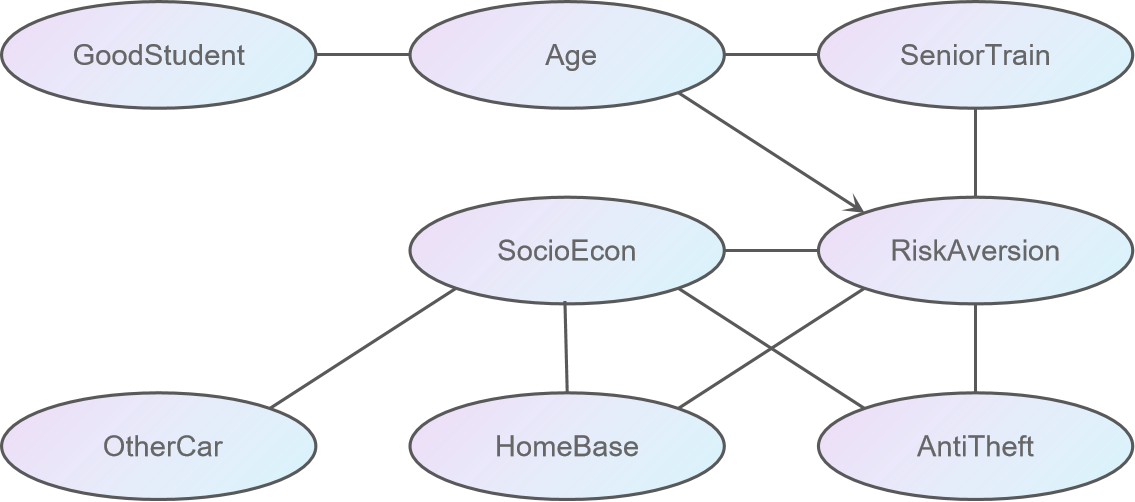}
    \end{minipage}
}

\caption{Parts of the CPDAG structure of INSURANCE dataset. (a) is the true CPDAG of CHILD dataset. (b) (c) (d) are the CPDAGs learned from PC, CDNOD, and EEMBI-PC algorithms}\label{ins}
\end{figure*}

In Figure \ref{ins}, (b) additionally connects 5 edges, like $SeniorTrain\to GoodStudent$, misses 6 edges, such as $GoodStudent- SocioEcon$, and have the wrong directions on all the edges. (c) connects 6 edges and omits 4 edges mistakenly. And it only has the right direction on $RiskAversion-AntiTheft$. (d) does not have any additional edge and only miss two edges $Age- SocioEcon$ and $GoodStudent-SocioEcon$. More surprisingly, (d) learn the right direction of all connections except for $Age\to RsikAversion$.

\subsection{Continuous datasets}
Table \ref{Education SHD}, \ref{Dream3 SHD}, and Figure \ref{Education AUPR}, \ref{Dream3 AUPR} shows the comparison results on continuous datasets. On SACHS, EEMBI-PC has the lowest SHD and reaches the top on AUPR. Although EEMBI and NOTEARS are close to EEMBI-PC on SHD, they are exceeded by many baselines like PC, FCI, and GIES on AUPR.

\begin{table*}[htpb]
\centering
\fontsize{10}{22}\selectfont
\caption{SHD on SAHCS and Education datasets}\label{Education SHD}
\setlength{\tabcolsep}{2mm}{
\begin{tabular}{cccccccc}
\bottomrule[1.5pt]
\hline
\multirow{2}{*}{Algorithms} & \multirow{2}{*}{SACHS} & \multirow{2}{*}{Education 1} & \multirow{2}{*}{Education 2} & \multirow{2}{*}{Education 3} & \multirow{2}{*}{Education 4} & \multirow{2}{*}{Education 5} & \cr
 & \cr
\bottomrule
PC           & 32.0 & 636.0(5.72) & 660.0(11.58) & 628.3(14.06) & 621.0(2.83) & 633.0(9.42) \cr
FCI          & 43.0 & 650.3(7.13) & 671.3(10.40) & 632.0(3.74) & 641.3(3.68) & 669.7(3.68) \cr
GIES         & 38.0 & 692.3(31.86) & 700.7(35.37) & 665.3(23.16) & 677.7(9.29) & 685.7(32.74) \cr
MMHC         & 37.0 & 662.3(7.72) & 689.7(5.56) & 639.7(4.19) & 650.0(2.94) & 671.7(7.41) \cr
GS           & 35.0 & 649.3(9.39) & 667.3(3.40) & 608.7(4.78) & 636.7(15.43) & 646.3(9.46) \cr
GRaSP        & 36.0 & 686.3(3.09) & 677.3(9.46) & 646.7(13.60) & 646.3(8.34) & 672.0(18.06) \cr
CDNOD        & 37.0 & 650.7(8.34) & 664.7(4.03) & 617.0(2.94) & 619.0(5.89) & 649.3(6.02) \cr
DirectLiNGAM & 51.0 & 658.7(7.54) & 629.7(11.73) & 742.7(91.00) & 781.7(114.44) & 757.3(93.83) \cr
CAM          & 37.0 & 659.3(6.55) & 679.0(10.42) & 643.3(9.74) & 671.0(15.77) & 672.7(33.32) \cr
NOTEARS      & 28.0 & 628.0(4.08) & 644.7(5.44) & 593.3(5.31) & 612.3(2.49) & 630.3(3.30) \cr
EEMBI        & 30.0 & \textbf{618.0(5.10)} & \textbf{627.0(2.16)} & \textbf{582.3(6.13)} & \textbf{580.3(2.49)} & \textbf{610.0(5.10)} \cr
EEMBI-PC     & \textbf{27.0} & 644.7(6.94) & 654.0(4.32) & 608.0(1.41) & 619.0(5.72) & 634.7(6.65) \cr
 \bottomrule[1.5pt]
\end{tabular}}
\end{table*}

\begin{figure*}
\centering
\includegraphics[scale=0.25]{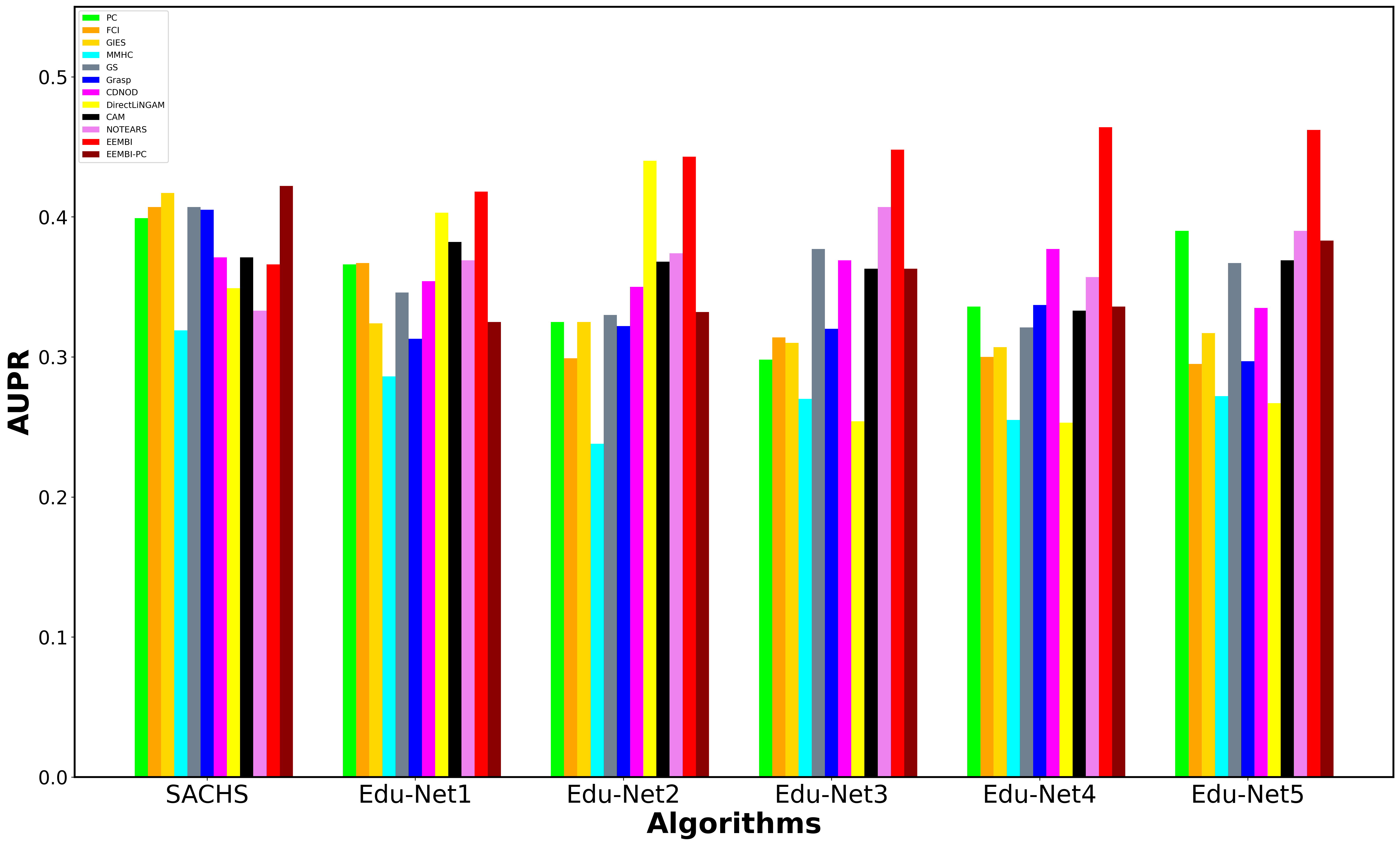}
\caption{AUPR on SACHS and Education datasets}\label{Education AUPR}
\end{figure*}

On Education datasets, EEMBI shows its dominance. It reaches the lowest SHDs and the highest AUPRs of these five datasets. EEMBI is the only method whose SHD is lower than 600 on Edu-Net 4, and it is one of the two methods whose SHD is lower than 600 on Edu-Net 3, together with NOTEARS. Moreover, EEMBI has very small standard deviations. It has the smallest deviations on Edu-Net 2 (2.16) and Education 4 (2.39) and has the third smallest deviations on Edu-Net 1 (5.10) and Education 5 (5.10). Small standard deviations indicate that EEMBI is much more robust to noisy and randomicity of data. EEMBI-PC has poor performance on these five datasets. Although EEMBI-PC outperforms baselines except for the PC algorithm and NOTEARS on SHD, it still has big gaps with EEMBI. For AUPR, EEMBI-PC is exceeded by CDNOD and NOTEARS on Edu-Net 1, 2, 3 and basically reaches the bottom on Edu-Net 1. NOTEATRS has better performance than other baseline algorithms and EEMBI-PC on SHD, but is still beaten by EEMBI on every Education dataset.

\begin{table*}[htpb]
\centering
\fontsize{10}{18}\selectfont
\caption{SHD on Dream3 datasets}\label{Dream3 SHD}
\setlength{\tabcolsep}{2mm}{
\begin{tabular}{cccccc}
\bottomrule[1.5pt]
\hline
\multirow{2}{*}{Algorithms} & \multirow{2}{*}{Ecoli 1} & \multirow{2}{*}{Ecoli 2} & \multirow{2}{*}{Yeast 1} & \multirow{2}{*}{Yeast 2} & \multirow{2}{*}{Yeast 3}\cr
 & \cr
\bottomrule

PC           & 212.0 & 216.0 & 216.0 & 257.0 & 293.0 \cr
FCI          & 289.0 & 290.0 & 286.0 & 350.0 & 361.0 \cr
GIES         & 262.0 & 280.0 & 261.0 & 296.0 & 298.0 \cr
MMHC         & 205.0 & 227.0 & 281.0 & 281.0 & 301.0 \cr
GS           & 164.0 & 192.0 & 192.0 & 213.0 & 233.0 \cr
GRaSP        & 204.0 & 426.0 & 200.0 & 298.0 & 234.0 \cr
CDNOD        & 230.0 & 249.0 & 236.0 & 315.0 & 313.0 \cr
DirectLiNGAM & 219.0 & 247.0 & 219.0 & 289.0 & 289.0 \cr
CAM          & 360.0 & 346.0 & 354.0 & 390.0 & 392.0 \cr
NOTEARS      & 157.0 & 181.0 & 195.0 & 230.0 & 253.0 \cr
EEMBI        & \textbf{124.0} & 164.0 & \textbf{143.0} & \textbf{215.0} & \textbf{226.0} \cr
EEMBI-PC     & 144.0 & \textbf{158.0} & 158.0 & 220.0 & 236.0 \cr
 \bottomrule[1.5pt]
\end{tabular}}
\end{table*}

\begin{figure*}
\centering
\includegraphics[scale=0.25]{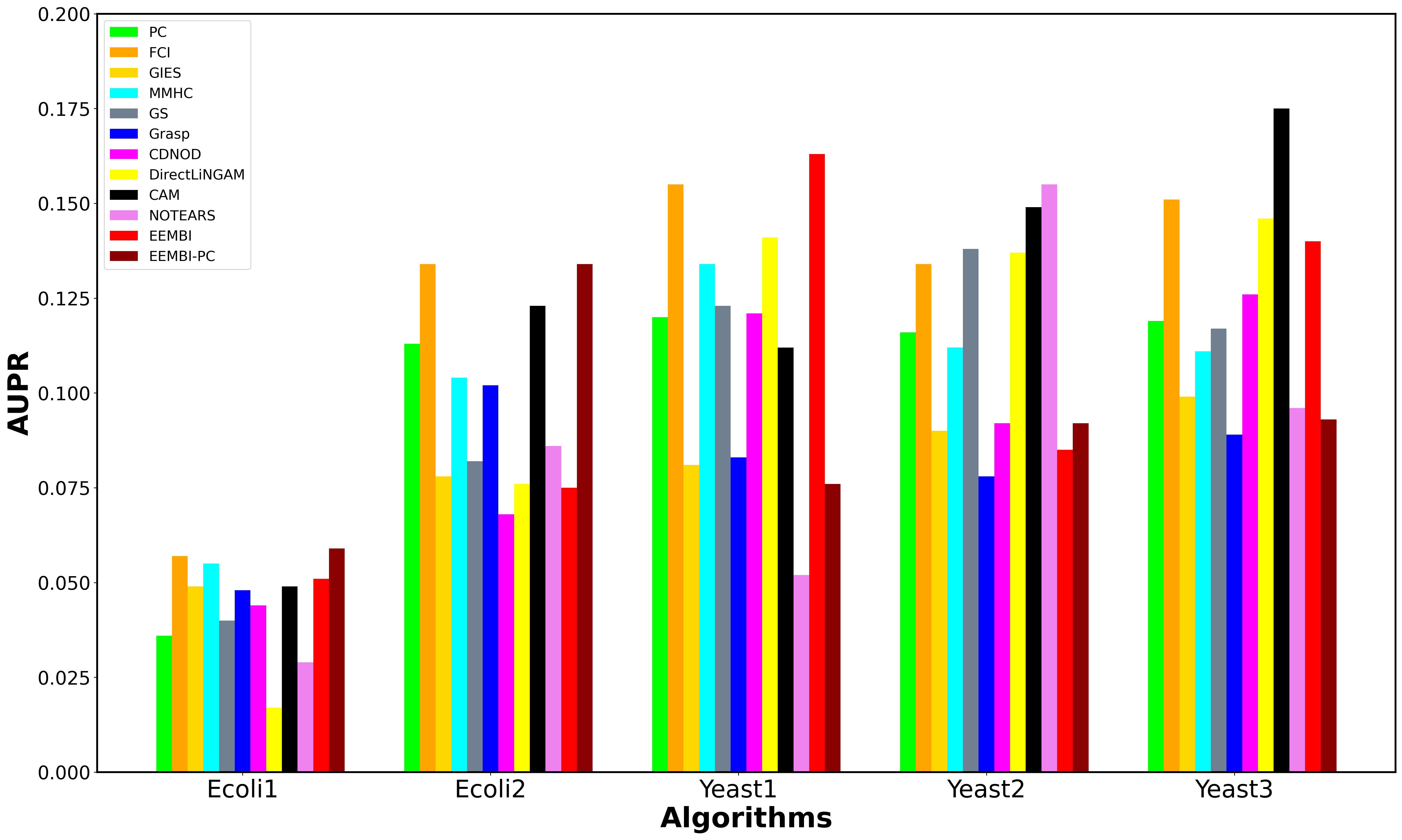}
\caption{AUPR on Dream3 datasets}\label{Dream3 AUPR}
\end{figure*}

For Dream3 datasets, EEMBI achieves the lowest SHD among all methods, except for the Ecoli 2 dataset where it is close to the best-performing method, EEMBI-PC. but it only has the highest AUPR on Yeast 1. On the other hand, EEMBI-PC performs exceptionally well on the Ecoli 1 and Ecoli 2 datasets, achieving the highest AUPR. Similar to discrete datasets, EEMBI and EEMBI-PC have the lowest two SHDs on every dream dataset. However, they both show poorer performance in terms of AUPR on Yeast 2, 3. In contrast, NOTEARS and CAM achieve the tops on them, and NOTEARS has the closest SHD to proposed methods among the baseline algorithms.

EEMBI outperforms baselines and EEMBI-PC on continuous datasets overall.

\subsection{Sensitiveness and Ablation Study}

\begin{figure*}[htbp]
\centering

\subfigure[]
{
    \begin{minipage}[b]{.48\linewidth}
        \centering
        \includegraphics[scale=0.14]{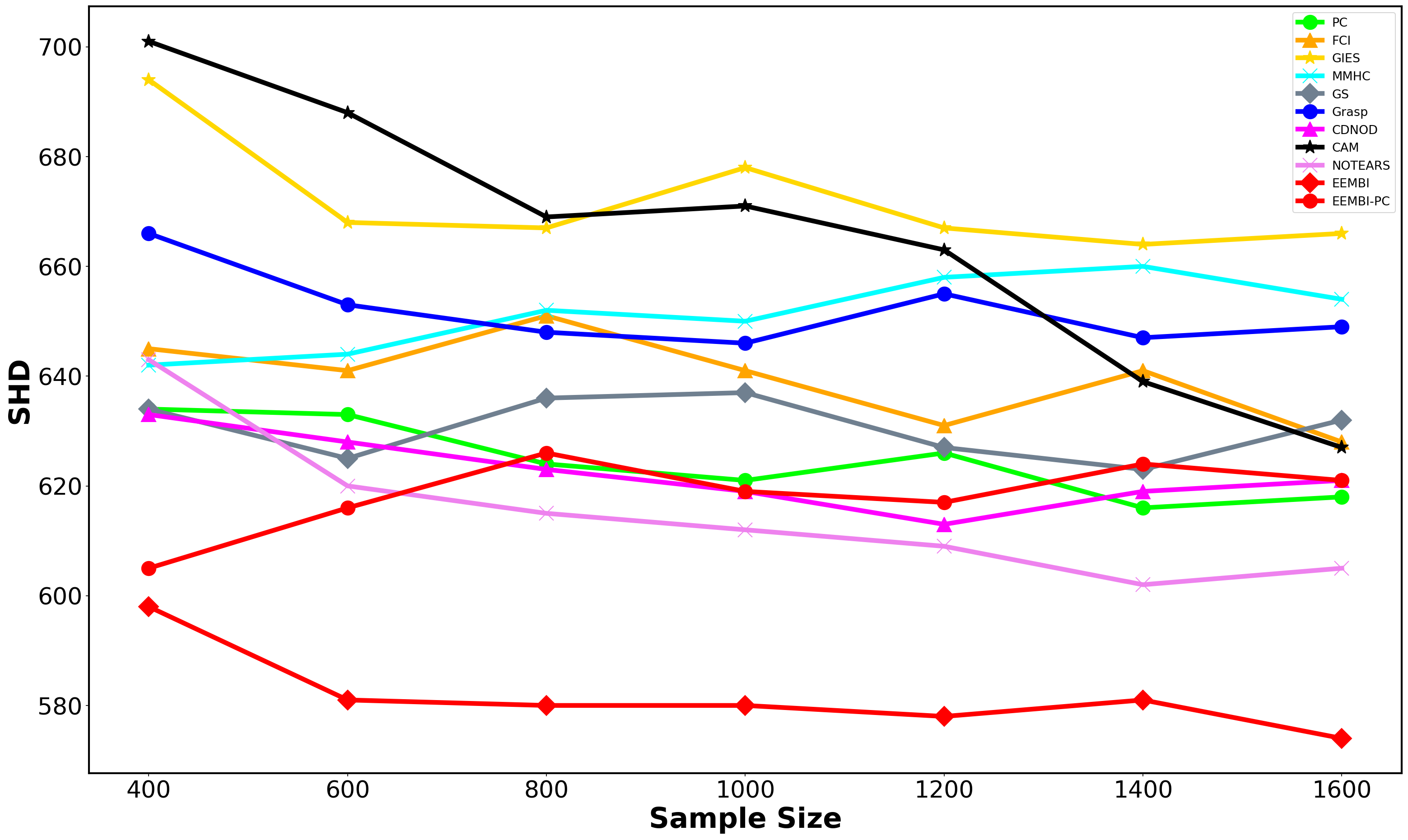}
    \end{minipage}
}
\subfigure[]
{
    \begin{minipage}[b]{.48\linewidth}
        \centering
        \includegraphics[scale=0.14]{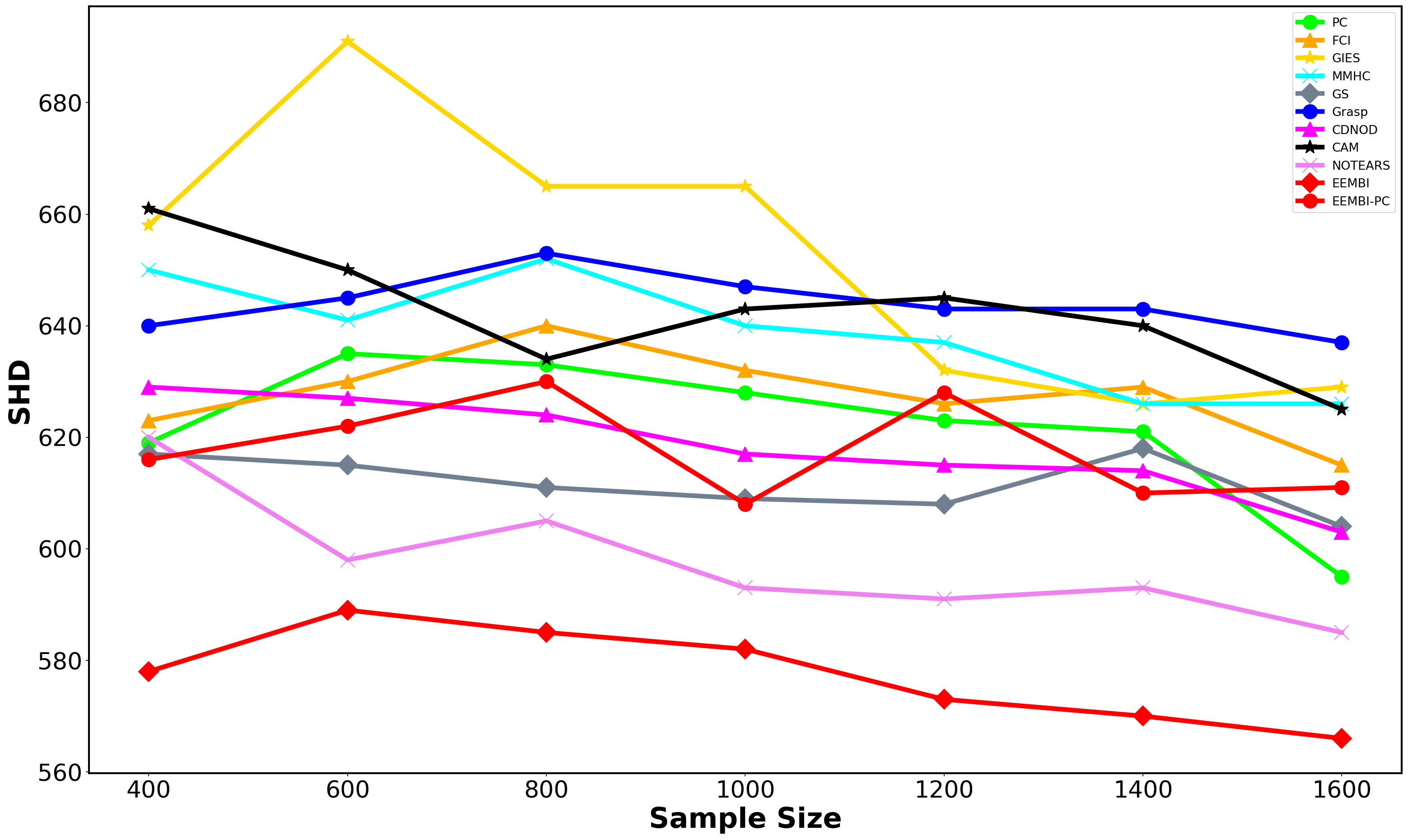}
    \end{minipage}
}

\subfigure[]
{
    \begin{minipage}[b]{.48\linewidth}
        \centering
        \includegraphics[scale=0.14]{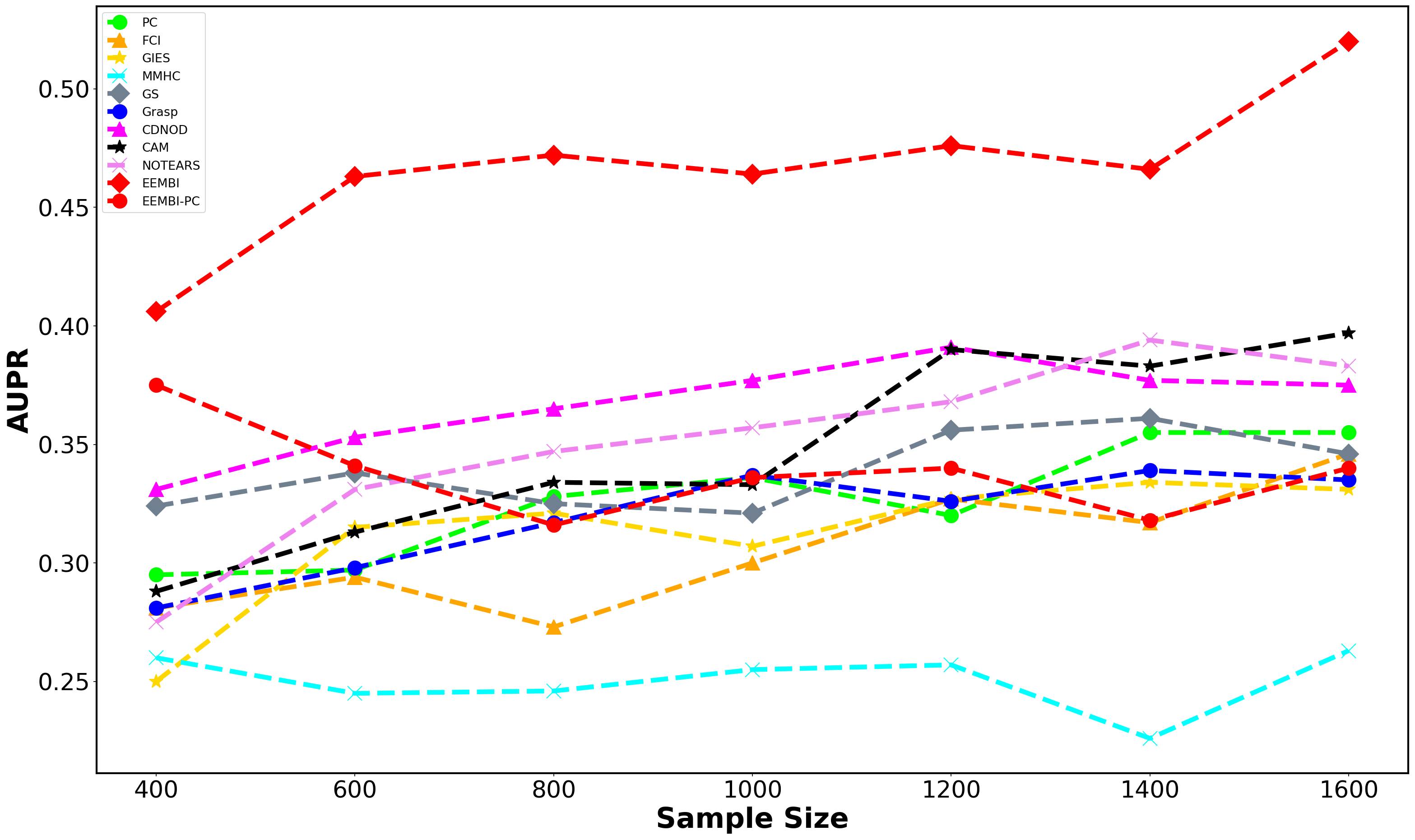}
    \end{minipage}
}
\subfigure[]
{
    \begin{minipage}[b]{.48\linewidth}
        \centering
        \includegraphics[scale=0.14]{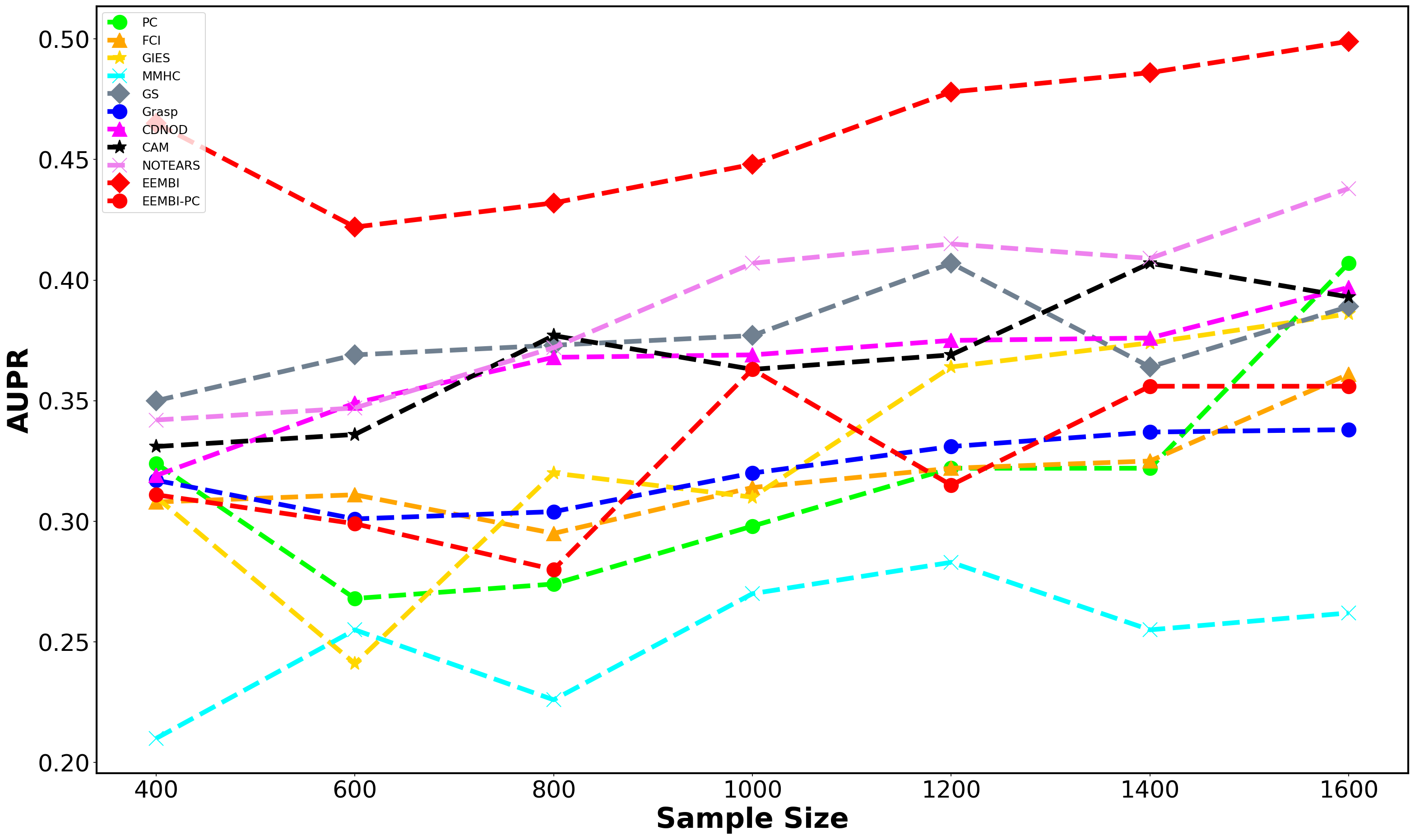}
    \end{minipage}
}

\caption{The change of SHD and AUPR of causal structure learning algorithms with respect to sample size on Education 3, 4. (a) (b) show the SHD with respect to sample size and (c) (d) show the AUPR with respect to sample size.}\label{sample size}
\end{figure*}

In Figure \ref{sample size}, we show the performance of baselines and proposed methods with respect to the sample size on Edu-Net 3, 4 datasets. We consider the performance of causal learning algorithms on 400, 600, 800, 1000, 1200, 1400, and 1600 sample sizes. Similarly, we apply every algorithm three times with different instances on every sample size and show the mean values on graphs. We remove the DirectLiNGAM since its poor results affect the presentation of other algorithms. In (a) (b), The SHDs have decreasing tendency with the increase of sample size. EEMBI has the smallest SHD on any sample size, it only decreases at the beginning in (a) and is stable after 600 sample size. In (b), EEMBI begins to decrease at the 600 sample size and gradually becomes steady. EEMBI-PC fluctuates intensely in (a) (b). And NOTEARS has the second lowest SHDs except on small sample sizes. Most of the methods, including EEMBI and NOTEARS, decrease slightly with respect to sample size. However, GIES in (a) and CAM in (b) decrease dramatically, which indicates that GIES and CAM are sensitive to sample size. For (c) (d), all methods have increasing tendencies. EEMBI reaches the top of the graph all the time, and it has a completely opposite behavior compared to its performance in (a) (b). EEMBI-PC still has fluctuation corresponding to EEMBI-PC in (a) (b). Surprisingly, MMHC has the lowest AUPR on most of the sample sizes and has big gaps to other baselines.

\begin{figure*}[htbp]
\centering

\subfigure[]
{
    \begin{minipage}[b]{.23\linewidth}
        \centering
        \includegraphics[scale=0.15]{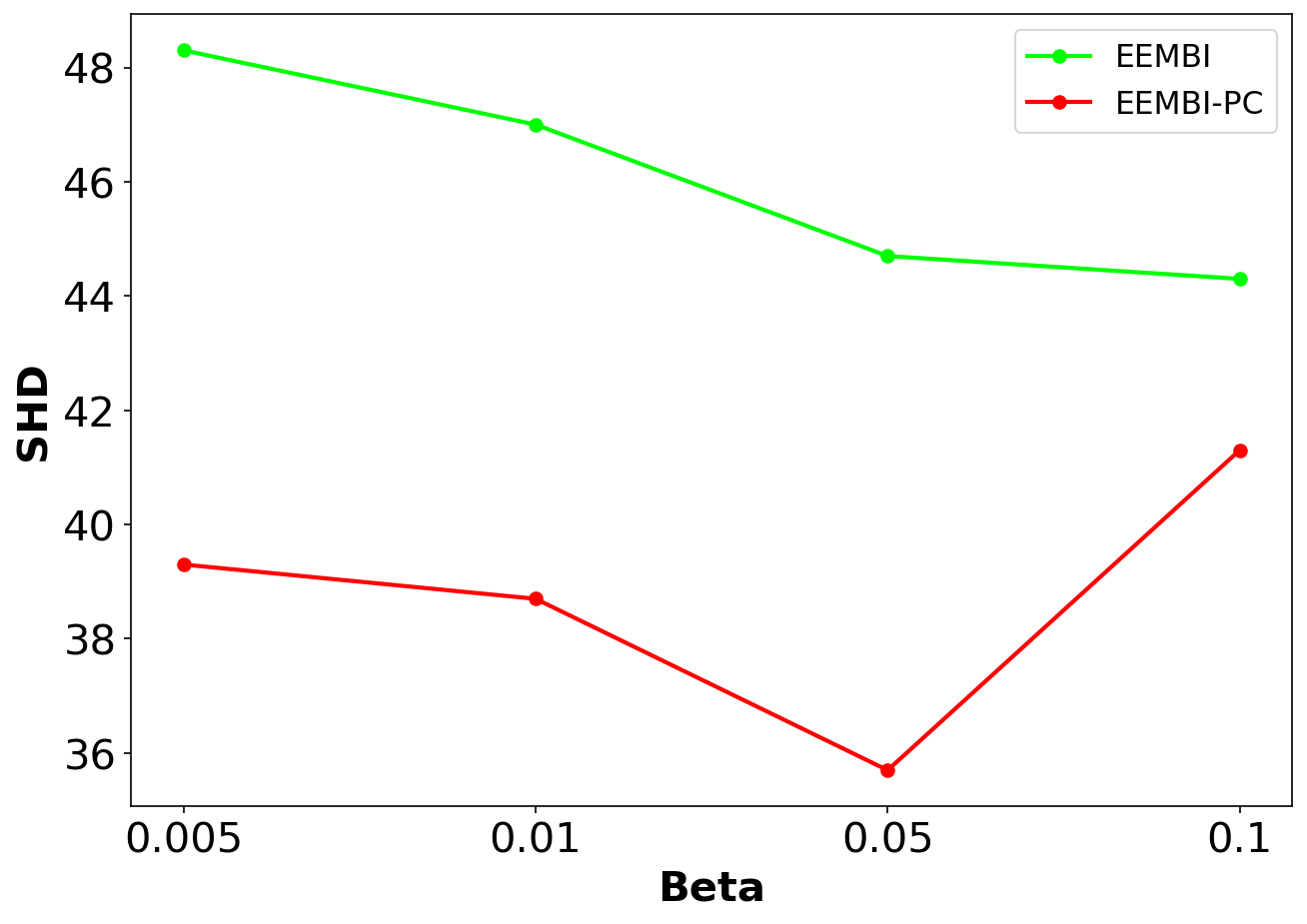}
    \end{minipage}
}
\subfigure[]
{
    \begin{minipage}[b]{.23\linewidth}
        \centering
        \includegraphics[scale=0.15]{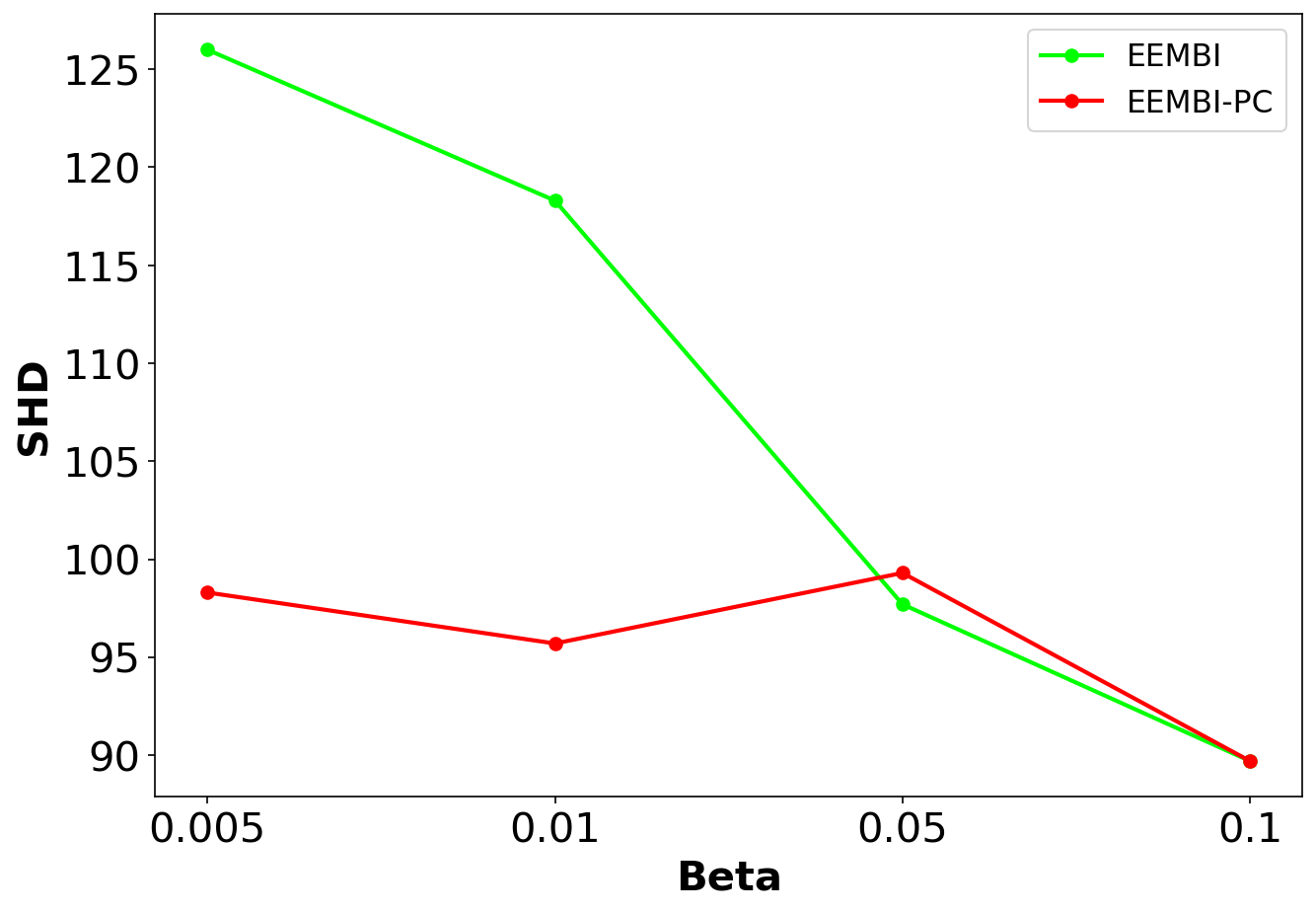}
    \end{minipage}
}
\subfigure[]
{
 	\begin{minipage}[b]{.23\linewidth}
        \centering
        \includegraphics[scale=0.15]{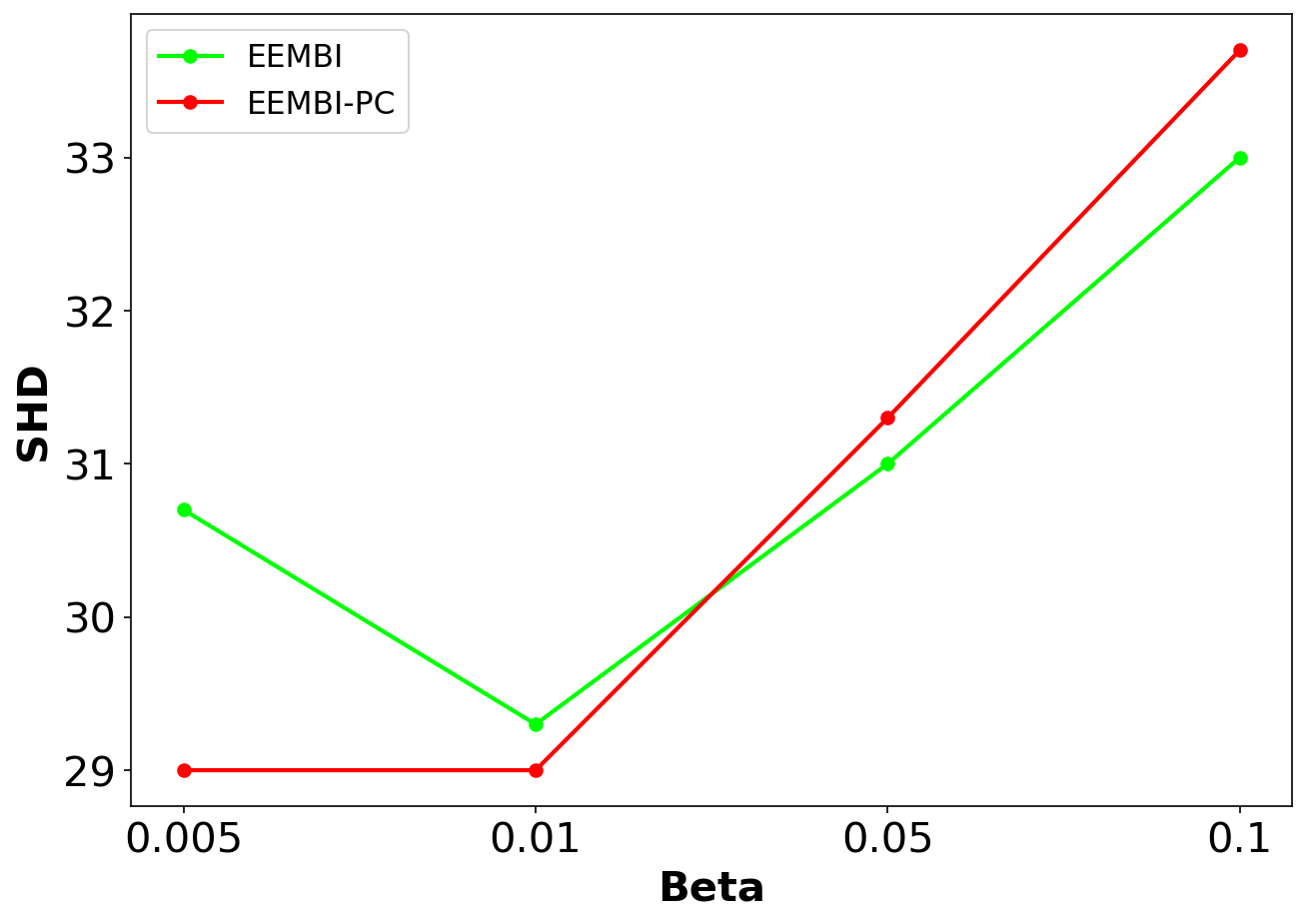}
    \end{minipage}
}
\subfigure[]
{
 	\begin{minipage}[b]{.23\linewidth}
        \centering
        \includegraphics[scale=0.15]{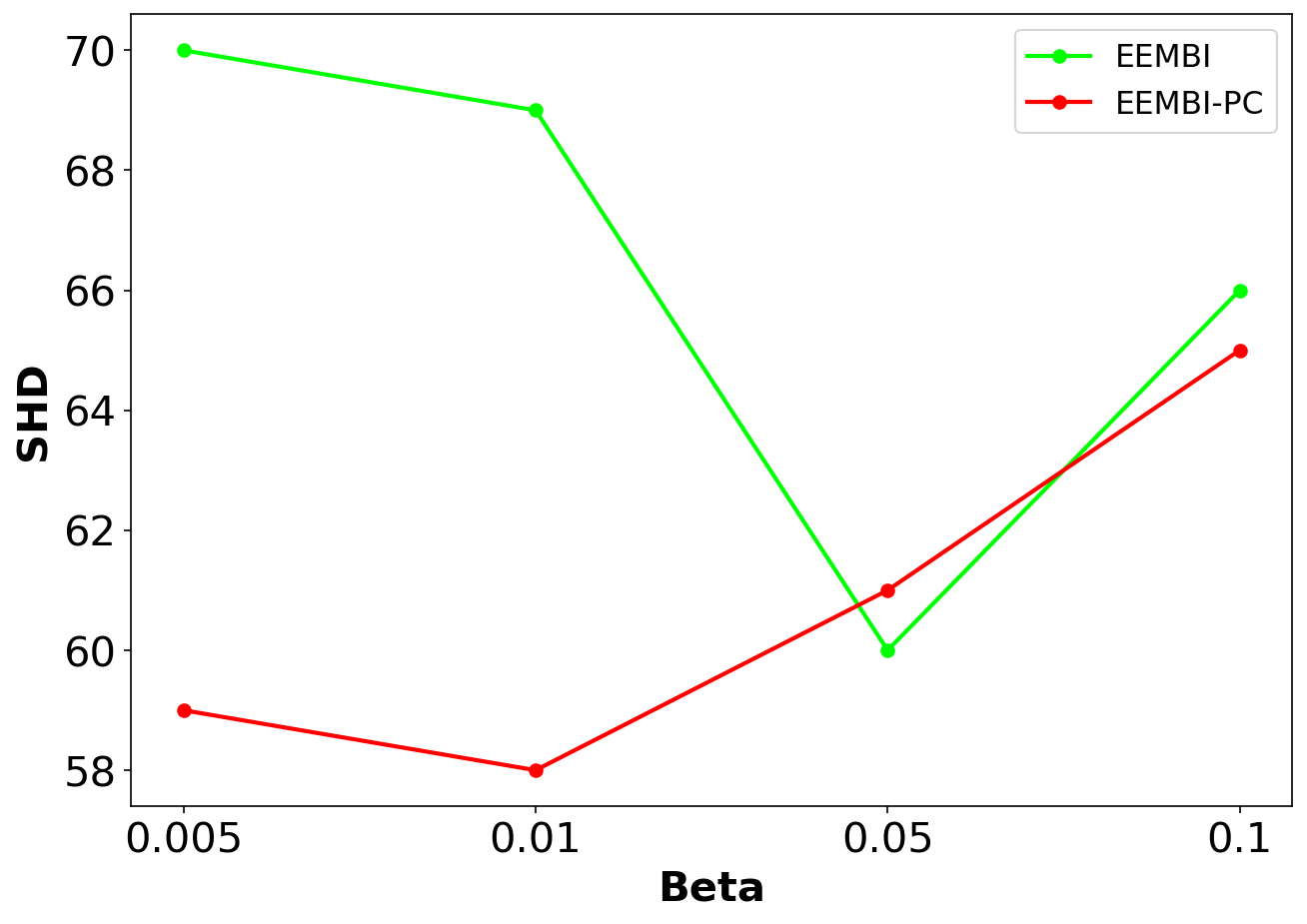}
    \end{minipage}
}

\subfigure[]
{
    \begin{minipage}[b]{.23\linewidth}
        \centering
        \includegraphics[scale=0.15]{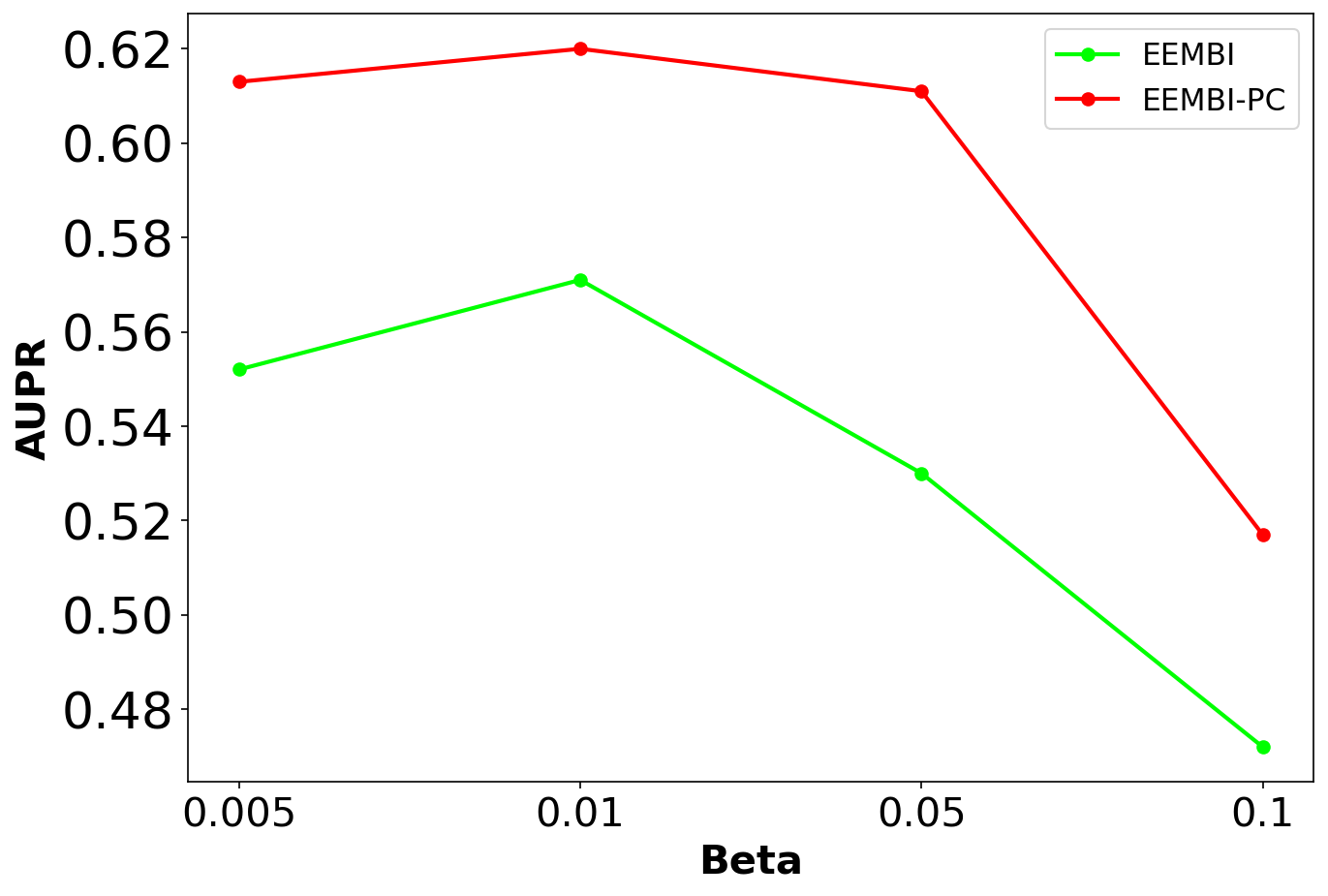}
    \end{minipage}
}
\subfigure[]
{
    \begin{minipage}[b]{.23\linewidth}
        \centering
        \includegraphics[scale=0.15]{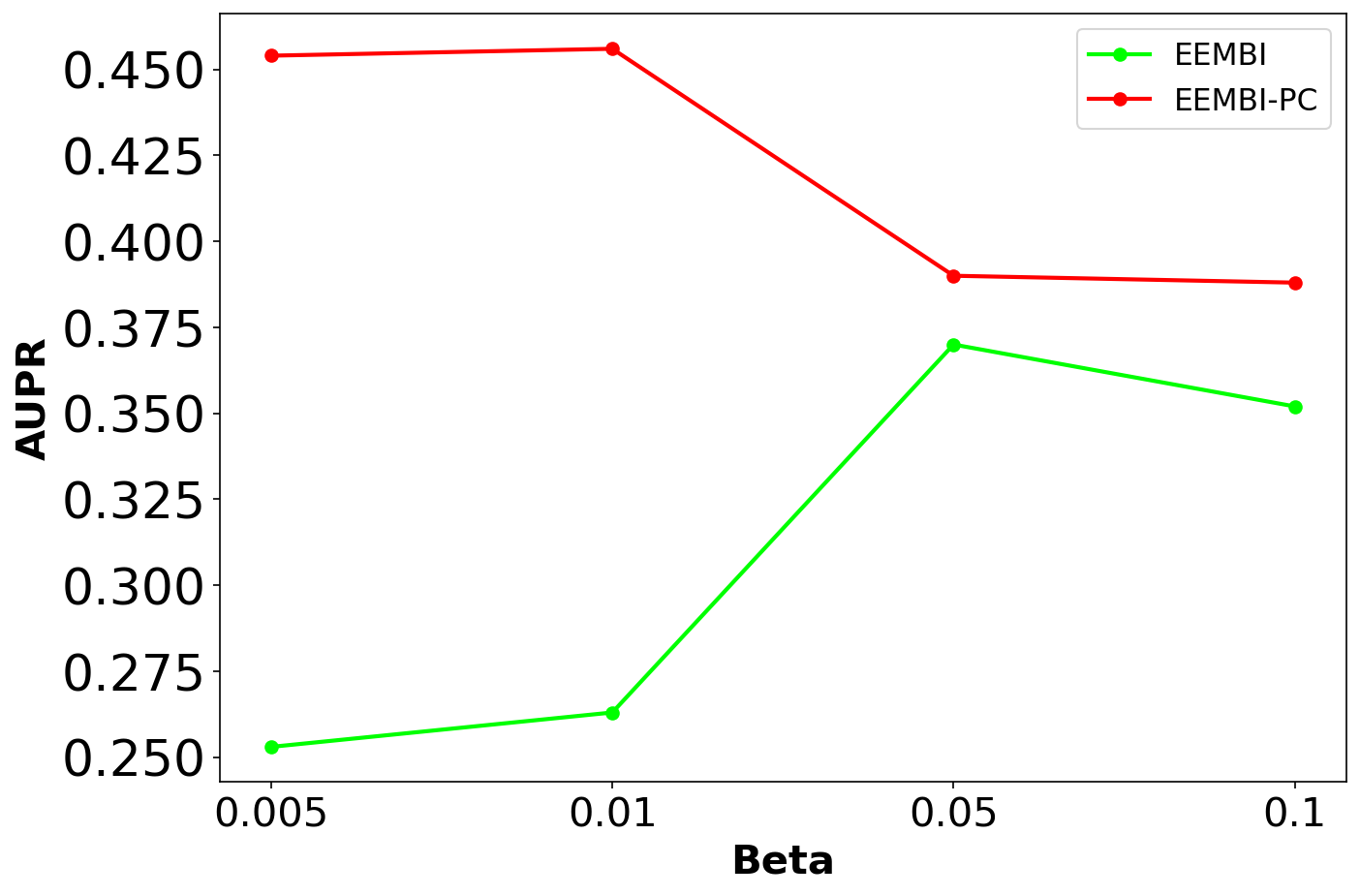}
    \end{minipage}
}
\subfigure[]
{
 	\begin{minipage}[b]{.23\linewidth}
        \centering
        \includegraphics[scale=0.15]{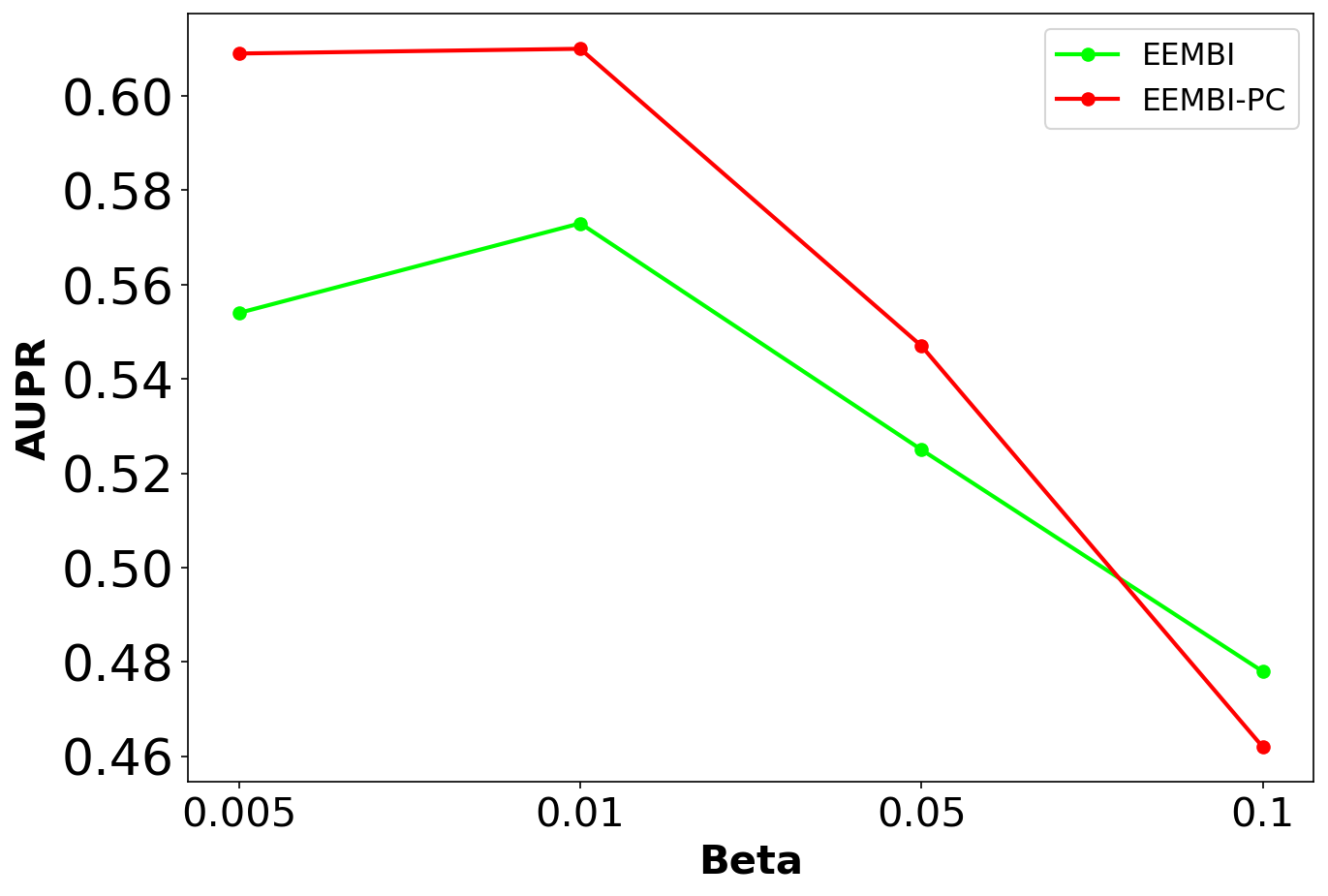}
    \end{minipage}
}
\subfigure[]
{
 	\begin{minipage}[b]{.23\linewidth}
        \centering
        \includegraphics[scale=0.15]{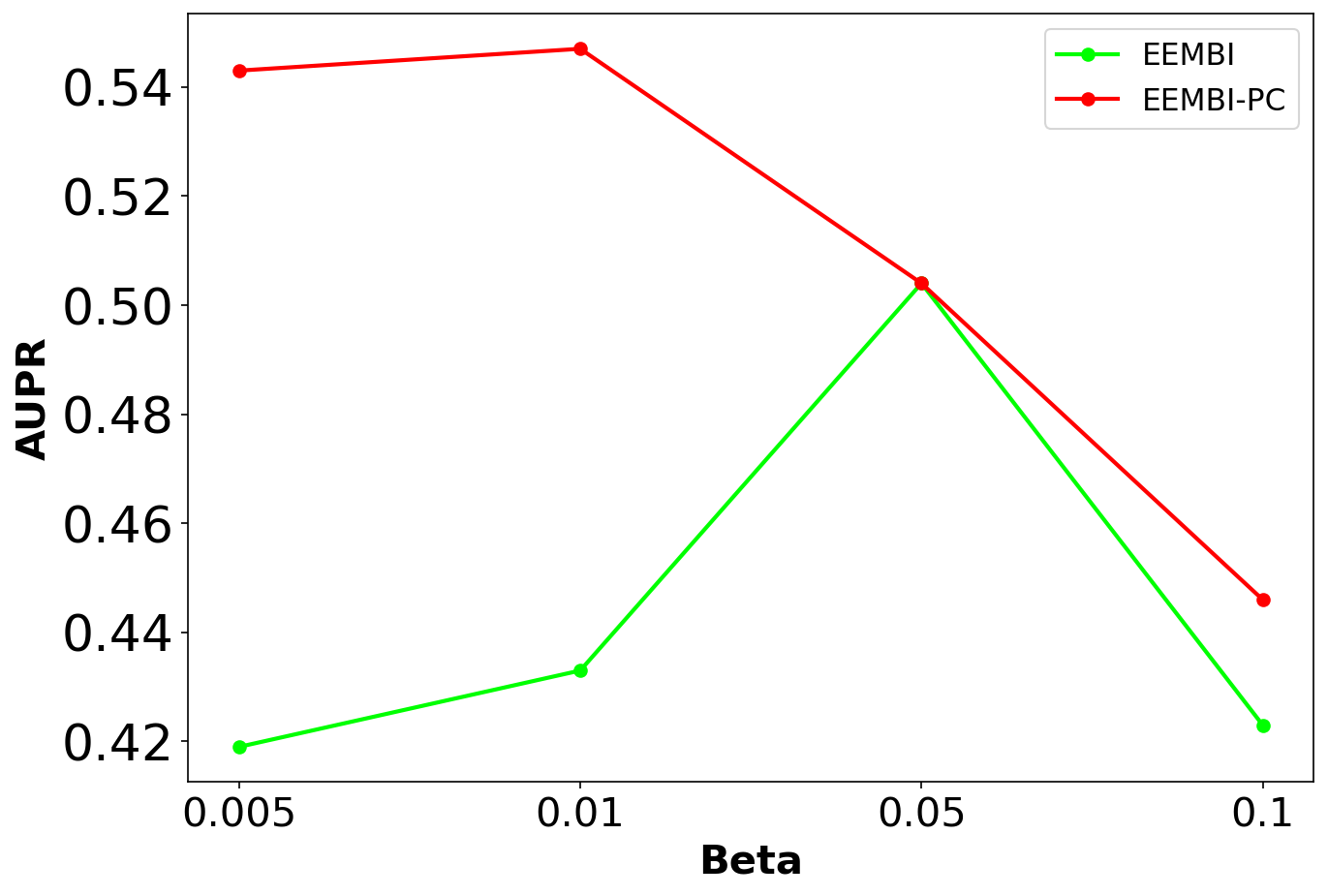}
    \end{minipage}
}

\caption{The SHD and AUPR of EEMBI and EEMBI-PC with respect to $\beta$ on ALARM ((a),(e)), BARLEY ((b), (f)), CHILD ((c), (g)), and INSURANCE ((d), (h))}\label{beta}
\end{figure*}

Since we fix the hyperparameter $\alpha$ and use different $\beta$ on discrete and continuous datasets. We are only interested in the influence of $\beta$. We study the performances of proposed methods, EEMBI and EEMBI-PC, with respect to $\beta$ on four discrete datasets in Figure \ref{beta}. EEMBI-PC outperforms EEMBI on every dataset except for a few points. EEMBI-PC with $\beta=0.01$ shows the lowest SHDs on CHILD and INSURANCE, and the highest AUPR on all four datasets. However, the regularity of EEMBI is much more complicated. EEMBI has its best performance under $\beta=0.01$ on CHILD but with $\beta=0.05$ on INSURANCE. Moreover, it reaches its minimum SHD under $\beta=0.1$ but reaches its maximum AUPR under $\beta=0.01$ on ALARM. In conclusion, EEMBI-PC with $\beta=0.05$ is the best method for discrete datasets.

To prove every step in EEMBI and EEMBI-PC is useful, an ablation study is needed. After getting the Markov blankets from improved IAMB, we link every node with the nodes in its Markov blanket using undirected edges, and the undirected graph is represented by an adjacency matrix and is compared with the true CPDAG of datasets. Then we get the SHD and AUPR of improved IAMB on datasets. We also remove the matching phase in EEMBI and EEMBI-PC, i.e. delete the line 7 to line 14 in \textbf{Algorithm \ref{GM}} and directly use the exogenous variables generated by FastICA as the input of \textbf{Algorithm \ref{Intersection}}. We denote the EEMBI and EEMBI-PC without matching phase as EEMBI (WM) and EEMBI-PC (WM) where WM is short for ``Without Matching''. We run improved IAMB, EEMBI (WM), and EEMBI-PC (WM) three times, and compare them with original EEMBI and EEMBI-PC in Table \ref{ablation}. The results of AUPR are shown in \textbf{Appendix B}. We may compare improved IAMB with EEMBI and EEBI-PC to demonstrate the effectiveness of the combination of \textbf{Algorithm \ref{GM}} and \textbf{Algorithm \ref{Intersection}}. We also may compare EEMBI with EEMBI (WM), or compare EEMBI-PC with EEMBI-PC (WM) to state the effectiveness of \textbf{Algorithm \ref{GM}} solo. We can find that EEMBI or EEMBI-PC has a much smaller SHD than EEMBI (WM) or EEMBI-PC (WM), and improved IAMB. For example, on Edu-Net 1, the SHD of EEMBI (618.0) is smaller than the SHD of EEMBI (WM) (647.3) and the SHD of improved IAMB (676.0). Although EEMBI (WM) has lower SHD than EEMBI-PC on Edu-Net 3 and Edu-Net 4, this comparison is meaningless since they have different steps besides the matching phases.

\begin{table*}[htpb]
\centering
\fontsize{10}{22}\selectfont
\caption{SHD on Dream3 datasets}\label{ablation}
\setlength{\tabcolsep}{2mm}{
\begin{tabular}{cccccc}
\bottomrule[1.5pt]
\hline
\multirow{2}{*}{Algorithms} & \multirow{2}{*}{Education 1} & \multirow{2}{*}{Education 2} & \multirow{2}{*}{Education 3} & \multirow{2}{*}{Education 4} & \multirow{2}{*}{Education 5}\cr
 & \cr
\bottomrule

Improved IAMB  & 676.0(7.48)  & 682.7(6.18) & 639.7(2.49) & 651.7(4.99)  & 679.7(5.25) \cr\hline
EEMBI (WM)     & 647.3(3.68)  & 660.0(2.16) & 603.7(3.40) & 613.0(12.75) & 641.3(6.94) \cr
EEMBI-PC (WM)  & 665.3(12.26) & 670.0(1.41) & 630.0(0.82) & 636.3(4.19)  & 657.0(6.38) \cr\hline
EEMBI          & 618.0(5.10)  & 627.0(2.16) & 582.3(6.13) & 580.3(2.49)  & 610.0(5.10) \cr
EEMBI-PC       & 644.7(6.94)  & 654.0(4.32) & 608.0(1.41) & 619.0(5.72)  & 634.7(6.65) \cr
 \bottomrule[1.5pt]
\end{tabular}}
\end{table*}

\section{Conclusion}

This paper proposes a pair of new causal structure learning algorithms: EEMBI and EEMBI-PC. They use improved IAMB to learn the Markov blankets for all nodes. Different from original IAMB, improved IAMB has an extra phase to guarantee its accuracy, and it uses kNN-CMI as estimation of CMI so that it can have more accurate conditional independencies and can work on both discrete and continuous datasets directly. FastICA is implemented to generate exogenous variables. To match every exogenous variable with its endogenous variable, we turn the problem into an optimization problem in equation (\ref{maximize MI}) and propose generating and matching algorithm in \textbf{Algorithm \ref{GM}}. Using the properties of exogenous variables, we prove that the parents of endogenous variables belong to the Markov blankets of the corresponding exogenous variables. So we intersect endogenous Markov blankets and exogenous Markov blankets to find the parents of nodes in \textbf{Algorithm \ref{Intersection}}. We have different algorithms by using different strategies to orient the edges in the final phases. For EEMBI, we directly orient the edges from the parents node found in \textbf{Algorithm \ref{Intersection}} to the target node and obtain a DAG. And we turn the DAG into CPDAG by using Meek's rules. For EEMBI-PC, we use undirected edges to link the parents with the target node to obtain the skeleton of the DAG. And we use the PC algorithm and Meek's rules to obtain the CPDAG. The experiments give empirical evidence that EEMBI-PC has the best performance on discrete datasets and EEMBI is the state-of-the-art method on continuous datasets.

Although EEMBI only needs polynomial complexity $O(n^3)$, the proposed methods still require much more computational time compare to baseline algorithms in experiments, especially under large sample sizes. This is primarily due to the complex computation of kNN-CMI and improved IAMB costs most of the time. In our future studies, we aim to explore a more efficient estimation for CMI to enhance the efficiency of improved IAMB. Furthermore, we are interested in finding an ICA method that can accurately recover the exogenous variables from observed data so that EEMBI and EEMBI-PC can be much more accurate.


\acks{This work was funded by the National Nature Science Foundation of China under Grant 12071428 and 11671418, and Zhejiang Provincial Natural Science Foundation of China under Grant No. LZ20A010002.}


\appendix



\section{Proof}
We give proofs of theorems in this appendix.

\begin{proof}{\textbf{of Theorem \ref{MB-def}}}

Let $x, y\in\mathcal X$ be two endogenous variables, and $x,y$ are connected by some trails. Since any trails that connect $x$ and $y$ must go through the Markov blanket of $x$, we have three conditions.

\textbf{condition 1}\\
If trail $x\rightleftharpoons x_1\rightleftharpoons x_2 ... \rightleftharpoons y$ goes through one of the parents of $x$, i.e. $x\gets x_1$. Since $x\gets x_1\rightleftharpoons x_1$ forms either a fork structure or a chain structure. Therefore given $x_1\in MB_x$ can make this trail not active. 

If trail $x\rightleftharpoons x_1\rightleftharpoons x_2\rightleftharpoons x_3 ... \rightleftharpoons y$ goes through one of the children of $x$ $x_1$. Since $x_1\in Ch_x$, we have $x\to x_1$. Then the other two conditions follow.

\textbf{condition 2}\\
If $x\to x_1\to x_2$ forms a chain structure, then given $x_1\in MB_x$ unactive this trail.

\textbf{condition 3}\\
If $x\to x_1\gets x_2$ forms a V-structure, so $x_2$ is one of the spouses of $x$. Then $x_1\gets x_2\rightleftharpoons x_3$ forms a fork structure or chain structure, and given $x_2\in MB_x$ can make this trail not active.

Combing these three conditions, we know any trails between $x$ and $y$ are not active given $MB_x$, thus $x\perp\!\!\!\perp y\mid MB_x$. The proof for $x\perp\!\!\!\perp y\mid MB_y$ can be achieved in a similar way.

Let $Z\subset\mathcal X$, $y\notin Z$ and $Z\cap MB_x=\emptyset$.
\begin{align*}
P(x\mid Z,MB_x)&=\sum_{y} P(x\mid y, Z, MB_x)P(y\mid Z, MB_x)\\
&=\sum_y P(x\mid MB_x)P(y\mid Z, MB_x)\\
&=P(x\mid MB_x)
\end{align*}
Since $x\perp\!\!\!\perp Z,y\mid MB_x$, we have
\begin{align*}
P(x\mid y, Z, MB_x)=P(x\mid MB_x)=P(x\mid Z, MB_x).
\end{align*}
Therefore $x\perp\!\!\!\perp y\mid Z\cup MB_x$,
\end{proof}

\begin{proof}{\textbf{of Theorem \ref{trail MI}}}

Consider a chain structure $x\to y\to z$. By the definition of mutual information $I(x\ ;y)=H(x)-H(x\mid y)$ and conditional mutual information $I(x\ ;y\mid z)=H(x\mid z)-H(x\mid y, z)$, we have 
\begin{align*}
I(x\ ;y)+I(x\ ;z\mid y)&=H(x)-H(x\mid y)+H(x\mid y)-H(x\mid y,z)\\
&=H(x)-H(x\mid y,z)\\
&=I(x\ ;y,z)
\end{align*}
According to the symmetry, we have
\begin{align*}
I(x\ ;y,z)&=I(x\ ;y)+I(x\ ;z\mid y)\\
&=I(x\ ;z)+I(x\ ;y\mid z)
\end{align*}
By the property of chain structure, $I(x\ ;z\mid y)=0$. Since $x$ is not independent with $y$ given $z$, i.e. $I(x\ ;y\mid z)>0$, we have $I(x\ ;y)>I(x\ ;z)$. We have the same conclusion for the fork structure $x\gets y\to z$ since it has the same conditional independency as the chain structure. For V-structure $x\to y\gets z$, it is easy to see $I(x\ ;y)>I(x\ ;z)=0$.

Now we expand the structure from the basic structure to a trail $x\rightleftharpoons x_1 \rightleftharpoons x_2 ... \rightleftharpoons x_n$. We can assume there is no V-structure in this trail. Otherwise, we can find the first collider node $x_m$ in this trail $x\rightleftharpoons x_1 \rightleftharpoons x_2 ...\to x_m \gets x_{m+1} ... \rightleftharpoons x_n$. Then for any $j>m$, this trail is not active between $x$ and $x_j$, and 
\begin{align*}
I(x\ ;x_{m+1})=I(x\ ;x_{m+2})=...=I(x\ ;x_n)=0
\end{align*}
by the d-separation theorem. We can cut this trail from $x_m$ and reconsider this new trail $x\rightleftharpoons x_1 \rightleftharpoons x_2 ... \rightleftharpoons x_m$. 

Since $x\rightleftharpoons x_1 \rightleftharpoons x_2 ... \rightleftharpoons x_n$ has no V-structure, we have $x\perp\!\!\!\perp x_{n}\mid x_{n-1}$. Similar to the analyses at first, we have 
\begin{align*}
I(x\ ;x_{n-1},x_n)&=I(x\ ;x_{n-1})+I(x\ ;x_n\mid x_{n-1})\\
&=I(x\ ;x_n)+I(x\ ;x_{n-1}\mid x_n).
\end{align*}
Thus we also can provide $I(x\ ;x_{n-1})>I(x\ ;x_n)$. Similarly, we also have $x\perp\!\!\!\perp x_{n-2}\mid x_{n-1}$ and $I(x\ ;x_{n-2})>I(x\ ;x_{n-1})$. Continuing this process, we have the conclusion that for any trail $x\rightleftharpoons x_1 \rightleftharpoons x_2 ... \rightleftharpoons x_n$,
\begin{align*}
I(x\ ;x_1)>I(x\ ;x_2)>...>I(x\ ;x_n).
\end{align*}

If the graph is not a trail, but $x\rightleftharpoons x_1 \rightleftharpoons x_2 ... \rightleftharpoons x_n$ is the only trail by which $x$ can reach $x_1$, $x_2$, ..., and $x_n$. Then the conditional independencies are the same as the analyses we discussed above. Therefore the conclusion
\begin{align*}
I(x\ ;x_1)>I(x\ ;x_2)>...>I(x\ ;x_n).
\end{align*}
can be achieved in this situation.

\end{proof}

\begin{proof}{\textbf{of Theorem \ref{matching}}}

Since $e_i$ is only connected with $x_i$, for $x_j\notin MB^e_i$, the trail that connects $x_j$ and $e_i$ must go through $x_i$. This trail must contain at least one of the spouses $e_i$ or children of $x_i$ because $e_i$ has no parent. 

If $x_j\notin MB^e_i$ is not descendent of $x_i$, i.e. there is no path connecting $e_i$ and $x_j$. Then all the trails that connect $x_j$ and $e_j$ must go through one of the parents of $x_i$. For any one of these trails, it must contain a V-structure $e_i\to x_i \gets x_k\rightleftharpoons...$ and $x_i$ is a collider where $x_k$ is the parent of $x_i$ that this trail contains. Therefore all these trails are unactive given the empty set, and $I(x_j\ ; e_i)=0$.

Thus for any node $x_j\notin MB^e_i$, $I(x_j\ ;e_i)\neq 0$ if and only if $x_j$ is one of the descendent of $x_i$.

Let $e_1, e_2, ..., e_n$ be the exogenous variables that correspond to endogenous variables $x_1, x_2, ..., x_n$. We need to prove 
\begin{align}\label{1}
\sum^n_{m=1} I(x_m\ ;e_m)=\max_{j_1, j_2, ..., j_n}\sum^n_{m=1}I(x_m\ ;e_{j_m})\\
with\ \ \ I(x_m\ ;e_{j_m})\neq 0,\ \ \ m=1,2,..., n .\label{2}
\end{align}

Since $e_i$ and $x_i$ are connected, $I(x_i\ ;e_i)>0$. For node $x_j$ is not the descendent of $e_i$, $e_i$ is independent with $x_j$ as we discussed before. Then $I(x_i\ ;e_i)\geq I(x_j\ ;e_i)=0$. Otherwise, there is a path $e_i\to x_i\to ... \to x_j$ which starts from $e_i$ and reaches $x_j$. If this path satisfies the assumption in \textbf{Theorem \ref{trail MI}}, i.e. it is the only trail that connect $x_i$ and $x_j$, we also have $I(x_i\ ;e_i)>I(x_j\ ;e_i)$. 

However, if there are more than one trails that connect $e_i$ and $x_j$, since $e_i$ has no parent, the trail is active given the empty set if and only if this trail is a path from $e_i$ to $x_j$ $e_i\to x_i\to ... x_k\to x_j$. If we add the exogenous variable $e_j$ to $x_j$, $e_j$ with this trail forms a V-structure $x_k\to x_j\gets e_j$. So this trail is unactive between $x_i$ and $e_j$ and $I(x_i\ ;e_j)=0$. Although we can not guarantee $I(x_i\ ;e_i)\geq I(x_j\ ;e_i)$ achieves in this condition, but if we assign $e_i$ to $x_j$ and assign $e_j$ to $x_i$, we break the constraint $I(x_m\ ;e_{j_m})\neq 0$ in equation (\ref{1})

Therefore, for given node $x_i$ and any other node $x_j\in\mathcal X$, we have $I(x_i\ ;e_i)\geq I(x_j\ ;e_i)$ or $I(x_i\ ;e_j)=0$. If we change the exogenous variables correspond to them, we either can not reach the maximum of $\sum^n_{m=1}I(x_m\ ;e_{j_m})$ or break the constraints.

For any three nodes $x_i, x_j, x_k\in\mathcal X$, if we assign $e_k, e_i, e_j$ to $x_i, x_j, x_k$ and $I(x_i\ ;e_k)\neq 0$, $I(x_j\ ;e_i)\neq 0$, $I(x_k\ ;e_j)\neq 0$. By the analyses, $x_i, x_j$, and $x_k$ are the descendants of $x_k, x_i, x_j$. There are three paths $\mathcal P_1$, which starts from $x_k$ to $x_i$, $\mathcal P_2$, which starts from $x_i$ to $x_j$, and $\mathcal P_3$, which starts from $x_j$ to $x_k$. Then $\mathcal P_1, \mathcal P_2$ and $\mathcal P_3$ form a cycle that is contradictory to the assumption that $\mathcal G$ is DAG. Therefore, $(k,i,j)$ is not in the solution of equation (\ref{1}) and equation (\ref{2}). More complex situations can be proved in a similar way. So $(1,2,...,n)$ is the solution of equation (\ref{1}) under the constraints (\ref{2})

On the other hand, if we have the solution of equation (\ref{1}) under the constraints (\ref{2}) as $(i_1, i_2,..., i_n)$. We need to state that $(i_1, i_2, ..., i_n)$ exists and is unique. We know $I(x_m\ ;e_m)\neq 0$ for all $m=1,2,...,n$, so there exist permutations that satisfy all the constraints in equation (\ref{2}). Since the number of arrangements $\mathcal A$ for finite number $(1,2,...,n)$ is also finite. We can pick permutations that follow the constraints, and denote them as $\mathcal A'$. Since $\mathcal A'$ is finite, we can find a unique permutation $\mathbf r$ that maximizes $\sum^n_{m=1}I(x_m\ ;e_{j_m})$. Then this $\mathbf r$ is the solution of equation (\ref{1}) under constraints (\ref{2}).

Now let this solution $\mathbf r=(i_1, i_2,..., i_n)$ and $(i_1, i_2,..., i_n)\neq (1,2,...,n)$. Without loss of generality, we assume $i_m\neq m$ for some $m\leq l$ and $i_m=m$ otherwise. Since $I(x_m\ ;e_{i_m})\geq 0$, $x_m$ is the descendent of $x_{i_m}$, $m=1,2,...,l$. Because $i_m=m$ for $m\geq l$, $1\leq i_m\leq l$ for $m\leq l$. Then the paths from $x_{i_m}$ to $x_m$ $m=1,2,...,l$ forms a cycle, which is contradictory to the property of DAG.

Combing all the analyses, we have that $(e_{i_1},...,e_{i_n})$ are the exogenous variables which correspond to $(x_1, x_2,...,x_n)$ if and only if $(i_1, i_2, ..., i_n)$ is the solution of equation (\ref{1}) under the constraints (\ref{2}).

\end{proof}

\begin{proof}{\textbf{of Theorem \ref{MI equivalence}}}

Since $\mathbf e'$ is another exogenous vector that determined by $\mathbf e$, i.e.
\begin{align*}
\mathbf e'=\mathbf h(\mathbf e),
\end{align*}
for every element of $\mathbf h$ $h_i$, $e'_i$ and $e_i$ are both exogenous variables of $x_i$, then $e_i$ must be one of the inputs of the determined function $h_i$. 

But $(e'_1, ..., e'_n)$ are independent with each other and $(e_1, e_2, ..., e_n)$ are independent with each other. If the determined function $h_i$ has more than one input, we assume $e_i$ and $e_j$ are both in its inputs. And for any transformation of the determined function $\tilde h_i$, we can not remove $e_j$ from its input. By the definition of parents in SCM, we $e_j$ is a parent of $e'_i$ and $e'_j$. Then $e'_i\gets e_j\to e'_j$ forms a fork structure and is active given the empty set. So $I(e'_i\ ;e'_j)>0$ which is contradictory to the independence assumption. Therefore every determined function $h_i$ has an equivalent transformation $\tilde h_i$ such that $e'_i=\tilde h_i(e_i)$ for $i=1,2,...,n$.

Let $\tilde{\mathbf h}=(\tilde h_1, \tilde h_2, ..., \tilde h_n)$. According to the analyses above, we have $\mathbf e'=\tilde{\mathbf h}(\mathbf e)$. Since $\tilde{\mathbf h}$ is the transformation of $\mathbf h$ and $\mathbf h$ is invertible, we can conclude that $\tilde{\mathbf h}$ is invertible and every element $\tilde h_i$ is invertible.

Since $e'_i$ is determined by $e_i$, $x_i\gets e_i \to e'_i$ forms a fork structure, and
\begin{align*}
I(x_i\ ;e_i,e'_i)&=I(x\ ;e_i)+I(x_i\ ;e'_i\mid e_i)\\
&=I(x_i\ ;e'_i)+I(x_i\ ;e_i\mid e'_i).
\end{align*}
In \textbf{Theorem \ref{trail MI}}, we have $I(x\ ;x_{n-1}\mid x_n)>0$ because $x_{n-1}$, $x_{n}$ have other parents $e_{n-1}$, $e_n$ and are not only determined by each other. Different from \textbf{Theorem \ref{trail MI}}, we only have $I(x_i\ ;e_i\mid e'_i)\geq 0$ in this condition.
We have $I(x_i\ ;e_i)\geq I(x_i\ ;e'_i)$ because of $I(x_i\ ;e'_i\mid e_i)=0$ and $I(x_i\ ;e_i\mid e'_i)\geq 0$. Therefore for any function $\tilde h_i$, $I(x_i\ ;e_i)\geq I(x_i\ ;\tilde h_i(e'_i))$. Since $\tilde h_i$ is invertible, we can get 
\begin{align*}
I(x_i\ ;e'_i)\geq I(x_i\ ;(\tilde h_i)^{-1}(e_i))=I(x_i\ ;e_i).
\end{align*}
Therefore $I(x_i\ ;e_i)= I(x_i\ ;e'_i)$. For any other node $x_j$, we can find a set of nodes $Z$ such that $x_j\perp\!\!\!\perp e_i\mid Z$. We can simply write the structure as $x_j\leftrightharpoons Z\gets e_i\to e'_i$, and we also have $I(x_j\ ;e'_i\mid e_i)=0$ and $I(x_j\ ;e_i\mid e'_i)\geq 0$. Therefore we have $I(x_j\ ;e_i)= I(x_j\ ;e'_i)$ in a similar way.

\end{proof}

\begin{proof}{\textbf{of Theorem \ref{exo parent}}}

Firstly, we state that $e'_i$ has no exogenous parent and endogenous parent except for $x_i$. If $e'_i$ has a parent, its parent can not belong to exogenous variables because $\mathbf e'$ are independent with each other and do not have a connection with each other. Let us assume $x_j$ is the parent of $e'_i$. The trail $e'_i\gets x_j\leftrightharpoons e'_j$ is always active no matter the direction of $\leftrightharpoons$, and $e'_i$ and $e'_j$ are not independent given the empty set which is contradictory to the independence assumption. Therefore, all the exogenous variables we generate do not have a parent as $e_j\in\mathbf Y$ or $x_j\in\mathcal X$ $j\neq i$.

Now we need to prove $x_i$ is not the parent of $e'_i$. We discuss the relation between $e'_i$ and $x_i$ in two conditions.

\textbf{condition 1}

Let us assume $x_i$ has at least one endogenous parent $x_j\in Pa_i$. If $e'_i$ is not a parent of $x_i$, then $e'_i\gets x_i$. The trail $e'_j\leftrightharpoons x_j\to x_j\to e'_i$ is active given the empty set regardless of the relation between $x_j$ and $e'_j$. We have $I(e'_j\ ;e'_i)>0$ which is contradictory to the independence assumption.

\textbf{condition 2}

If $x_i$ has no endogenous parent and we assume $x_i$ is a parent of $e'_i$ $e'_i\gets x_i$. Since $e'_i$ does not have other parents, $x_i$ is the only parent of $e'_i$, and $e'_i$ is determined by $x_i$. By \textbf{Algorithm 3}, we have
\begin{align*}
\mathbf e'=\mathbf P\mathbf W^\top\mathbf x,
\end{align*}
where $\mathbf W$ is the matrix trained by FastICA algorithm and $\mathbf P$ is the permutation matrix that rearranges $\mathbf e'$ according to the \textbf{Theorem \ref{matching}}. If $x_i$ is the only parent of $e'_i$, there exits a constant $c$ such that $e'_i=cx_i$. Therefore $x_i=\frac1ce'_i$, and $e'_i$ is also a parent of $x_i$, i.e. $x_i\to e'_i$ and $x_i\gets e'_i$ are equivalent. Without loss of generality, we treat $e'_i$ as a parent of $x_i$ in this special condition.

Combing the analyses at the beginning, we can conclude that $\mathbf e'=(e'_1, ..., e'_n)$ have no parent, and $e'_i$ is a parent of $x_i$.
\end{proof}

\begin{proof}{\textbf{of Theorem \ref{exo MB}}}

Let us assume $x_j$ is a node in exogenous Markov blanket of $e_i$, $x_j\in MB^e_i$ where $i\neq j$. By the definition of Markov blanket, $x_j$ must be a parent of $x_i$ because $e_i$ only has one child and has no parent. But $x_j$ can be a child of $x_i$ in other I-equivalence DAG. 

Let us assume $x_j$ is a parent of $x_i$ in some I-equivalence DAG but does not satisfy \textbf{Definition \ref{SCM-parent}}. Since $x_i$ and $x_j$ must be connected no matter what conditions, then the determined function $f_j$ takes $x_i$ as an input, we can write the determined function as:
\begin{align*}
x_j=f_j(Pa_j, e_j),
\end{align*}
where $x_i\in Pa_j$. For any spouses of $e_i$, $e_i$ is not conditionally independent with spouses given $Z$ as long as $x_i\in Z$ according to \textbf{Theorem \ref{d-sep}}. Let $Z=Pa_j$ which satisfies $x_i\in Z$. We want to prove $e_i\perp\!\!\!\perp e_j\mid Pa_j$, we state it in three conditions:

\textbf{condition 1}

If there is no trail from $x_j$ to $x_k$ where $x_k\in Pa_i$. Then all the trails from $e_i$ to $x_j$ have same structure
\begin{align}\label{3}
e_i\to x_i\to x_{i_1}\leftrightharpoons ...\leftrightharpoons x_{i_l}\leftrightharpoons x_j.
\end{align}
$x_{i_1}$ can not be a parent of $x_i$ by the assumption of \textbf{condition 1}. Therefore $e_i\to x_i\to x_{i_1}$ forms a chain structure, and $e_i\perp\!\!\!\perp e_j\mid x_i$ which indicates $e_i\perp\!\!\!\perp e_j\mid Z$.

\textbf{condition 2}

If there are some trails that connect $x_j$ and any parent of $x_i$ $x_k$ through the child of $x_j$, and they can be all represented as:
\begin{align}\label{4}
e_i\to x_i\gets x_k \leftrightharpoons x_{i_1}... \leftrightharpoons x_{i_l}\gets x_j.
\end{align}
If this trail $x_k$ and $x_j$ is a path, $x_k\gets x_{i_1}\gets ... \gets x_{i_l}\gets x_j$, but $x_i\in Pa_j$. Then $x_i\to x_j$ and $x_i\gets x_k\gets x_{i_1}\gets ... \gets x_{i_l}\gets x_j$ form a cycle, which is contradictory to the property of undirected edge $x_i-x_j$ in CPDAG. Thus $x_k \leftrightharpoons x_{i_1}... \leftrightharpoons x_{i_l}\gets x_j$ can not be a path.

Then $x_k \leftrightharpoons x_{i_1}... \leftrightharpoons x_{i_l}\gets x_j$ must contain at least one V-structure and at least one collider is not belong to $Pa_j$. Otherwise, if all the colliders $x_k \leftrightharpoons x_{i_1}... \leftrightharpoons x_{i_l}\gets x_j$ are parents of $x_j$, let assume the first collider is $x_{i_a}$ where $1\leq a\leq l$, and $x_{i_a}\in Pa_j$. Then $x_{i_a}\gets ... x_{i_l}\gets x_j$ is a path. Since $x_{i_a}\to x_j$, these two paths form a cycle.


Therefore the trail (\ref{4}) either is not active given $Pa_j$ or contains a collider not belongs to $Pa_j$ and is not active given the empty set. 

\textbf{condition 3}

If there are some trails that connect $x_j$ and any parent of $x_i$ $x_k$ through the parent of $x_j$, we have the representation of these trails as
\begin{align}\label{5}
e_i\to x_i\gets x_k \leftrightharpoons x_{i_1}... \leftrightharpoons x_{i_l}\to x_j.
\end{align}
Then we notice $x_{i_l}\in Pa_j$ and trails like (\ref{5}) are not active given $x_{i_l}$ or given $Pa_j$.

Combing these three conditions, we have $e_i\perp\!\!\!\perp e_j\mid Pa_j$. Since $x_j$ is only determined by $e_j$ given $Pa_j=\mathbf c$, and $x_j=f_j(\mathbf c, e_j)$, we obtain $e_i\perp\!\!\!\perp x_j\mid Pa_j$ which is contradictory to the character of spouses of $x_i$. Therefore $x_j$ is not in the exogenous Markov blanket of $e_i$. And if $x_j\in MB^e_i$, $x_j$ must be a parent of $x_i$ in \textbf{Definition \ref{SCM-parent}}.

On the other hand, let $x_j$ is a parent of $x_i$ in \textbf{Definition \ref{SCM-parent}}, i.e.
\begin{align*}
x_i=f_i(Pa_i,e_i),
\end{align*}
where $x_j\in Pa_i$. It is easy to notice the fact that structure $x-y-z$ is a V-structure if and only if for any subset of nodes $y\in Z\subset\mathcal X$, $x$ are $z$ are not conditionally independent given $Z$, $x\not\!\perp\!\!\!\perp y\mid Z$. Notice that trail $e_i\to x_i\gets x_j\gets e_j$ is active if given $x_i$, then for any $Z\in\mathcal X$, as long as $x_i\in Z$, $e_i$ and $x_j$ are not conditionally independent given $Z$ because $e_i\not\!\perp\!\!\!\perp e_j\mid x_i\longrightarrow e_i\not\!\perp\!\!\!\perp e_j\mid Z$ and the determined function $f_j$ takes $e_j$ as input. Therefore $x_j$ is a spouse of $x_i$ and $x_j\in MB^e_i$.

\end{proof}

\section{AUPR results}

\begin{table*}[htpb]
\centering
\fontsize{10}{20}\selectfont
\caption{AUPR on discrete datasets}\label{discrete AUPR}
\setlength{\tabcolsep}{2mm}{
\begin{tabular}{ccccccc}
\bottomrule[1.5pt]
\hline
\multirow{2}{*}{Algorithms} & \multirow{2}{*}{ALARM} & \multirow{2}{*}{BARLEY} & \multirow{2}{*}{CHILD} & \multirow{2}{*}{INSURANCE} & \multirow{2}{*}{MILDEW} & \multirow{2}{*}{HailFinder} \cr
 & \cr
\bottomrule
PC         & 0.602(0.01) & 0.191(0.02) & 0.539(0.04) & 0.350(0.02) & 0.516(0.03) & 0.356(0.01) \cr
FCI        & 0.551(0.02) & 0.366(0.01) & 0.482(0.02) & 0.401(0.03) & 0.327(0.03) & 0.362(0.01) \cr
GIES       & 0.592(0.05) & 0.298(0.03) & 0.509(0.01) & 0.381(0.02) & 0.456(0.01) & 0.386(0.01) \cr
MMHC       & 0.568(0.02) & 0.367(0.02) & 0.449(0.01) & 0.387(0.03) & 0.538(0.03) & 0.357(0.02) \cr
GS         & 0.570(0.01) & 0.332(0.01) & 0.499(0.04) & 0.373(0.03) & \textbf{0.594(0.01)} & 0.311(0.01) \cr
GRaSP      & 0.567(0.01) & 0.281(0.07) & 0.509(0.01) & 0.412(0.07) & 0.417(0.09) & 0.357(0.04) \cr
CDNOD      & 0.551(0.03) & 0.233(0.02) & 0.457(0.02) & 0.336(0.02) & 0.493(0.01) & 0.206(0.02) \cr
EEMBI      & 0.571(0.02) & 0.263(0.02) & 0.573(0.01) & 0.433(0.03) & 0.408(0.06) & 0.334(0.01) \cr
EEMBI-PC   & \textbf{0.620(0.01)} & \textbf{0.456(0.04)} & \textbf{0.610(0.01)} & \textbf{0.547(0.03)} & 0.570(0.04) & \textbf{0.397(0.01)} \cr
 \bottomrule[1.5pt]
\end{tabular}}
\end{table*}

\begin{table*}[htpb]
\centering
\fontsize{10}{18}\selectfont
\caption{AUPR on SACHS and Education datasets}\label{Education AUPR}
\setlength{\tabcolsep}{2mm}{
\begin{tabular}{ccccccc}
\bottomrule[1.5pt]
\hline
\multirow{2}{*}{Algorithms} & \multirow{2}{*}{SACHS} & \multirow{2}{*}{Education 1} & \multirow{2}{*}{Education 2} & \multirow{2}{*}{Education 3} & \multirow{2}{*}{Education 4} & \multirow{2}{*}{Education 5} \cr
 & \cr
\bottomrule
PC           & 0.399 & 0.366(0.01) & 0.325(0.03) & 0.298(0.04) & 0.336(0.01) & 0.390(0.03) \cr
FCI          & 0.407 & 0.367(0.02) & 0.299(0.05) & 0.314(0.01) & 0.300(0.01) & 0.295(0.02) \cr
GIES         & 0.417 & 0.324(0.05) & 0.325(0.05) & 0.310(0.04) & 0.307(0.02) & 0.317(0.06) \cr
MMHC         & 0.319 & 0.286(0.02) & 0.238(0.02) & 0.270(0.02) & 0.255(0.01) & 0.272(0.02) \cr
GS           & 0.407 & 0.346(0.02) & 0.330(0.01) & 0.377(0.01) & 0.321(0.04) & 0.367(0.02) \cr
GRaSP        & 0.405 & 0.313(0.01) & 0.322(0.01) & 0.320(0.03) & 0.337(0.01) & 0.297(0.04) \cr
CDNOD        & 0.371 & 0.354(0.02) & 0.350(0.01) & 0.369(0.01) & 0.377(0.01) & 0.335(0.05) \cr
DirectLiNGAM & 0.349 & 0.403(0.01) & 0.440(0.01) & 0.254(0.13) & 0.253(0.14) & 0.267(0.13) \cr
CAM          & 0.371 & 0.382(0.01) & 0.368(0.01) & 0.363(0.01) & 0.333(0.02) & 0.369(0.05) \cr
NOTEARS      & 0.333 & 0.369(0.02) & 0.374(0.02) & 0.407(0.02) & 0.357(0.01) & 0.390(0.01) \cr
EEMBI        & 0.366 & \textbf{0.418(0.02)} & \textbf{0.443(0.01)} & \textbf{0.448(0.02)} & \textbf{0.464(0.01)} & \textbf{0.462(0.02)} \cr
EEMBI-PC     & \textbf{0.422} & 0.325(0.03) & 0.332(0.02) & 0.363(0.01) & 0.336(0.02) & 0.383(0.02) \cr
 \bottomrule[1.5pt]
\end{tabular}}
\end{table*}

\begin{table*}[htpb]
\centering
\fontsize{10}{18}\selectfont
\caption{AUPR on Dream3 datasets}\label{Dream3 AUPR}
\setlength{\tabcolsep}{2mm}{
\begin{tabular}{cccccc}
\bottomrule[1.5pt]
\hline
\multirow{2}{*}{Algorithms}  & \multirow{2}{*}{Ecoli 1} & \multirow{2}{*}{Ecoli 2} & \multirow{2}{*}{Yeast 1} & \multirow{2}{*}{Yeast 2} & \multirow{2}{*}{Yeast 3} \cr
 & \cr
\bottomrule
PC             & 0.036 & 0.113 & 0.120 & 0.116 & 0.119 \cr
FCI            & 0.057 & 0.134 & 0.155 & 0.134 & 0.151 \cr
GIES           & 0.049 & 0.078 & 0.081 & 0.090 & 0.099 \cr
MMHC           & 0.055 & 0.104 & 0.134 & 0.112 & 0.111 \cr
GS             & 0.040 & 0.082 & 0.123 & 0.138 & 0.117 \cr
GRaSP          & 0.048 & 0.102 & 0.083 & 0.078 & 0.089 \cr
CDNOD          & 0.044 & 0.068 & 0.121 & 0.092 & 0.126 \cr
DirectLiNGAM   & 0.017 & 0.076 & 0.141 & 0.137 & 0.146 \cr
CAM            & 0.049 & 0.123 & 0.112 & 0.149 & \textbf{0.175} \cr
NOTEARS        & 0.029 & 0.086 & 0.052 & \textbf{0.155} & 0.096 \cr
EEMBI          & 0.051 & 0.075 & \textbf{0.163} & 0.085 & 0.140 \cr
EEMBI-PC       & \textbf{0.059} & \textbf{0.134} & 0.076 & 0.092 & 0.093 \cr
 \bottomrule[1.5pt]
\end{tabular}}
\end{table*}

\begin{table*}[htpb]
\centering
\fontsize{10}{22}\selectfont
\caption{AUPR on Dream3 datasets}\label{ablation AUPR}
\setlength{\tabcolsep}{2mm}{
\begin{tabular}{cccccc}
\bottomrule[1.5pt]
\hline
\multirow{2}{*}{Algorithms} & \multirow{2}{*}{Education 1} & \multirow{2}{*}{Education 2} & \multirow{2}{*}{Education 3} & \multirow{2}{*}{Education 4} & \multirow{2}{*}{Education 5}\cr
 & \cr
\bottomrule

Improved IAMB & 0.317(0.02) & 0.324(0.01) & 0.348(0.01) & 0.325(0.01) & 0.334(0.01) \cr\hline
EEMBI (WM)    & 0.338(0.02) & 0.356(0.01) & 0.409(0.01) & 0.372(0.04) & 0.377(0.02) \cr
EEMBI-PC (WM) & 0.252(0.05) & 0.309(0.01) & 0.327(0.01) & 0.300(0.02) & 0.337(0.02) \cr\hline
EEMBI         & 0.418(0.02) & 0.443(0.01) & 0.448(0.02) & 0.464(0.01) & 0.462(0.02) \cr
EEMBI-PC      & 0.325(0.03) & 0.332(0.02) & 0.363(0.01) & 0.336(0.02) & 0.383(0.02) \cr
 \bottomrule[1.5pt]
\end{tabular}}
\end{table*}

We show the AUPR of proposed methods and baselines on all the datasets in Table \ref{discrete AUPR}, \ref{Education AUPR}, and \ref{Dream3 AUPR}, including the mean and standard deviation of every algorithm. The mean values of these algorithms are the same as the values in bar graphs. Similar to the SHD in \textbf{Section 5}, we thicken the highest AUPR of causal discovery algorithms on every dataset.

Table shows the AUPR results on the ablation study corresponding to the SHD results in Table \ref{ablation AUPR}. The outcomes are very similar to the outcomes in Table. The improved IAMB has the smallest AUPR on every Education dataset. The AUPR of EEMBI or EEMBI-PC is always higher than EEMBI (WM) or EEMBI-PC (WM).


\newpage
\vskip 0.2in
\bibliography{references}

\end{document}